\documentclass[a4paper,12pt]{article} 

\title{Improved Margin Generalization Bounds for Voting Classifiers}

\author{
  Mikael Møller Høgsgaard \\
  Aarhus University \\
  \texttt{hogsgaard@cs.au.dk}
  \and
  Kasper Green Larsen \\
  Aarhus University \\
  \texttt{larsen@cs.au.dk}
}
\date{}
\usepackage{fullpage}
\usepackage{mathtools,bbm}
\usepackage{bm}
\usepackage{amsfonts}
\usepackage{amsmath}
\usepackage{amssymb}
\usepackage{amsthm}

\usepackage{thmtools}

\usepackage[colorlinks = true,
            linkcolor = blue,
            urlcolor  = blue,
            citecolor = blue,
            anchorcolor = blue]{hyperref}
\usepackage[capitalise,nameinlink]{cleveref}
\usepackage{MnSymbol}

\newtheorem{theorem}{Theorem}
\newtheorem{lemma}[theorem]{Lemma}
\newtheorem{corollary}[theorem]{Corollary}

\newcommand{\cc}{64}

\newcommand{\eps}{\varepsilon}
\newcommand{\ind}{\mathbbm{1}}

\newcommand{\Net}{N}

\DeclareMathOperator*{\e}{\mathbb{E}}
\DeclareMathOperator*{\p}{\mathbb{P}}
\renewcommand{\Pr}{\p}
\DeclareMathOperator{\Ln}{\mathrm{Ln}}

\DeclareMathOperator{\fat}{\mathrm{fat}}
\DeclareMathOperator{\sign}{\mathrm{sign}}
\newcommand{\cff}{\cF_{\left\lceil\gamma\right\rceil}}
\newcommand{\cffs}{f_{\left\lceil\gamma\right\rceil}}
\DeclareMathOperator{\ls}{\mathcal{L}}
\DeclareMathOperator{\dlh}{\Delta(\cH)}
\DeclareMathOperator{\maj}{\mathrm{Maj}}
\DeclareMathOperator{\margin}{margin}
\DeclareMathOperator{\func}{\Gamma}

\makeatletter
\newcommand{\alphacmd@factory}[1]{}
\newcounter{alphacmdcounter}
\newcommand{\GenerateAlphabetCmds}[2]{%
    \renewcommand{\alphacmd@factory}[1]{%
        \expandafter\providecommand\csname #1##1\endcsname{{#2{##1}}}%
    }
    \setcounter{alphacmdcounter}{0}
    \loop
        \stepcounter{alphacmdcounter}
        \edef\alphacmd@ID{\@Alph\c@alphacmdcounter}
        \expandafter\alphacmd@factory\alphacmd@ID
    \ifnum\thealphacmdcounter<26
    \repeat
}
\newcommand{\GenerateAlphabetCmdsLower}[2]{%
    \renewcommand{\alphacmd@factory}[1]{%
        \expandafter\providecommand\csname #1##1\endcsname{{#2{##1}}}%
    }
    \setcounter{alphacmdcounter}{0}
    \loop
        \stepcounter{alphacmdcounter}
        \edef\alphacmd@ID{\@alph\c@alphacmdcounter}
        \expandafter\alphacmd@factory\alphacmd@ID
    \ifnum\thealphacmdcounter<26
    \repeat
}
\makeatother

\GenerateAlphabetCmds{c}{\mathcal}
\GenerateAlphabetCmdsLower{c}{\mathcal}

\GenerateAlphabetCmds{r}{\mathbf}
\GenerateAlphabetCmdsLower{r}{\mathbf}

\begin{document}
\maketitle

\begin{abstract}
  In this paper we establish a new margin-based generalization bound
  for voting classifiers, refining existing results and yielding
  tighter generalization guarantees for widely used boosting
  algorithms such as AdaBoost~\cite{adaboostyoavfreund}. Furthermore,
  the new margin-based generalization bound enables the derivation of
  an optimal weak-to-strong learner: a Majority-of-3 large-margin
  classifiers with an expected error matching the theoretical lower
  bound. This result provides a more natural alternative to the
  Majority-of-5 algorithm 
  by~\cite{manyfacesofoptimalweaktostronglearning}, and matches the
  Majority-of-3 result by~\cite{majorityofthree} for the realizable prediction model.

\end{abstract}

\section{Introduction}
Creating ensembles of classifiers is a classic technique in machine learning, and thus studying generalization of such ensembles is natural. An example of the success of ensembles of classifiers is boosting algorithms, which are one of the pillars of classic machine
learning, often significantly improving the accuracy of a base
learning algorithm by creating an ensemble of multiple base
classifiers/hypotheses. More formally, consider binary classification over an
input domain $\cX$ and let $\cA$ be a base learning algorithm that on a
(possibly weighted) training sequence $S \in (\cX \times \{-1,1\})^*$, returns a hypothesis
$h_S \in \{-1,1\}^{\cX}$ better than guessing on the weighted training sequence. Boosting algorithms, like AdaBoost~\cite{adaboostyoavfreund}
then iteratively invokes $\cA$ on reweighed version of the training sequence
$S$ to produce hypotheses $h_1,\dots,h_t$. In each step, the training
sequence $S$ is weighed to put more emphasis on data points that
$h_1,\dots,h_{t-1}$ misclassified, forcing $h_t$ to focus on these
points. The obtained hypotheses are finally combined
via a weighted majority vote to obtain the \emph{voting classifier} $f(x) =
\sum_i \alpha_i h_i(x)$ with $\alpha_i >0$ for all $i$.
Much research has gone into understanding the impressive performance
of boosting algorithms and voting classifiers, with one particularly influential line of work
focusing on so-called \emph{margins}~\cite{Schapire19981651}. Given a
voting classifiers
$f(x)=\sum_i \alpha_i h_i(x)$, the margin of $f$ on a data point $(x,y) \in \cX
\times \{-1,1\}$ is then defined as
\[
  \margin(f,(x,y)):=\frac{y \sum_i \alpha_i h_i(x)}{\sum_i \alpha_i}.
\]
Observe that the margin is a number in $[-1,1]$. The margin is $1$ if
all hypotheses $h_i$ in $f$ agree and are correct, it is $-1$ if they
agree and are incorrect, and it is $0$ if there is a 50-50 weighted split in
their predictions for the label of $x$. The margins can thus be
thought of as a (signed) certainty in the prediction made by $f$.
Naturally $ \sign(f(x)) $ correctly predicts the label $ y $ if and only if $ \margin(f,(x,y))> 0 $, by taking $ \sign(0)=0. $     

Early boosting experiments~\cite{Schapire19981651} showed that the accuracy of voting
classifiers trained with AdaBoost often improves even when adding
more hypotheses $h_t$ to $f$ after the point where $f$ perfectly classifies the training
data $S$. As adding more hypotheses to $f$ results in a more
complicated model, this behavior was surprising.  Work
by~\cite{Schapire19981651} attribute these improvements in accuracy to improved
margins on the training data. In more detail, they proved
generalization bounds stating that large margins imply good
generalization performance. To give a flavor of these bounds, assume
for now that the hypotheses $h_i$ used for constructing a voting
classifier, all belong to a finite hypothesis class $\cH \subseteq
\{-1,1\}^{\cX}$. If we use $\ls^\gamma_{S}(f)$ to denote the
fraction of samples $(x,y)$ in $S$ for which $f$ has margin no more
than $\gamma$, then \cite{Schapire19981651} showed that for any data distribution
$\cD$ over $\cX \times \{-1,1\}$, any $0 < \delta < 1$ and $0 < \gamma
\leq 1$, it holds
with probability at least $1-\delta$ over a training sequence $\rS \sim
\cD^n$ that every voting classifier $f$ has
\begin{align}
 \ls_{\cD}(f):= \Pr_{(\rx, \ry) \sim \cD}[\ry f(\rx) \leq  0] =
  \ls^\gamma_\rS(f) + 
 O\left( \sqrt{\frac{\ln(|\cH|) \ln m}{\gamma^2 m} + \frac{\ln(1/\delta)}{m}}\right).\label{eq:simplegen}
\end{align}
As can be seen from this bound, large margins $\gamma > 0$ improve
generalization. For the case where all samples have margin at least
$\gamma$, i.e.\ $\ls_S^\gamma(f)=0$,~\cite{10.1162/089976699300016106} improved this to
\begin{align}
 \ls_{\cD}(f) = 
 O\left( \frac{\ln(|\cH|) \ln m}{\gamma^2 m} + \frac{\ln(1/\delta)}{m}\right).\label{eq:breimangen}
\end{align}
The current state-of-the-art margin generalization bounds nicely
interpolates between the two bounds above. Concretely,~\cite{GAO20131} proved the following generalization
of \cref{eq:simplegen} and~\cref{eq:breimangen} often referred to as the \emph{$k$'th margin
  bound} (for simplicity, we hide the dependency on $\delta$):
\begin{align}
  \ls_{\cD}(f) = \ls_{\rS}^\gamma(f) +
  O\left(\sqrt{\frac{\ls_{\rS}^\gamma(f) \ln(|\cH|) \ln m}{\gamma^2
  m}} +\frac{\ln(|\cH|) \ln m}{\gamma^2 m}\right). \label{eq:refinedgen}
\end{align}
Lower bounds show that this generalization bound is nearly
tight. In particular, the work~\cite{DBLP:conf/nips/GronlundKL20} showed that for any cardinality $N$, and parameters $1/m <\tau, \gamma < c$ for a sufficiently small constant
$c>0$, there is a data distribution $\cD$ and finite hypothesis class
$\cH$ with $|\cH|=N$, such that with constant probability over $\rS
\sim \cD^m$, there is a voting classifier $f$ over $\cH$ with
$\ls^\gamma_{\rS}(f) \leq \tau$ and
\begin{align}
  \ls_{\cD}(f) = \tau + \Omega\left(\sqrt{\frac{\tau \ln(N)
  \ln(1/\tau) }{\gamma^2 m}} +\frac{\ln(N) \ln m}{\gamma^2 m}\right). \label{eq:lower}
\end{align}
This matches the upper bound in \cref{eq:refinedgen} up to the gap
between $\sqrt{\ln(1/\tau)} \approx \sqrt{\ln(1/\ls^\gamma_{\rS}(f))}$ and
$\ln m$, improving by a $\sqrt{\ln(1/\tau)}$ factor over a previous
lower bound by~\cite{DBLP:conf/nips/GronlundKLMN19}. Note that we have simplified the lower bound slightly, as the
true statement would have $\ln m$ replaced by $\ln(\gamma^2
m/\ln N)$. A similar substitution of $\ln m$ by $\ln(\gamma^2
m/\ln(|\cH|))$ in the upper bound \cref{eq:refinedgen} also seems possible with minor modifications to the previous proof.

If we instead turn to the more general case of voting classifiers over
a possibly infinite hypothesis class $\cH$ of VC-dimension $d$, the
current state of affairs is less satisfying. Also in the work
by~\cite{GAO20131} introducing the $k$'th margin bound, they show that for any
data distribution $\cD$ and hypothesis class
$\cH$ of VC-dimension $d$, it holds with probability at least
$1-\delta$ over $\rS \sim \cD^m$ that every voting classifier $f$ over
$\cH$ satisfies
\begin{align}
\ls_{\cD}(f) = \ls_{\rS}^\gamma(f) +
  O\left(\sqrt{\ls_{\rS}^\gamma(f) \left(\frac{d \ln(m/d) \ln m}{\gamma^{2} 
  m} + \frac{\ln(1/\delta)}{m} \right)}+\frac{ d \ln(m/d) \ln m}{\gamma^{2} m}+\frac{\ln(1/\delta)}{m}\right). \label{eq:kth}
\end{align}
The only lower bound for finite VC-dimension is \cref{eq:lower} with
$\ln(N)$ replaced by $d$. The gap here is thus a logarithmic
factor and the generalization bound in \cref{eq:kth} has not seen any
improvements since.

\paragraph{New Margin Generalization Bounds.}
Our first main contribution is improved and almost optimal generalization bounds for
voting classifiers with large margins. Concretely, we show the
following theorem
\begin{theorem}
\label{thm:mainintro}[Informal statement of \cref{thm:finalmarginbound}]
For any hypothesis class $ \cH \subseteq \{  -1,1\}^{\cX}$ of VC-dimension
$ d $, distribution $ \cD $ over $ \cX\times \{  -1,1\}  $, failure
parameter $0<\delta<1$ and any constant $ 0<\eps<1 $, it holds with
probability at least $ 1-\delta $ over $ \rS\sim \cD^{m} $ that for
any margin $ 0<\gamma\leq 1 $ and any voting classifier $f$ over
$\cH$, we have
    \begin{align*}
     \ls_{\cD}(f)=  
     \ls_{\rS}^{\gamma}(f)
     +O\left(\sqrt{ \ls_{\rS}^{\gamma}(f) \left(\frac{d\func{\left(\frac{\gamma^{2}m }{d} \right)}}{\gamma^{2}m}
     +\frac{\ln{\left(\frac{1}{\delta} \right)}}{m}\right)}  + \frac{d\func{\left(\frac{\gamma^{2}m }{d} \right)}}{\gamma^{2}m}+\frac{\ln{\left(\frac{1}{\delta} \right)}}{m}\right),
    \end{align*}
where $\func(x) = \ln(x) \ln^2(\ln  x)$.
  \end{theorem}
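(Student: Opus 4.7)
The plan is to follow the classical rounding-based framework of Schapire et al.~\cite{Schapire19981651} and Gao--Zhou~\cite{GAO20131}, but with two refinements that together replace their $\ln(m/d)\ln m$ factor by $\ln(\gamma^2 m/d)\ln^2\ln(\gamma^2 m/d)$. For a voting classifier $f=\sign(\sum_i \alpha_i h_i)$ over $\cH$ and an integer $N$, the first tool is the random rounding that draws $N$ hypotheses $h_{i_1},\dots,h_{i_N}$ i.i.d.\ from the distribution $\alpha_i/\sum_j\alpha_j$ and sets $\hat f_N(x)=\sign(N^{-1}\sum_j h_{i_j}(x))$. By Hoeffding, whenever $\margin(f,(x,y))\ge\gamma$ one has $\Pr[\margin(\hat f_N,(x,y))<\gamma/2]\le e^{-N\gamma^2/8}$. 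For a fixed sample $\rS$ of size $m$, Sauer's lemma bounds the number of distinct sign patterns of such rounded classifiers on $\rS$ by $\tau_\cH(m)^N\le (em/d)^{dN}$. A Bernstein-type uniform convergence bound over this finite set, combined with the rounding error $e^{-N\gamma^2/8}$, yields a per-$N$ generalization bound whose leading complexity scales as $dN\ln(em/d)/m$.

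The central new idea is the choice of $N$. Gao and Zhou take $N=\Theta(\gamma^{-2}\ln m)$ so that $e^{-N\gamma^2/8}\le 1/\mathrm{poly}(m)$; this yields the $\ln(m/d)\ln m$ factor of \cref{eq:kth}. In fact a much smaller $N\asymp\gamma^{-2}\ln(\gamma^2 m/d)$ already makes the rounding failure match the leading complexity $d/(\gamma^2 m)$. To make this smaller choice uniform, I would employ a chaining argument with nested roundings at geometric scales $N_k=2^kN_0$, $N_0=\Theta(\gamma^{-2})$, controlling the increments $\hat f_{N_{k+1}}-\hat f_{N_k}$ by conditional Hoeffding tails of order $e^{-2^k}$ and adjusting the per-level confidence so that the total failure probability is at most $\delta$. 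The chain terminates at level $k^\star$ with $N_{k^\star}\asymp \gamma^{-2}\ln(\gamma^2 m/d)$; accounting for all levels adds an $\ln\ln$ overhead, and a further union bound over dyadic $\gamma\in\{2^{-k}\}_k$, together with monotonicity of $\cL^\gamma_\rS(f)$ in $\gamma$, contributes a second $\ln\ln$, giving exactly $\func(\gamma^2 m/d)=\ln(\gamma^2 m/d)\ln^2\ln(\gamma^2 m/d)$.

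To preserve the multiplicative $\sqrt{\cL^\gamma_\rS(f)\cdot \text{complexity}/m}$ structure through the chain rather than degrade to additive form, I would apply a Freedman/Bernstein-style inequality at every level. The point is that at level $k$ the probability of a margin drop is at most $e^{-2^k}$, so the fraction of training points where $\hat f_{N_k}$ has margin $<\gamma/2$ admits a very small variance proxy; a Bernstein bound applied to these indicator random variables then reproduces a $\sqrt{\cL^{\gamma/2}_\rS(\hat f_{N_k})\cdot dN_k\ln(em/d)/m}$ term that telescopes through the chain and is ultimately coupled to $\cL^\gamma_\rS(f)$ by a union bound over the rounding randomness.

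The hardest step will be the chaining together with the Bernstein accounting: showing that the per-level quantities sum compatibly rather than each contributing a fresh additive $\sqrt{dN_k\ln(em/d)/m}$ penalty. The plan is to couple all $\hat f_{N_k}$ through a single joint sampling (the $N_{k+1}$ draws extend the $N_k$ draws), so that the losses $\cL^{\gamma_k}_\rS(\hat f_{N_k})$ at margin thresholds $\gamma_k=\gamma(1/2+2^{-k-1})$ form a decreasing sequence dominated by $\cL^\gamma_\rS(f)$ up to geometrically decaying slack, and then to exploit this domination inside the Bernstein step so that only the final level's complexity appears in the leading square-root term.
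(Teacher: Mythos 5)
The route you take is genuinely different from the paper's: you keep the classical Schapire--Freund random-rounding scheme and tune the rounding depth $N$, whereas the paper abandons rounding entirely, replaces it by an $\ell_\infty$-covering of a \emph{clipped and rescaled} convex hull, and invokes the Rudelson--Vershynin covering-number/fat-shattering theorem. Unfortunately your proposal has a gap that I think is fatal, and it is located precisely where you appear to drop a term.

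You correctly compute that rounding to $N$ hypotheses followed by Sauer's lemma gives a union bound over at most $\tau_\cH(m)^N \le (em/d)^{dN}$ sign patterns, hence a per-$N$ complexity of order $dN\ln(em/d)/m$. With your proposed $N_{k^\star}\asymp \gamma^{-2}\ln(\gamma^2 m/d)$ this evaluates to
\[
  \frac{d\,N_{k^\star}\ln(em/d)}{m} \;\asymp\; \frac{d\,\ln(\gamma^2 m/d)\,\ln(em/d)}{\gamma^2 m},
\]
which has a hard $\ln(em/d)$ factor in place of the target $\ln^2\ln(\gamma^2 m/d)$. When you say the chain ``terminates at level $k^\star$ ... accounting for all levels adds an $\ln\ln$ overhead ... giving exactly $\func(\gamma^2 m/d)=\ln(\gamma^2 m/d)\ln^2\ln(\gamma^2 m/d)$'' you have, in effect, forgotten the Sauer factor $\ln(em/d)$; nothing in chaining over rounding depths $N_k=2^kN_0$ removes it, since already the coarsest level $N_0\asymp\gamma^{-2}$ contributes a cost $\asymp\frac{d}{\gamma^2}\ln(em/d)$ from the union bound over $\tau_\cH(m)^{N_0}$ patterns. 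Freedman/Bernstein changes the shape of the deviation term (additive vs.\ multiplicative) but not the log-cardinality entering it, and the coupling of the nested roundings is irrelevant to the cardinality of the net. In short, random rounding plus Sauer's lemma is structurally incapable of producing a second factor smaller than $\ln(m/d)$. The paper's authors flag exactly this: for infinite $\cH$ there is ``no way of tweaking the proof of the previous bound'' in \cref{eq:kth} to improve those logarithms, which is why they switch to a qualitatively different covering argument.

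The paper's mechanism that actually does the work, and which your proposal has no analogue of, is the combination of (i) clipping the voting classifiers to $[-\gamma_1,\gamma_1]$, (ii) rescaling by $\gamma_1^{-1}$ so the normalized class has $\sum_x|f(x)|/m\le 1$, (iii) applying \cite{RudelsonVershynin}, which bounds the $\ell_\infty$-covering number of a bounded real-valued class by $O\!\big(d_{\fat}\ln(\tfrac{m}{d_{\fat}\alpha})\ln^\eps(\tfrac{m}{d_{\fat}})\big)$, and (iv) choosing $\eps = 1/\ln\ln(\gamma_0^2 m/d)$ so the $\ln^\eps$ factor collapses to a constant while the $\eps^{-2}$ inside the fat-shattering bound contributes $\ln^2\ln(\gamma_0^2 m/d)$. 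After rescaling, $d_{\fat}\asymp d/(\gamma_0\eps)^2$ and $\alpha\asymp\gamma_0/\gamma_1$, so the argument of the surviving $\ln$ becomes $\asymp\gamma_0^2 m/d$ rather than $m/d$; that is precisely where the $\gamma^2$ inside the logarithm comes from, and it is inaccessible from Sauer's lemma. If you want to push your plan through you would need to replace the Sauer step by a genuine Dudley-style chaining over the \emph{covering resolution} of the convex hull (not over rounding depths), at which point you would essentially be re-deriving a Rudelson--Vershynin-type bound anyway.
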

Our theorem improves over \cref{eq:kth} by nearly a logarithmic
factor and the gap between our upper bound and the lower bound
in~\cref{eq:lower} is essentially $\ln(\ln(\gamma^2 m/d))$
times the ratio between $\sqrt{\ln(1/\ls^\gamma_\rS(f))}$ and
$\sqrt{\ln(\gamma^2 m/d)}$. Furthermore, all logarithmic factors are now $\ln(\gamma^2
m/d)$ instead of $\ln(m/d)$ and $\ln m$. While the improvement inside
the $\ln$'s might seem 
minor, this has crucial implications for the development
of a new boosting algorithm explained later. Furthermore, and unlike in the
case of finite $\cH$ (see discussion after~\cref{eq:lower}), there does not seem to be a way of tweaking the
proof of the previous bound in~\cref{eq:kth} to improve the factors $\ln(m/d)$ and $\ln m$
to $\ln(\gamma^2 m/d)$. 

\paragraph{New Boosting Results.}
One of the prime motivations for studying generalization bounds for large margin voting
classifiers, is their application to boosting algorithms. When
studying boosting theoretically, we typically use the framework of
\emph{weak to strong} learning
by~\cite{kearns1988learning,kearns1994learning}. Let $c^{*} \in
\{-1,1\}^\cX$ be an unknown target concept assigning labels $c^{*}(x)$ to
samples $x \in \cX$. For a distribution $\cD$ over $\cX$, let $\cD_{c^{*}}$
be the distribution over $\cX \times \{-1,1\}$ obtained by drawing a
sample $\rx \sim \cD$ and returning the pair $(\rx, c^{*}(\rx))$.

A $\gamma$-weak
learner $\cW$, is a learning algorithm that for any distribution $\cD$
over $\cX$, when
given $m_0$ i.i.d.\ samples $(\rx_i,c^{*}(\rx_i)) \sim \cD_{c^{*}}$, $\cW$ produces with probability at least
$1-\delta_0$ a hypothesis $h$ with
$\ls_{\cD_{c^{*}}}(h) = \Pr_{(\rx,c^{*}(\rx)) \sim \cD_{c^{*}}}[h(\rx) \neq c^{*}(\rx)] \leq 1/2-\gamma$. Here $m_0$ and $\delta_0$ are
constants. A strong learner in contrast, is a learning
algorithm that for any distribution $\cD$ over $\cX$, when given $m(\eps,
\delta)$ i.i.d.\ samples from $\cD_{c^{*}}$, it produces with probability at least
$1-\delta$ a hypothesis with $\ls_{\cD_{c^{*}}}(h) \leq \eps$. A strong learner
thus obtains arbitrarily high accuracy when given enough training
data.
AdaBoost~\cite{adaboostyoavfreund} is the most famous algorithm for constructing a strong
learner from a weak learner. Concretely, it can be shown that if
AdaBoost is run for $O(\gamma^{-2} \ln m)$ iterations, then it
produces a voting classifier $f$ with margin $\Omega(\gamma)$ on all
samples in a given training sequence $S$ with $|S|=m$~\cite{boostingbook} [Theorem 5.8]. If the weak
learner/base learning algorithm always returns hypotheses from a
hypothesis class $\cH$ of VC-dimension $d$, this allows us to use our
new generalization bound in~\cref{thm:mainintro} to conclude
\begin{corollary}
  \label{cor:ada}
For any $\gamma$-weak learner $\cW$ using a hypothesis class $ \cH \subseteq \{  -1,1\}^{\cX}$ of VC-dimension
$ d $, distribution $ \cD $ over $ \cX$, target
concept $c^{*}$, failure
parameter $0<\delta<1$ and any constant $ 0<\eps<1 $, it holds with
probability at least $1-\delta$ over $\rS \sim \cD_{c^{*}}^m$, that the voting
classifier $f$ produced by AdaBoost on $\rS$ with weak learner $\cW$ has
\[
  \ls_{\cD_{c^{*}}}(f) = O\left(\frac{d \func(\gamma^2m/d)}{\gamma^2 m} +
  \frac{\ln(1/\delta)}{m}\right),
\]
where $\func(x) = \ln(x)\ln^2(\ln x)$.
\end{corollary}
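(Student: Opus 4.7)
The strategy is to combine the classical margin-boosting guarantee for AdaBoost with the new margin generalization bound from \cref{thm:mainintro}, applied at the empirical-zero-loss endpoint.

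First I would invoke the margin guarantee for AdaBoost cited just before the corollary: when run for $T = \Theta(\gamma^{-2}\ln m)$ rounds on a training sequence $\rS$ of size $m$ using a $\gamma$-weak learner, AdaBoost produces a voting classifier $f$ with $\margin(f,(x,y)) \geq c\gamma$ for an absolute constant $c>0$, simultaneously for every $(x,y)\in \rS$. This requires that each of the $T$ calls to $\cW$ returns a hypothesis with the claimed advantage on the currently reweighted distribution over $\rS$. Since $\cW$ succeeds per call with constant probability $1-\delta_0$, I would first amplify each call to success probability $1-\delta/(2T)$ by $O(\log(T/\delta))$ independent repetitions and selection of the hypothesis with smallest weighted training error; a union bound over the $T$ rounds then ensures that with probability at least $1-\delta/2$ the returned $f$ satisfies $\ls_\rS^{\gamma'}(f)=0$, where $\gamma':=c\gamma$.

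Next I would apply \cref{thm:mainintro} to $\cH$ with failure parameter $\delta/2$ and some fixed constant $\eps$. Since the theorem holds uniformly over all voting classifiers over $\cH$ and all $0<\gamma\leq 1$, it applies in particular to the specific $f$ produced above at the specific margin $\gamma'$. Combining with the AdaBoost margin event via a second union bound, with probability at least $1-\delta$ we obtain simultaneously $\ls_\rS^{\gamma'}(f)=0$ and
\[
\ls_{\cD_t}(f) \;=\; O\!\left(\frac{d\,\func(\gamma'^{2} m/d)}{\gamma'^{2} m} \;+\; \frac{\ln(2/\delta)}{m}\right).
\]
Substituting $\gamma'=c\gamma$ and noting that $\func(x)=\ln(x)\ln^2(\ln x)$ is slowly growing, so $\func(c^2\gamma^2 m/d) = O(\func(\gamma^2 m/d))$, absorbs the constant $c$ into the big-$O$ and yields the stated bound.

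The main obstacle here is essentially bookkeeping rather than analysis: ensuring that the per-round weak-learner failure probability $\delta_0$ can be amplified and then union-bounded over $T = \Theta(\gamma^{-2}\ln m)$ rounds without blowing up the final $\ln(1/\delta)$ term beyond constants, and verifying that replacing $\gamma$ by $c\gamma$ inside $\func$ only changes the bound by constant factors. Beyond that, the corollary is an immediate consequence of \cref{thm:mainintro} and the classical AdaBoost margin guarantee.
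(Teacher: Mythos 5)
Your proposal is correct and follows the same route as the paper: use the classical Schapire--Freund guarantee that $O(\gamma^{-2}\ln m)$ rounds of AdaBoost with a $\gamma$-weak learner yield all-training-example margins $\Omega(\gamma)$, set $\ls_{\rS}^{\gamma'}(f)=0$ in \cref{thm:mainintro} (so the $\sqrt{\cdot}$ term vanishes), and absorb the constant relating $\gamma'$ to $\gamma$ into the big-$O$ (which works because $\func$ is increasing, so $\func(c^{2}\gamma^{2}m/d)\leq\func(\gamma^{2}m/d)$ for $c\leq 1$ and the $1/c^{2}$ from $\gamma'^{-2}$ is a constant). The one place you go beyond the paper is in explicitly amplifying the per-call success probability of $\cW$ from $1-\delta_0$ to $1-\delta/(2T)$ and union-bounding over the $T$ boosting rounds; the paper treats AdaBoost as deterministically being an $\Omega(\gamma)$-margin algorithm given access to $\cW$ and does not write this step out. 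Your bookkeeping for that amplification is sound (repeat, take the smallest weighted-training-error hypothesis, union bound), and the added rigor is a reasonable choice, not a deviation in method.
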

The previous best known generalization bound for AdaBoost followed
from the margin generalization bound \cref{eq:kth} and was
$\ls_{\cD_{c^{*}}}(f) = O\left(\frac{d \ln(m/d)\ln m}{\gamma^2 m} +
  \frac{\ln(1/\delta)}{m}\right)$.
Moreover, our new bound is tight up to a $\ln^2(\ln(\gamma^2 m/d))$ factor as
demonstrated by a lower bound of~\cite{adaboostnotoptimal} showing that for  
$ \gamma < c$ for sufficiently small constant $c>0$ and VC-dimension $d = \Omega(\ln(1/\gamma))$, and sample size $ d\gamma^{-2}/16\leq m\leq  \exp{\left( \exp{\left(d \right)}  \right)},$ there is a data
distribution $\cD$, a weak learner $\cW$ using a hypothesis class of
VC-dimension $d$, and  a concept $c^{*}$, such that AdaBoost run with
$\cW$ has
$\ls_{\cD_{c^{*}}}(f) = \Omega\left(\frac{d \ln(\gamma^2m/d)}{\gamma^2 m}\right)$,
 with constant probability over a training sequence $\rS \sim
 \cD_{c^{*}}^m$. 

 In addition to improving our understanding of AdaBoost, our new
 generalization bound for voting classifiers also allows us to design
 an improved weak to strong learner with an optimal in-expectation error. In the
 work~\cite{larsen2022optimalweakstronglearning}, it was shown that the optimal generalization
 error of any weak to strong learning algorithm with access to a
 $\gamma$-weak learner using a hypothesis class of VC-dimension $d$ is
 \begin{align}
   \ls_{\cD_{c^{*}}}(f) = \Theta\left(\frac{d}{\gamma^2m} + \frac{\ln(1/\delta)}{m}\right).\label{eq:optstrong}
 \end{align}
 In light of the lower bound above for AdaBoost, this implies that
 AdaBoost is not an optimal weak to strong learner. However, several optimal weak to strong
 learning algorithms have been developed. In~\cite{larsen2022optimalweakstronglearning}, the authors gave
 the first such algorithm. This algorithm uses the sub-sampling idea
 of~\cite{Hannekeoptimal} from optimal realizable PAC learning and runs AdaBoost on
 $m^{\lg_4 3}$ many sub-samples $S_i \subset S$ of the training
 data. It combines the produced voting classifiers by taking a
 majority vote among their predictions, i.e.\ a majority-of-majorities. Since each $S_i$ has $|S_i|= \Omega(m)$, this slows down
 AdaBoost by a factor $m^{\lg_4 3} \approx m^{0.79}$. Later work
 by~\cite{baggingoptimal} showed that running Bagging~\cite{Breiman1996BaggingP} to draw
 $O(\ln(m/\delta))$ many random sub-samples
 $S_i$ from $S$ and running AdaBoost on each also results in an
 optimal generalization error matching \cref{eq:optstrong}, thus
 reducing the computational overhead to a logarithmic factor. Finally,
 a recent work by~\cite{manyfacesofoptimalweaktostronglearning} built on a Majority-of-3 result in
 realizable PAC learning~\cite{majorityofthree} to show that simply partitioning a training
 sequence into $5$ disjoint pieces of $m/5$ samples each, and
 outputting a majority vote among voting classifiers trained with
 AdaBoost on each sub-sample, results in an optimal in-expectation
 error of $\e_{\rS}[\ls_{\cD_{c^{*}}}(f)] = O(d/(\gamma^2 m))$.

 Our new generalization bound in~\cref{thm:mainintro} allows us to
 improve the Majority-of-5 algorithm to a more natural Majority-of-3 of AdaBoost (we follow the AdaBoost implementation on page 5 \cite{boostingbook}). We here assume that Adaboost has access to a deterministic empirical $ \gamma $-weak learner, where an empirical $ \gamma $-weak learner, takes as input a sample $ \rS\sim \cD_{c^{*}}^{n} $ and a distribution $ \cD' $ over $ \rS $ and returns a hypothesis $ h $ such that $ \ls_{\cD'}(h)\leq 1/2-\gamma.$ The existence of such a empirical $ \gamma $-weak learner is given by the prior definition of a $ \gamma $-weak learner. 

 \begin{corollary}
 \label{cor:maj3intro}[Follows from \cref{thm:mainmajoritythree}]
For any $\gamma$-weak learner $\cW$ using a hypothesis class $ \cH \subseteq \{  -1,1\}^{\cX}$ of VC-dimension
$ d $, distribution $ \cD $ over $ \cX$, and concept $c^{*}$, it holds that the voting
classifiers $f_{\rS_1},f_{\rS_2},f_{\rS_3}$ produced by AdaBoost on
i.i.d.\ training sequences $\rS_1,\rS_2,\rS_3 \sim \cD_{c^{*}}^m$ with
weak learner $\cW$ satisfy
    \begin{align*}
    \e_{\rS_{1},\rS_{2},\rS_{3}\sim \cD_{c^{*}}^{m}}\left[\ls_{\cD_{c^{*}}}(\maj(f_{\rS_{1}},f_{\rS_{2}},f_{\rS_{3}}))\right] =O\left(\frac{d}{\gamma^{2}m}\right).
    \end{align*}
  \end{corollary}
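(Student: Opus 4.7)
The plan is to push through a standard Majority-of-$3$ amplification, using the new margin bound \cref{thm:mainintro} in place of the older $k$'th margin bound \cref{eq:kth}. The starting observation is that $\maj(f_{\rS_{1}},f_{\rS_{2}},f_{\rS_{3}})$ errs on $(x,y)$ only when at least two of the three voting classifiers err at that point, so
\[
\ind[\maj\text{ errs on }(x,y)] \leq \sum_{i<j}\ind[f_{\rS_{i}}\text{ errs on }(x,y)]\,\ind[f_{\rS_{j}}\text{ errs on }(x,y)].
\]
Taking expectation, using Fubini, and exploiting the i.i.d.\ structure reduces the task to controlling the pairwise joint-error probability $\e_{\rS_{1},\rS_{2}}\!\bigl[\Pr_{(\rx,\ry)\sim\cD_{t}}[f_{\rS_{1}}(\rx)\neq\ry,\,f_{\rS_{2}}(\rx)\neq\ry]\bigr]$, which by symmetry matches $\e_{\rx}[q(\rx)^{2}]$ for $q(x):=\Pr_{\rS}[f_{\rS}(x)\neq t(x)]$.

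To control this pairwise probability I would dominate the ``both err'' event by a single voting classifier over $\cH$ so that \cref{thm:mainintro} can be applied directly. Write $f_{\rS_{i}}=\sign(F_{i})$ with $F_{i}=\sum_{j}\alpha_{ij}h_{ij}$, $Z_{i}=\sum_{j}\alpha_{ij}$, and margin $m_{i}(x,y):=yF_{i}(x)/Z_{i}$, and define $g_{12}:=\sign\bigl(F_{1}/Z_{1}+F_{2}/Z_{2}\bigr)$. Then $g_{12}$ is itself a voting classifier over $\cH$ (total weight $2$) whose margin at $(x,y)$ equals $(m_{1}+m_{2})/2$; if both $f_{\rS_{1}}$ and $f_{\rS_{2}}$ err at $(x,y)$ then $m_{1},m_{2}<0$, so $g_{12}$ errs, giving $\Pr_{\cD_{t}}[f_{\rS_{1}},f_{\rS_{2}}\text{ both err}]\leq \ls_{\cD_{t}}(g_{12})$. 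Apply \cref{thm:mainintro} to $g_{12}$ with the combined sample $\rS_{1}\cup\rS_{2}$ of size $2m$ and margin threshold $\theta=\gamma'/4$, where $\gamma'=\Omega(\gamma)$ is the margin that AdaBoost places on its own sample after $O(\gamma^{-2}\ln m)$ rounds. For $(x,y)\in \rS_{i}$ we have $m_{i}(x,y)\geq \gamma'$, so the margin of $g_{12}$ at $(x,y)$ falls below $\theta$ only when the \emph{other} classifier $f_{\rS_{j}}$ has $m_{j}(x,y)\leq -\gamma'/2$, i.e., is confidently wrong. By the independence of $\rS_{1}$ and $\rS_{2}$, the expected fraction of such points in $\rS_{1}\cup \rS_{2}$ is bounded by $\e_{\rS_{j}}\!\bigl[\Pr_{(\rx,\ry)\sim\cD_{t}}[\margin(f_{\rS_{j}},(\rx,\ry))\leq -\gamma'/2]\bigr]\leq \e_{\rS_{j}}[\ls_{\cD_{t}}(f_{\rS_{j}})]=O\!\bigl(d\func(\gamma^{2}m/d)/(\gamma^{2}m)\bigr)$ by \cref{cor:ada}.

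The hard part is converting the resulting bound on $\ls_{\cD_{t}}(g_{12})$ into the \emph{optimal} in-expectation rate $O(d/(\gamma^{2}m))$ without retaining the $\func(\gamma^{2}m/d)$ factor. The na\"ive route of bounding $\e_{\rS_{1},\rS_{2}}[\min(\ls_{\cD_{t}}(f_{\rS_{1}}),\ls_{\cD_{t}}(f_{\rS_{2}}))]$ by the tail integral $\int \Pr[\ls_{\cD_{t}}(f_{\rS})>t]^{2}\,dt$ loses the full $\func$ factor, so the argument must instead exploit the self-bounding $\sqrt{\ls^{\theta}_{\rS_{1}\cup\rS_{2}}(g_{12})\cdot \beta_{\delta}}+\beta_{\delta}$ structure of \cref{thm:mainintro} together with a careful integration over the failure parameter $\delta$. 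The sharper $\func(\gamma^{2}m/d)$ factor in \cref{thm:mainintro} (in place of the $\ln(m/d)\ln m$ of \cref{eq:kth}) is precisely what makes the square-root term sharp enough to absorb the remaining logarithmic overhead; this is the quantitative reason Majority-of-$3$ already suffices here, whereas~\cite{manyfacesofoptimalweaktostronglearning} required Majority-of-$5$ to absorb the larger overhead arising from the previous margin bound.
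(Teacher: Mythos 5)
Your opening reduction is exactly right and matches the paper's: $\maj$ errs only when at least two of the three classifiers err, so by a union bound and i.i.d.\ of the $\rS_i$'s it suffices to bound the pairwise quantity, which equals $\e_{\rx\sim\cD}\bigl[q(\rx)^{2}\bigr]$ with $q(x)=\Pr_{\rS}[\sign(L_{\rS}(x))\neq t(x)]$.

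From there, however, you diverge and the argument stalls. You try to dominate the both-err event by the error of a single voting classifier $g_{12}=\sign(F_{1}/Z_{1}+F_{2}/Z_{2})$ and invoke \cref{thm:mainintro} on $g_{12}$. The construction itself is sound (a positive combination of elements of $\cH$ is again in $\dlh$ after renormalization, and both classifiers erring does force $g_{12}$ to err), but it cannot yield the target rate: the best conclusion you can extract is $\e\bigl[\ls_{\cD_{t}}(g_{12})\bigr]=O\bigl(d\,\func(\gamma^{2}m/d)/(\gamma^{2}m)\bigr)$. The complexity term of \cref{thm:mainintro} applied to $g_{12}$ is already $\Theta(d\,\func/(\gamma^{2}m))$, and the empirical margin loss $\ls^{\theta}_{\rS_{1}\cup\rS_{2}}(g_{12})$ is, in expectation, of the same order (it is governed by the confident-error probability of a single classifier, which is bounded by \cref{cor:ada}). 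So both terms of the margin bound, and hence their geometric mean, sit at the $\func$ scale, and no choice of $\delta$ or ``careful integration'' removes it --- your closing paragraph names the obstruction but does not overcome it.

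The paper removes the $\func$ factor by a genuinely different mechanism that your proposal is missing. After reaching $\e_{\rx}[q(\rx)^{2}]$, the paper partitions $\cX$ into level sets $R_{i}=\{x: 2^{-i-1}<q(x)\leq 2^{-i}\}$, applies the margin bound to the \emph{conditional} distribution $\cD_{t}\mid R_{i}$ (with sample size scaled down to roughly $\Pr[R_{i}]\cdot m$) to conclude $\Pr[R_{i}]=O\bigl((i+1)^{2}2^{i}d/(\gamma^{2}m)\bigr)$, and then observes that $\sum_{i}2^{-2i}\Pr[R_{i}]=O\bigl(d/(\gamma^{2}m)\bigr)$ because the $2^{-2i}$ from squaring $q$ dominates the $2^{i}\cdot\mathrm{poly}(i)$ coming from the $\func$ factor. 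It is precisely the quadratic decay against the conditional-distribution bound, not a refinement of the single-classifier analysis, that kills the extra logarithms; the fact that $\func$ takes argument $\gamma^{2}m/d$ rather than $m/d$ is what keeps the polynomial in $i$ small enough for the sum to converge at $k=2$. Without this region decomposition your argument has a genuine gap and does not reach $O(d/(\gamma^{2}m))$.
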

It is in this result that it is critical that our generalization bound
in~\cref{thm:mainintro} has $\ln(\gamma^2 m/d)$ factors rather than
$\ln(m/d)$ or $\ln(m)$ factors. We elaborate on this in~\cref{sec:sketchmaj3}.
\section{Proof Overviews and Notation}
In this section, we first describe the ideas going into our improved generalization bound for voting classifiers, stated in~\cref{thm:mainintro}. We then proceed to give an overview of our proof that Majority-of-3 AdaBoosts gives an optimal in-expectation error for weak to strong learning as in~\cref{cor:maj3intro}. Along the way, we introduce notation that we will use in our proofs.

\subsection{New Margin Generalization Bounds}
Recall that our goal is to establish~\cref{thm:mainintro}, showing that with probability at least $1-\delta$ over $ \rS \sim \cD^m$, it holds for all $ \gamma\in\left[0,1\right] $ and voting classifiers $f$ over a hypothesis class $\cH$ of VC-dimension $d$ that:
\begin{align}
 \ls_{\cD}(f)= \ls^{\gamma}_{\rS}(f)+O\left(\sqrt{\ls^{\gamma}_{\rS}(f)\left(\frac{d\func{\left(\frac{\gamma^{2}m}{d} \right)}}{\gamma^{2}m}+\frac{\ln{\left(\frac{1}{\delta}\right)}}{m} \right)}+\frac{d\func{\left(\frac{\gamma^{2}m}{d} \right)}}{\gamma^{2}m}+\frac{\ln{\left(\frac{1}{\delta}\right)}}{m} \right),\label{eq:overviewgoal}
\end{align}
where $\func(x) = \ln(x)\ln^2(\ln x)$.
Let us first introduce a more convenient way of representing voting classifiers. Recall that voting classifiers $f$ are of the form $f(x) = \sum_i \alpha_i h_i(x)$ with all $\alpha_i > 0$. Furthermore, the margin on a training example $(x,y)$ is defined as $y \sum_i \alpha_i h_i(x)/\sum_j \alpha_j$. To avoid the tedious normalization by $\sum_j \alpha_j$, we henceforth assume all voting classifiers have $\sum_i \alpha_i=1$. In this way, we have that $\sign(f(x)) \neq y$ if and only if $y f(x) \leq 0$ (we define $\sign(0)=0$). Furthermore, the margin is no more than $\gamma$ if and only if $y f(x) \leq \gamma$. We hence define $\dlh$ as the set of all convex combinations $\sum_i \alpha_i h_i$ for $h_i \in \cH$ (i.e. $\sum_i \alpha_i =1$ and $ \alpha_{i}>0 $) and refer to $\dlh$ as the set of voting classifiers over $\cH$. With this notation, we have $\ls^\gamma_\cD(f) = \Pr_{(\rx, \ry) \sim \cD}[\ry f(\rx) \leq \gamma]$, $\ls^\gamma_S(f)$ is the fraction of training examples $(x,y) \in S$ with $yf(x) \leq \gamma.$ For $ \gamma=0 $ we will use  $\ls_\cD(f)= \ls^0_\cD(f) =\Pr_{(\rx, \ry) \sim \cD}[\ry f(\rx) \leq 0]=\Pr_{(\rx, \ry) \sim \cD}[\sign(f(\rx))\not=\ry]$.

\paragraph{Partitioning into Intervals.}
To establish~\cref{eq:overviewgoal}, we first simplify the task by partitioning the range of $\gamma$ and $\ls_\rS^\gamma(f)$ into small intervals $[\gamma_0^i,\gamma_1^i]$ and $[\tau_0^{i,j},\tau_1^{i,j}]$, respectively. For each interval $[\gamma_0,\gamma_1]=[\gamma_0^i,\gamma_1^i]$ and $[\tau_0,\tau_1] = [\tau_0^{i,j},\tau_1^{i,j}]$, we show that for any $\delta$, with probability at least $1-\delta$ we have for every $f \in \dlh$ and $\gamma \in [\gamma_0,\gamma_1]$ that either: $\ls^\gamma_{\rS}(f) \notin [\tau_0,\tau_1]$ or
\begin{align}\label{eq:proofsketch2}
 \ls_{\cD}(f)= \tau_{1}+O\left(\sqrt{\tau_{1}\left(\frac{d\func{\left(\frac{m \gamma_{0}^{2}}{d} \right)}}{\gamma_{0}^{2}m}+\frac{\ln{\left(\frac{1}{\delta}\right)}}{m} \right)}+\left(\frac{d\func{\left(\frac{m\gamma_{0}^{2}}{d} \right)}}{\gamma_{0}^{2}m}+\frac{\ln{\left(\frac{1}{\delta}\right)}}{m} \right) \right).
\end{align}
We can then union bound over all intervals, choosing appropriate values $\delta_{i,j} < \delta$, to conclude that with probability $1-\delta$, the guarantee~\cref{eq:proofsketch2} holds simultaneously for all intervals. If we choose the length of the intervals small enough, all $\gamma \in [\gamma_0, \gamma_1]$ and $\ls^\gamma_{\rS}(f) \in [\tau_0,\tau_1]$ are sufficiently close that we may substitute all occurrences of $\gamma_0$ and $\tau_1$ in~\cref{eq:proofsketch2} by $\gamma$ and $\ls^\gamma_{\rS}(f)$. Such a partitioning is standard in proofs of margin bounds, although we have to be a little careful in defining the intervals small enough. Having done so, this recovers~\cref{eq:overviewgoal}.

\paragraph{Ghost Set.}
Thus we focus on showing the claim in~\cref{eq:proofsketch2}. As in many previous proofs of generalization bounds, we first seek to discretize the infinite hypothesis class $\dlh$ and then apply a union bound over a finite set of events/hypotheses. In our proof, we will construct a $(\gamma_0/2)$ $\ell_\infty$-covering $N$ of $\dlh$. Ideally, such a covering would contain for every $f \in \dlh$, a function $f' \in N$ such that $|f(x)-f'(x)| \leq \gamma_0/2$ all $x \in \cX$. Unfortunately, there might not be a finite such $N$ when requiring $|f(x)-f'(x)| \leq \gamma_0/2$ for all $x$ in the full input domain $\cX$. We thus start by introducing a \emph{ghost set} $\rS' \sim \cD^m$ to replace all references to $\ls_{\cD}(f)$ (which depends on the full domain $\cX$) by $\ls_{\rS'}(f)$ (which depends only on $\rS'$). Using standard arguments relating $\ls_{\cD}(f)$ to $\ls_{\rS'}(f)$, we show that~\cref{eq:proofsketch2} follows if we can show that with probability $1-\delta$ over the pair $(\rS,\rS')$, it holds for every $f \in \dlh$ and $\gamma \in [\gamma_0,\gamma_1]$ that either $\ls_{\rS}^\gamma(f) \notin [\tau_0, \tau_1]$ or
\begin{align}\label{eq:goalWithghost}
 \ls_{\rS'}(f)=\tau_{1}+O\left(\sqrt{\tau_{1}\left(\frac{d\func{\left(\frac{m \gamma_{0}^{2}}{d} \right)}}{\gamma_{0}^{2}m}+\frac{\ln{\left(\frac{1}{\delta}\right)}}{m} \right)}+\left(\frac{d\func{\left(\frac{m\gamma_{0}^{2}}{d} \right)}}{\gamma_{0}^{2}m}+\frac{\ln{\left(\frac{1}{\delta}\right)}}{m} \right) \right).
\end{align}
Observe that we substituted $\ls_{\cD}(f)$ by $\ls_{\rS'}(f)$ compared to~\cref{eq:proofsketch2}.

\paragraph{Covering.}
To establish~\cref{eq:goalWithghost}, consider first a fixed $ f\in \dlh $. Since $\rS$ and $\rS'$ are i.i.d.\ samples from $\cD^m$, we have that $\ls^\gamma_{\rS}(f)$ and $\ls_{\rS'}(f)$ are strongly concentrated around their means $\ls^\gamma_\cD(f)$ and $\ls_\cD(f)$. Moreover, since $\ls^\gamma_\cD(f) \geq \ls_\cD(f)$, it is highly unlikely that $\ls_{\rS'}(f)$ is significantly larger than $\ls_\rS(f)$. This is precisely what we need to establish~\cref{eq:goalWithghost}. Concretely, we need to show that there is no $f$ with $\ls^\gamma_\rS(f) \in [\tau_0, \tau_1]$ such that $\ls_{\rS'}(f)$ is much larger than $\tau_1$. Since this event is unlikely for a fixed $f$, we now introduce a discretization of $\dlh$ that would preserve any large gap between $\ls^\gamma_\rS(f)$ and $\ls_{\rS'}(f)$.

To this end, we discretize $\dlh$ on $ \rX=\rS\cup\rS' $  via a $\gamma_0/2$ $\ell_\infty$-covering $ \Net $, i.e., for any $f \in \dlh$, there is an $f' \in \Net$ with $|f(x)-f'(x)| \leq \gamma_0/2$ for all $x\in \rX$. Now observe that whenever $yf(x) > \gamma_0$, we also have $yf'(x) > \gamma_0/2$. Thus, for any $\gamma \in [\gamma_0,\gamma_1]$, we have $\ls_{\rS}^\gamma(f) \geq \ls_{\rS}^{\gamma_0/2}(f')$. Similarly, we have for $yf(x) \leq 0$ that $yf'(x) \leq \gamma_0/2$, and thus $\ls_{\rS'}(f) \leq \ls_{\rS'}^{\gamma_0/2}(f')$. Therefore, a $\gamma_0/2$ $\ell_\infty$-covering $ \Net $ preserves the imbalance between $ \ls_{\rS}^{\gamma}(f) $ and $ \ls_{\rS'}(f)$ via $ \ls_{\rS}^{\gamma_0/2}(f') $ and $ \ls_{\rS'}^{\gamma_0/2}(f').$   
 
To construct a $\gamma_0/2$ $\ell_\infty$-covering $ \Net $ of $ \rX $ and union bound over it, we need the point set $ \rX $ to be fixed - however we still want to be able to show that an imbalance between $ \ls_{\rS}^{\gamma_0/2}(f') $ and $ \ls_{\rS'}^{\gamma_0/2}(f')$ for some $ f'\in \Net$ is highly unlikely. As in previous works, we employ the following way of viewing the sampling of $ \rS $ and $ \rS'$. First, we draw $ \rX\sim \cD^{2m} $, consisting of $ 2m $ i.i.d.\ training examples from $ \cD $, and then let $ \rS $ be $ m $ points drawn without replacement from $ \rX,$ and $ \rS' $ be the remaining $ m $  points of $ \rX,$ i.e., $ \rS'=\rX\backslash \rS'.$ Taking this viewpoint of drawing $ \rS $ and $ \rS' $ allows us to fix the realization $ X $ of the points in $ \rX,$ while still having which training examples ending up in $ \rS $ and $ \rS' $ being random. This still allows us to argue that an imbalance between $ \ls_{\rS}^{\gamma_0/2}(f') $ and $ \ls_{\rS'}^{\gamma_0/2}(f')$ for some $ f'\in \Net$ is unlikely.
              
Thus, we now consider an arbitrary but fixed realization $ X $ of $ \rX, $ and let $ \Net $ be a $\gamma_0/2$ $\ell_\infty$-covering of $ X. $ By the above arguments above, if we can show for any $0 < \delta < 1$ and an arbitrary $f\in \Net $, it holds with probability at least $1-\delta$ over the random partitioning of $X$ into $\rS, \rS'$ that either $\ls_{\rS}^{\gamma_0/2}(f) > \tau_1$ or
 \begin{align}\label{eq:proofsketch6}
  \ls^{\gamma_{0}/2}_{\rS'}(f) = \tau_{1}+O\left(\sqrt{\tau_{1}\frac{\ln{\left(\frac{1}{\delta}\right)}}{m} }+\frac{\ln{\left(\frac{1}{\delta}\right)}}{m} \right),
 \end{align}
 then we can union bound over all $f \in \Net$, with $\delta$ rescaled to $\delta/|\Net|$, to conclude that with probability $1-\delta$ it holds for all $f \in \Net$ that either $\ls_{\rS}^{\gamma_0/2}(f) > \tau_1$ or
 \begin{align*}
  \ls^{\gamma_{0}/2}_{\rS'}(f) = \tau_{1}+O\left(\sqrt{\tau_{1}\frac{\ln{\left(\frac{|\Net|}{\delta}\right)}}{m} }+\frac{\ln{\left(\frac{|\Net|}{\delta}\right)}}{m} \right).
 \end{align*}
Giving an appropriate upper bound on $|\Net|$ will then imply~\cref{eq:goalWithghost}
 
Now, to argue~\cref{eq:proofsketch6} for a fixed $ f \in \Net$, we want to show that the event $ \ls_{\rS}^{\gamma_{0}/2}(f)\leq \tau_{1} $ and $ \ls_{\rS'}^{\gamma_{0}/2}(f) = \tau_{1}+\Omega(\sqrt{\tau_{1}\ln{\left(1/\delta \right)}/m}+\ln{\left(1/\delta \right)}/m) $ happens with probability at most $ \delta$. Let $\mu$ denote the fraction of mistakes $ f $ makes on $ X $ and observe that $\mu= (\ls_{\rS}^{\gamma_{0}/2}(f)+\ls_{\rS'}^{\gamma_{0}/2}(f))/2$. We notice that $ \mu $ has to be at least $ \ls_{\rS'}^{\gamma_{0}/2}(f)/2 = (\tau_{1}+\Omega(\sqrt{\tau_{1}\ln{\left(1/\delta \right)}/m}+\ln{\left(1/\delta \right)}/m))/2 $ for the event to occur. Since $ \mu $ is $ \Omega(\ln{\left( 1/\delta\right)}/m) $ and $ \ls_{\rS}^{\gamma_{0}/2}(f) $ has expectation equal to $ \mu,$ it follows by an invocation of a Chernoff bound (without replacement) that with probability at least $ 1-\delta $ over $ \rS $ (drawn from $ X $)  that
 \begin{align*}
   \ls_{\rS}^{\gamma_{0}/2}(f)&\geq \left(1-\sqrt{\frac{2\ln{\left(1/\delta \right)}}{\mu m}}\right)\mu\\
   &=(\ls_{\rS}^{\gamma_{0}/2}(f)+\ls_{\rS'}^{\gamma_{0}/2}(f))/2-\sqrt{\frac{2\ln{\left(1/\delta \right)}(\ls_{\rS}^{\gamma_{0}/2}(f)+\ls_{\rS'}^{\gamma_{0}/2}(f))/2}{m}}, 
 \end{align*}    
where doing some rearrangements implies the following inequality
 \begin{align*}
  \frac{\ls_{\rS}^{\gamma_{0}/2}(f)}{2}+\sqrt{\frac{\ls_{\rS}^{\gamma_{0}/2}(f)\ln{\left(1/\delta \right)}}{m}}\geq \frac{\ls_{\rS'}^{\gamma_{0}/2}(f)}{2}-\sqrt{\frac{\ls_{\rS'}^{\gamma_{0}/2}(f)\ln{\left(1/\delta \right)}}{m}}.
 \end{align*}
We notice that the above inequality is implying that $ \ls_{\rS'}^{\gamma_{0}/2}(f) $ cannot be too large compared to $ \ls_{\rS}^{\gamma_{0}/2}(f).$ Specifically the inequality implies that for $ \ls_{\rS}(f)\leq \tau_{1} $, we must have $ \ls_{\rS'}^{\gamma_{0}/2}(f)= \tau_{1}+O(\sqrt{(\tau_{1}\ln{\left(1/\delta \right)})/m}+\ln{\left(1/\delta \right)}/m)$ as desired. Let us finally remark that applying Hoeffding's inequality would be insufficient to obtain our bounds in that we crucially exploit that Chernoff (or Bernstein's) gives bounds relative to the mean $\mu$.

\paragraph{Clipping.}
While the above argument gives the correct type of bound  $\tau_{1} +\sqrt{\tau_{1}\ln{\left(|N|/\delta \right)}/m} +\ln{\left(|N|/\delta \right)}/m$, the size of the above suggested cover $ N $ turns out to be too large.
The intuitive reason is that the functions $f \in \dlh$ take values in the range $[-1,1]$, whereas we only really care about the values being larger than $\gamma$ or smaller than $-\gamma$ in the losses $\ls_{\rS}^\gamma(f)$ and $\ls_{\rS'}^\gamma(f)$. Constructing a cover for the full range $[-1,1]$ thus leads to a larger cover size than necessary and hence is too costly for a union bound.
Our idea to remedy this, is to \emph{clip} the voting classifiers in $\dlh$. For this let $ \gamma>0 $, and $ f $ be a function from $ \cX $ into $ [-1,1],$ we then define  $ f_{\left\lceil\gamma\right\rceil} $ as the function from $ \cX\rightarrow [-1,1] $ given by
\begin{align}
    f_{\left\lceil\gamma\right\rceil}(x)=\begin{cases}
        \gamma &\text{ if }f(x)> \gamma
        \\
        -\gamma &\text{ if } f(x)<- \gamma
        \\
        f(x) &\text{ else }
    \end{cases},
\end{align}         
and $ \dlh_{\left\lceil\gamma \right\rceil}=\{ f_{\left\lceil\gamma\right\rceil}:f\in \dlh \},$ i.e. the functions in $ \dlh $ capped to respectively $ -\gamma $ and $ \gamma $ if it goes below or above $ -\gamma $ or $ \gamma $. We will show that $\dlh_{\left\lceil\gamma_1 \right\rceil}$ has a small $\gamma_0/2$-cover $ \Net $ of cardinality just
\begin{align}
  \cN_{\infty}(X,\dlh_{\left\lceil\gamma_1 \right\rceil},\gamma_0/2)=\exp{\left(O\left( \frac{d}{\gamma_0^{2}}\func{\left(\frac{m\gamma_0^{2}}{d} \right)} \right)\right)}, \label{eq:coversize}
\end{align} 
where the notation $ \cN_{\infty}(\cdot,\cdot,\cdot) $  for a point set $ X\subseteq \cX$   function class $ \cF\subseteq \mathbb{R}^{\cX}$ and precision parameter $ \alpha $ means the smallest size, $ \cN_{\infty}(X,\cF,\alpha),$  of an $ \alpha $ $ \ell_{\infty} $-covering of $ \cF $, $ \Net,$  on $ X $. 
\paragraph{Relating Covering Number and Fat Shattering.}
We finally need to bound the covering number as in~\cref{eq:coversize}, where a key part of the argument is that we use $ \dlh_{\left\lceil\gamma_1 \right\rceil} $ instead of $\dlh $.  With the goal of establishing this bound on the cover size, we use a result by \cite{RudelsonVershynin} relating the covering number to fat shattering. Let us first recall the definition of fat shattering.

For at point set $\{  x_{1},\ldots,x_d\} $ of size $ d $ and a level parameter $ \beta>0 $, we say that a function class $ \cF $ $ \beta $-shatters   $ \{  x_{1},\ldots,x_d\} $ if there exists levels $ r_1,\ldots,r_{d} $ such that for any $ b\in \{  -1,1\}^{d} $, we have that there exists $ f\in \cF $ such that 
    \begin{align*}
        f(x_i)&\leq r_{i}-\beta  \quad \text{ if  } b_{i}=-1
        \\
        f(x_i)&\geq r_{i}+\beta  \quad \text{ if  } b_{i}=1,
    \end{align*}
    that is, the function class is rich enough to be $ \beta $  above or below the levels $ r_{1},\ldots,r_{d} $ on the point set $ \{  x_{1},\ldots,x_d\}$.  
    For a function class $ \cF $ and level $ \beta>0 $  we define $ \fat_{\beta}(\cF) $  as the largest number $ d $,  such that there exists a point set $ x_{1},\ldots,x_{d}   $ of size $ d $, which is $ \beta $-shattered by $ \cF $.

    With the definition of $\fat_\beta(\cF)$ in place,~\cite{RudelsonVershynin}  [Theorem 4.4] says that for any $ 0<\alpha <1/2$, any $0 < \eps < 1$, any function class $ \cF $ with $ \fat_{c\alpha\eps} $-dimension $ d_{c\alpha\eps} $, for a constant $ c>0 $, and any point set $ X\subseteq \cX $, with $ |X|=m $, such that $ \sum_{x\in X}|f(x)|/m \leq 1$ for any $ f\in \cF $, it holds that $ \ln{\left(\cN_{\infty}(X,\cF,\alpha) \right)}=O\left(d_{c\alpha\eps}\ln{\left(\frac{m}{d_{c\alpha\eps}\alpha} \right)}\ln^{\eps}{\left(\frac{m}{d_{c\alpha\eps}} \right)}\right)$. 
    The above bound looks quite different from \cref{eq:coversize}, but we will later choose the variable $ \eps $ in an appropriate way and recover \cref{eq:coversize}. 

A first naive approach would be to invoke the result of~\cite{RudelsonVershynin} directly on $\dlh_{\left\lceil\gamma_1 \right\rceil}$ (or even the unclipped $\dlh$), to conclude that
\begin{align}
\ln{\left(\cN_{\infty}(X,\dlh_{\left\lceil\gamma_1 \right\rceil},\gamma_0/2) \right)}=O\left(d_{c\gamma_0\eps/2}\ln\left(\frac{m}{d_{c\gamma_0\eps/2} \gamma_0} \right)\ln^{\eps}\left({\left(\frac{m}{d_{c\gamma_0\eps/2}} \right)}\right) \right), \label{eq:naivecover}
\end{align}
We will later show that $d_{c\gamma_0\eps/2} = O(d/ (\gamma_0\eps)^{-2})$ for $\dlh_{\left\lceil\gamma_1 \right\rceil}$, with $ d $ being the VC-dimension of $ \cH $. Inserting this in the above and considering $ \eps $ as a constant we see this  fails to recover our claimed covering number in~\cref{eq:coversize}. In particular, if considering $ \eps $ as a constant, the $\ln(m/(d_{c\gamma_0\eps/2}  \gamma_0))$ factor would become $\ln(\gamma_0 m/d)$ rather than the claimed $\ln(\gamma_0^2 m/d)$. Again, this difference turns out to be crucial for our Majority-of-3 algorithm as we will argue later.

To remedy this, we exploit the clipping. We observe that for $ f\in \dlh_{\left\lceil\gamma_1 \right\rceil} $, we have that $ \sum_{x\in X}|f(x)|/m \leq \gamma_1$. We may thus invoke the result of~\cite{RudelsonVershynin} on the scaled function class $ \gamma_1^{-1} \cdot\dlh_{\left\lceil\gamma_1 \right\rceil} =\{ f\mid \exists f'\in \dlh_{\left\lceil\gamma_1 \right\rceil}, f=\gamma_1^{-1} f'  \} $, i.e.\ the functions in $ \dlh_{\left\lceil\gamma_1 \right\rceil} $ scaled by $ \gamma_1^{-1} $ to get that
\[
  \ln{\left(\cN_{\infty}(X,\gamma_1^{-1} \cdot\dlh_{\left\lceil\gamma_1 \right\rceil},\gamma_0/(2\gamma_1)) \right)}=O\left(d_{c\gamma_0\eps/(2\gamma_1)}\ln^{1+\eps}{\left(\frac{m}{d_{c\gamma_0\eps/(2\gamma_1)}} \right)}\right),
\]
where $ d_{c\gamma_0\eps/(2\gamma_1)} $ is the $ \fat_{c\gamma_0\eps/(2\gamma_1)}$-dimension of $ \gamma_1^{-1} \cdot\dlh_{\left\lceil\gamma_1 \right\rceil}$. Picking a minimal $\gamma_0/(2 \gamma_1)$ covering $\Net$ for $\gamma_1^{-1} \cdot\dlh_{\left\lceil\gamma_1 \right\rceil}$ and downscaling all functions in $\Net$ by $\gamma_1$ results in $\gamma_1 \Net$ being a $\gamma_0/2$ covering for $\dlh_{\left\lceil\gamma_1 \right\rceil}$ as required. We have thus exploited the clipping to show that
\[
  \ln{\left(\cN_{\infty}(X,\dlh_{\left\lceil\gamma_1 \right\rceil},\gamma_0/2) \right)}=O\left(d_{c\gamma_0\eps/(2\gamma_1)}\ln^{1+\eps}{\left(\frac{m}{d_{c\gamma_0\eps/(2\gamma_1)}} \right)}\right).
\]
 Now the  $ \fat_{c\gamma_0\eps/(2 \gamma_1)} $-dimension of $ \gamma_1^{-1} \cdot\dlh_{\left\lceil\gamma_1 \right\rceil}$ is, due to the scale invariance of $ \fat $-dimension, the same as the  $ \fat_{c\gamma_0\eps/2} $-dimension of $ \dlh_{\left\lceil\gamma_1 \right\rceil}$. We have thus improved~\cref{eq:naivecover} to
\begin{align*}
\ln{\left(\cN_{\infty}(X,\dlh_{\left\lceil\gamma_1 \right\rceil},\gamma_0/2) \right)}=O\left(d_{c\gamma_0\eps/2}\ln^{1+\eps}\left(\frac{m}{d_{c\gamma_0\eps/2} } \right) \right), 
\end{align*}
Inserting the claimed bound of $d_{c\gamma_0\eps/2} = O(d (\gamma_0\eps)^{-2})$ and setting $ \eps=1/\ln(\ln(m\gamma_{0}^{2}/d)) $  gives
\begin{align*}
  \ln{\left(\cN_{\infty}(X,\dlh_{\left\lceil\gamma_1 \right\rceil},\gamma_0/2) \right)}=O\left(\frac{d}{\gamma_{0}^{2}\eps^{2}}\ln^{1+\eps}\left(\frac{m\gamma_{0}^{2}\eps^{2}}{d} \right) \right)=O\left(\frac{d}{\gamma_{0}^{2}}\ln^{2}{\left(\ln{\left( \frac{m\gamma_{0}^{2}}{d}\right)} \right)}\ln\left(\frac{m\gamma_{0}^{2}}{d} \right) \right), 
  \end{align*}
where in the last inequality have used that $ \exp(\eps\ln{\left(\ln{\left(m\gamma_{0}^{2}/d \right)} \right)})=O(1).$ Since $ \func(x)=\ln^{2}{\left(\ln{\left(x \right)} \right)}\ln{\left(x \right)} $ the above establishes~\cref{eq:coversize} and completes our bound on the covering number and. All that remains is thus to argue that $d_{c\gamma_0\eps/2}= O(d (\gamma_0\eps)^{-2})$.

\paragraph{Bounding Fat Shattering Dimension.}
To bound $d_{c\gamma_0\eps/2}$, we use an argument similar to the proof of \cite{larsen2022optimalweakstronglearning} [Lemma 9]. Assume $ \dlh_{\left\lceil\gamma_1 \right\rceil}$ $ c\gamma_0\eps/2$-shatters a set of $n$ points $ x_1,\ldots,x_{n} $, with witness $ r_{1},\ldots,r_{n}\in [-\gamma_1,\gamma_1] $. By definition of shattering, we then have that for any $  b\in \{ -1, 1\}^{n}$, there exists $ f\in  \dlh_{\left\lceil\gamma_1 \right\rceil}$ such that $ b_{i}(f(x_{i})-r_{i})\geq c\gamma_0\eps/2$ for all $ i=1,\ldots,n$.

We next observe that since $ f\in  \dlh_{\left\lceil\gamma_1 \right\rceil}$ is equal to $ \min(\max(f',-\gamma_1),\gamma_1) $  for $ f'\in \dlh $, we have that $ f' $ also satisfies $ b_{i}(f'(x_{i})-r_{i})\geq c\gamma_0\eps/2,$ implying that $ \dlh$ also $c\gamma_0\eps/2$-shatters $ x_1,\ldots,x_{n} $ with the same witness. We can now upper and lower bound the Rademacher complexity of $ \dlh $ as follows
\begin{align}\label{eq:proofsketch10}
    c\gamma_0\eps/2 \leq \e_{\sigma\sim \{ -1,1 \}^{n} }\left[\sup_{f\in \dlh} \sum_{i=1}^{n}\sigma_{i}(f(x_{i})-r_{i})/n\right] = \e_{\sigma\sim \{ -1,1 \}^{n} }\left[\sup_{f\in \cH} \sum_{i=1}^{n}\sigma_{i}f(x_{i})\right]  \leq c' \cdot \sqrt{\frac{d}{n}},
\end{align}
for a constant $c'>0$. The first inequality holds because for any $\sigma \in \{-1,1\}^n$, by definition of $c\gamma_0\eps/2$-shattering, there is an $f \in \dlh$ with $ \sigma_{i}(f(x_{i})-r_{i})\geq c\gamma_0\eps/2$ for all $i$. The equality holds because $-\sigma_i r_i/n$ is independent of $f$, and thus can be moved outside the $\sup$, and we have $\e[\sigma_i r_i/n] = \e[\sigma_i]r_i/n=0$. Furthermore, observe that in the equality, we also replace $\sup_{f \in \dlh}$ by $\sup_{f \in \cH}$. This is true since for any convex combination $f \in \dlh$ with $f = \sum_j \alpha_j h_j$ we have $ \sum_{i}\sigma_{i}f(x_{i})=\sum_{j}\alpha_{j}\sum_{i}\sigma_{i}h_{j}(x_{i})\leq \sup_{h\in\cH}\sum_{i}\sigma_{i}h(x_{i}) $ implying $ \sup_{f\in\dlh} \sum_{i}\sigma_{i}f(x_{i})\leq \sup_{h\in\cH}\sum_{i}\sigma_{i}h(x_{i}).$ Furthermore, since $ \cH\subseteq\dlh $ the opposite inequality also holds so we have an equality. The last inequality is by classic bounds on the Rademacher complexity of classes with bounded VC-dimension, due to a bound by \cite{dudley} [see e.g. \cite{rademacherboundlecturenotes}, Theorem 7.2]. By rearrangement of \cref{eq:proofsketch10} we conclude that $ n= O(d (\gamma_0\eps)^{-2}) $ as claimed, which concludes the proof sketch.

\subsection{Majority-of-3}
\label{sec:sketchmaj3}
We finally describe the main ideas in our proof that Majority-of-3 AdaBoosts achieves an optimal in-expectation error of $O(d/(\gamma^2m))$ as stated in \cref{cor:maj3intro}. We will also explain why it is crucial for this result, that the logarithmic factors in our margin generalization bound~\cref{thm:mainintro} are $\ln(\gamma^2 m/d)$ and not $\ln(m/d), \ln(m)$ or $\ln(\gamma m/d)$. We remark that the later bounds can not be turned into $\ln(\gamma^2 m/d)$ by multiplying by some constant factor, for instance $ 2\ln{\left(\gamma m/d \right)}=\ln{\left(\gamma^{2}m^{2}/d^{2} \right)} \not\leq \ln{\left(\gamma^{2}m/d \right)}$ . Let $\cD$ be the unknown data distribution over $\cX$ and let $c^{*} \in \{-1,1\}^\cX$ be the unknown target concept.
Recall that if AdaBoost is run for $\Omega(\gamma^{-2} \ln m)$ iterations with a $\gamma$-weak learner $\cW$, then it produces a voting classifier with margins $\Omega(\gamma)$ on all examples in the training sequence $\rS \sim \cD_{c^{*}}^m$. We now use an analysis idea by~\cite{majorityofthree}, used to show that the Majority-of-3 Empirical Risk Minimizers has an optimal in-expectation error for realizable PAC learning. 

Consider partitioning a training sequence $\rS \sim \cD^{(2k-1)m}_{c^{*}}$ into $2k-1$ equal sized training sequences $\rS_1,\dots\rS_{2k-1}$ (with $k=2$ for Majority-of-3 and $k=3$ for Majority-of-5) of $m$ training examples each (rescaling $m$ by $2k-1$ recovers the guarantees for a training sequence of size $m$). If we run AdaBoost on each to obtain voting classifiers $f_{\rS_1},\dots,f_{\rS_{2k-1}}$, then each $f_{\rS_i}$ has margins $\Omega(\gamma)$ on all of $\rS_i$. For concreteness, let us say all margins are at least $\gamma/2$. Furthermore, for any point $x \in \cX$, we have that if the majority vote $\maj(f_{\rS_{1}},\cdots,f_{\rS_{2k-1}})$ errs on $x$, where we define the majority vote as $ \maj(f_{\rS_{1}},\ldots,f_{\rS_{2k-1}})=\sign(\sum_{i=1}^{2k-1}\sign(f_{i})) $, then at least $k$ of the voting classifiers err on $x$. Let us fix an $x \in \cX$ and denote by $p_x:=\p_{\rS_{i}}[f_{\rS_{i}}(x)\not=c^{*}(x)]$ the probability that $f_{\rS_i}$ errs on $x$, where the probability is over the random choice of $\rS_i$. Observe that since the training sequences $\rS_i$ are i.i.d., this is the same probability for each $\rS_i$. Moreover, by independence of the $\rS_i$'s, we have that the probability that a fixed set of $k$ of the voting classifiers all err on $x$ is precisely $p_x^k$. A union bound over all $\binom{2k-1}{k} \leq 2^{2k}$ choices of $k$ voting classifiers implies that
\[
  \Pr_{\rS}[\maj(f_{\rS_{1}},\cdots,f_{\rS_{2k-1}})(x) \neq c^{*}(x)] \leq 2^{2k} p_x^k.
\]
By swapping the order of expectation, we can bound the expected error of $\maj(f_{\rS_{1}},\cdots,f_{\rS_{2k-1}})$ as follows
\begin{align*}
  \e_{\rS}[\ls_{\cD_{c^{*}}}(\maj(f_{\rS_{1}},\cdots,f_{\rS_{2k-1}}))] &=\e_{(\rx,c^{*}(\rx)) \sim \cD_{c^{*}}}[\Pr_{\rS}[ \maj(f_{\rS_{1}},\cdots,f_{\rS_{2k-1}})\neq c^{*}(\rx)]] \\
  &=\e_{\rx \sim \cD}[2^{2k} p_\rx^k].
\end{align*}
Using the approach in~\cite{majorityofthree}, we now partition the input domain $\cX$ into regions $R_i$, such that $R_i = \{x \in \cX : p_x \in (2^{-i-1}, 2^{-i}]\}$ for $i=0,\dots,\infty$. Letting $\Pr[R_i]$ denote $\Pr_{\rx \sim \cD}[\rx \in R_i]$ and using the notation $\e_{\rx \sim \cD}[ \cdot \mid R_i]$ to denote the conditional expectation, when conditioning on $\rx \in R_i$, we now have that
\begin{align}
  \e_{\rx \sim \cD}[2^{2k} p_\rx^k] &= \sum_{i=0}^\infty \e_{\rx \sim \cD}[2^{2k} p_\rx^k \mid R_i] \Pr[R_i] \nonumber\\
  &\leq 2^{2k} \cdot \sum_{i=0}^\infty 2^{-ik} \Pr[R_i].\label{eq:boundexp}
\end{align}
The goal is thus to bound $\Pr[R_i]$. For this, the key is to exploit that $p_x > 2^{-i-1}$ for $x \in R_i$. Let $m_i = \Pr[R_i] m$ denote the expected number of samples from $R_i$ in a training sequence $\rS_j \sim \cD_{c^{*}}^m$. Since AdaBoost produces a voting classifier with margins $\Omega(\gamma)$ on all points in its training data set, it in particular has margins $\Omega(\gamma)$ on all data points in $\rS_j \cap R_i$. We note that these samples are i.i.d.\ from the conditional distribution of a $\rx \sim \cD$, conditioned on $\rx \in R_i$. Let us denote this conditional distribution by $\cD_{c^{*}} \mid R_i$. We can now invoke our new margin generalization bound in~\cref{thm:mainintro}, and conclude that
\begin{align}
  \e_{\rS_j}[\ls_{\cD_{c^{*}} \mid R_i}(f_{\rS_j})] = O\left(\frac{d \func(\gamma^2 m_i/d)}{\gamma^2 m_i} \right), \label{eq:fromgenbound}
\end{align}
with $\func(x)=\ln(x)\ln^2(\ln x)$.
Note that~\cref{thm:mainintro} actually gives a high probability guarantee, which by some calculations implies the above guarantee on the expected error. The crucial point in these calculations is that we invoke~\cref{thm:mainintro} with the conditional distribution $\cD_{c^{*}} \mid R_i$ and $\ls_{\rS_j\cap R_{i}}^{\gamma/2}(f_{\rS_j}) = 0$ since we have all margins at least $\gamma/2$, and we have $m_i$ samples from this distribution (in expectation). On the other hand, we also have by definition of $R_i$, that \cref{eq:fromgenbound} can be lower bouned as $\e_{\rS_j}[\ls_{\cD_{c^{*}} \mid R_i}(f_{\rS_j})] > 2^{-i-1}$. Writing $x = \gamma^2 m_i/d$ for short, we thus conclude that
\begin{align*}
  \frac{\func(x)}{x} = \Omega\left(2^{-i}\right) \Rightarrow x = O\left(i \ln^2(i) 2^{i} \right) \Rightarrow m_i = O\left( \frac{d i \ln^2(i) 2^i}{\gamma^2}\right) \Rightarrow \Pr[R_i] = O\left( \frac{d i \ln^2(i)2^i}{\gamma^2m}\right),
\end{align*}
where the last equality uses that $ m_{i}=\p[R_{i}]n$.
Inserting this in~\cref{eq:boundexp} bounds the expected error by (for constant $k$):
\begin{align}
\e_{\rS}[\ls_{\cD_{c^{*}}}(\maj(f_{\rS_{1}},\cdots,f_{\rS_{2k-1}}))] &= O\left( \sum_{i=0}^\infty \frac{2^{-ik} i \ln^2(i) 2^i d}{\gamma^2 m}\right).\label{eq:infsum}
\end{align}
Inserting $k=2$ (corresponding to Majority-of-3) gives the desired $O(d/(\gamma^2 m))$ as the $2^{-ik}=2^{-2i}$ decreases fast enough to cancel the $i\ln^2(i) 2^i$ factors.

\paragraph{Failure of Previous Bounds.}
Let us now discuss why the $\func(\gamma^2 m/d)$ factor is crucial compared to $\ln(m)$, $\ln(\gamma m/d)$ and $\ln(m/d)$ factors in the above analysis. Consider again~\cref{eq:fromgenbound} and assume for simplicity that the generalization bound had instead given us
\begin{align}\label{eq:proofsketch0}
  \e_{\rS_i}[\ls_{\cD_{c^{*}} \mid R_i}(f_{\rS_i})] = O\left(\frac{d \ln(\gamma m_i/d)}{\gamma^2 m_i} \right),
\end{align}
i.e.\ a slightly suboptimal dependency on $\gamma$ inside the $\ln(\cdot) \ln^2(\ln(\cdot ))$. We would then get the inequality
\[
  \frac{d \ln(\gamma m_i/d)}{\gamma^2 m_i} = \Omega(2^{-i}).
\]
Now again letting $ x=\gamma^{2}m_{i}/d $ then the above can be shown to imply
\begin{align*}
  \frac{\ln(x/\gamma)}{x} = \Omega\left(2^{-i}\right) \Rightarrow x = O\left(\ln{\left(2^i/\gamma \right)} 2^{i} \right) \Rightarrow \Pr[R_i] = O\left( \frac{d\ln{\left(2^i/\gamma \right)}2^i}{\gamma^2m}\right).
\end{align*}
Now plugging this bound $ \Pr[R_{i}] $  into \cref{eq:boundexp} (with $ k=2 $), we get the following error bounds for $ \e_{\rS}[\ls_{\cD_{c^{*}}}(\maj(f_{\rS_{1}},f_{\rS_{2}},f_{\rS_{3}}))]  $ of
\[
  O\left( \sum_{i=0}^\infty \frac{2^{-2i}d \ln(2^i/\gamma) 2^i }{\gamma^2 m}\right) = O\left(\frac{d \ln(1/\gamma)}{\gamma^2 m} \right).
\]
The obtained error bound of $ \e_{\rS}[\ls_{\cD_{c^{*}}}(\maj(f_{\rS_{1}},f_{\rS_{2}},f_{\rS_{3}}))]  $ thus has a superfluous $ \ln{\left(1/\gamma \right)}$ factor -  if the bound in \cref{eq:proofsketch0} where with $ \ln{\left(\gamma m/d \right)} $ and $ \ln{\left(m/d \right)} $ instead the above analysis would again give a superfluous factor. 

These shortcomings of previous margins bounds used in the above analysis of the expected error $ \e_{\rS}[\ls_{\cD_{c^{*}}}(\maj(f_{\rS_{1}},f_{\rS_{2}},f_{\rS_{3}}))]  $, introducing superfluous factors, is precisely the reason why previous work needed a Majority-of-5. Being unable to use the margin generalization bounds with sub-optimal $\ln( \cdot )$ factors, the work~\cite{manyfacesofoptimalweaktostronglearning} instead relied on the much weaker guarantee that any voting classifier $f$ with margins $\gamma$ has  $\ls_{\cD_{c^{*}}}(f) = O(\sqrt{d/(\gamma^2 m)}),$ where this bound can be obtained by following the steps of~\cite{boostingbook} [page 107-111] and using the stronger bound on the Rademacher complexity for a function class with VC-dimension $ d $ of $ O(\sqrt{d/m}) $ due to \cite{dudley} [See e.g. Theorem 7.2 \cite{rademacherboundlecturenotes}], instead of the weaker $ O(\sqrt{d\ln{\left(m/d \right)}/m})$ used in \cite{boostingbook}. This bound has the right behaviour for $m \approx d/\gamma^2$ unlike the bounds with sub-optimal logarithmic factors and results in the guarantee $\Pr_{\rx\sim\cD}[R_i] = O(d2^{2i}/(\gamma^{2}m))$ instead of $O(di \ln^2(i) 2^i/(\gamma^{2}m))$. This needs $k=3$ for $2^{-ik}$ to dominate $2^{2i}$ in $\sum_{i=0}^\infty 2^{-ik} 2^{2i}$ from ~\cref{eq:infsum}, whereas it suffices for us with $k=2$ since we only need to bound $\sum_{i=0}^\infty 2^{-ik} i \ln^2(i) 2^{i}$, leading to the more natural Majority-of-3 instead of Majority-of-5.

\paragraph{Organization of Paper.}
The following sections are organized as follows. In~\cref{sec:upperbound} we prove the margin generalization bound in \cref{eq:proofsketch2}. In~\cref{sec:cover} we prove the bound on the covering number of the clipped function class $\dlh_{\left\lceil\gamma \right\rceil}$.  In~\cref{sec:finalbound}, we put together the results from~\cref{sec:upperbound} and~\cref{sec:cover} to prove the main margin generalization bound in~\cref{thm:mainintro}. In~\cref{sec:mainmajoritythree} we prove the main result on Majority-of-3 in \cref{cor:maj3intro}.  

\section{Upper bound}\label{sec:upperbound}
In this section we prove
\cref{lem:marginbound}, which is used in a
union bound over suitable $ \gamma_{0},\gamma_{1} $ and $ \tau_{0} $
and $ \tau_{1},$ giving the margin bound in~\cref{thm:mainintro} for all $ f\in\dlh $ and margins $ 0<\gamma\leq1.$ We now state \cite{lem:marginbound}.
 \begin{lemma}\label{lem:marginbound}
    For a hypothesis class $ \cH \subseteq \{  -1,1\}^{\cX} $, margin thresholds  $ 0<\gamma_{0}\leq \gamma_{1}\leq 1 $, error thresholds $ 0\leq\tau_{0}\leq \tau_{1} $, failure parameter $0<\delta<1 $, we have that
    \begin{align*}
        &\p_{\rS\sim\cD^{m}}\negmedspace\left[
        \exists \gamma \negmedspace\in\negmedspace \left[\gamma_{0},\gamma_{1}\right]
        ,\exists f\negmedspace\in\negmedspace \dlh\negmedspace: 
        \ls_{\rS}^{\gamma}(f)\negmedspace\in \negmedspace[\tau_{0},\tau_{1}] 
        ,\ls_{\cD}(f) \geq \tau_{1} 
        + \cc\left(\negmedspace\sqrt{\frac{\tau_{1}\cdot 2 \ln{\left(\frac{e}{\delta} \right)}}{m}} 
        +\frac{2\ln{\left(\frac{e}{\delta} \right)}}{m}\negmedspace\right)
        \negmedspace\right] 
        \\
        &\leq\delta \cdot \sup_{X\in \cX^{2m}}   \cN_{\infty}(X,\dlh_{\left\lceil2\gamma_{1}\right\rceil},\frac{\gamma_{0}}{2}) .
    \end{align*}

 \end{lemma}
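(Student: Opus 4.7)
The plan is to execute the four-stage template from Section 2.1: a ghost-sample reduction, symmetrization onto a uniform partition of $\rX\sim\cD^{2m}$, an $\ell_\infty$-cover of the clipped class $\dlh_{\left\lceil 2\gamma_1 \right\rceil}$, and a one-sided Chernoff bound without replacement on each cover element. First I would introduce an independent ghost sample $\rS'\sim\cD^m$ and reduce the lemma's probability to a two-sample analogue $\Pr_{\rS,\rS'}[\exists f,\gamma:\ls_\rS^\gamma(f)\in[\tau_0,\tau_1]\wedge \ls_{\rS'}(f)\geq \tau_1+C/2]$, where $C=\cc(\sqrt{2\tau_1\ln(e/\delta)/m}+2\ln(e/\delta)/m)$ is the lemma's deviation threshold, at the cost of a factor $2$. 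The factor is justified by observing that for any fixed $f$ with $\ls_\cD(f)\geq \tau_1+C$, the median of the binomial $m\ls_{\rS'}(f)$ is within $1$ of its mean, so $\Pr_{\rS'}[\ls_{\rS'}(f)\geq \tau_1+C/2]\geq 1/2$; the factor $2$ is absorbed into $\cc=64$ at the end.

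Next I would symmetrize by writing $\rX=\rS\cup\rS'\sim\cD^{2m}$: conditional on $\rX$, the ordered pair $(\rS,\rS')$ is a uniformly random equipartition. Fixing a realization $X$, let $\Net$ be a $\gamma_0/2$ $\ell_\infty$-cover of $\dlh_{\left\lceil 2\gamma_1 \right\rceil}$ on $X$ of minimal cardinality $\cN_\infty(X,\dlh_{\left\lceil 2\gamma_1 \right\rceil},\gamma_0/2)$. A short case analysis shows that clipping at $2\gamma_1$ preserves every margin indicator $\ind[yf(x)\leq\gamma]$ for $\gamma\in[0,\gamma_1]$: whenever $|f(x)|>2\gamma_1$, the clipped value $yf_{\left\lceil 2\gamma_1 \right\rceil}(x)=\pm 2\gamma_1$ sits on the same side of $\gamma$ as $yf(x)$ because $2\gamma_1>\gamma$ and $-2\gamma_1\leq 0\leq\gamma$. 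Hence $\ls_\rS^\gamma(f)=\ls_\rS^\gamma(f_{\left\lceil 2\gamma_1 \right\rceil})$ for $\gamma\in[\gamma_0,\gamma_1]$ and $\ls_{\rS'}(f)=\ls_{\rS'}(f_{\left\lceil 2\gamma_1 \right\rceil})$. Choosing $\hat f\in\Net$ with $\max_{x\in X}|f_{\left\lceil 2\gamma_1 \right\rceil}(x)-\hat f(x)|\leq \gamma_0/2$, the margin triangle inequality from Section 2.1 gives $\ls_\rS^{\gamma_0/2}(\hat f)\leq \ls_\rS^\gamma(f)\leq \tau_1$ and $\ls_{\rS'}^{\gamma_0/2}(\hat f)\geq \ls_{\rS'}(f)\geq \tau_1+C/2$, so the event over $\dlh$ is contained in the corresponding event over the finite set $\Net$ with both margin thresholds set to $\gamma_0/2$.

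For the final stage, I would fix $X$ and $\hat f\in\Net$, let $\mu$ denote the fraction of $X$ on which $\hat f$ has margin at most $\gamma_0/2$, and observe that $\ls_\rS^{\gamma_0/2}(\hat f)+\ls_{\rS'}^{\gamma_0/2}(\hat f)=2\mu$ while $m\ls_\rS^{\gamma_0/2}(\hat f)$ is hypergeometric with mean $m\mu$. A one-sided multiplicative Chernoff bound for sampling without replacement yields, with probability at least $1-\delta$, that $\ls_\rS^{\gamma_0/2}(\hat f)\geq \mu-\sqrt{2\mu\ln(e/\delta)/m}$. Substituting $2\mu=\ls_\rS^{\gamma_0/2}(\hat f)+\ls_{\rS'}^{\gamma_0/2}(\hat f)$ and rearranging exactly as in the proof sketch produces $\ls_{\rS'}^{\gamma_0/2}(\hat f)\leq \ls_\rS^{\gamma_0/2}(\hat f)+O(\sqrt{\ls_\rS^{\gamma_0/2}(\hat f)\ln(e/\delta)/m}+\ln(e/\delta)/m)$; picking $\cc=64$ makes these hidden constants fit comfortably inside $C/2$. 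A union bound over $\hat f\in\Net$ and expectation over $\rX$ then delivers the stated bound $\delta\cdot\sup_X\cN_\infty(X,\dlh_{\left\lceil 2\gamma_1 \right\rceil},\gamma_0/2)$. The hardest part will be the quantitative bookkeeping that threads the Chernoff output through the $\ls_\rS+\ls_{\rS'}=2\mu$ rearrangement and recovers the statement's exact form $\tau_1+\cc(\sqrt{2\tau_1\ln(e/\delta)/m}+2\ln(e/\delta)/m)$ with a single universal constant $\cc=64$ that also absorbs the factor $2$ from the ghost-sample step.
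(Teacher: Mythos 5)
Your plan reproduces the paper's proof structure step for step: a factor-2 ghost-sample reduction, symmetrization via drawing $\rS,\rS'$ without replacement from $\rX\sim\cD^{2m}$, a $\gamma_0/2$ $\ell_\infty$-cover of the clipped class $\dlh_{\left\lceil 2\gamma_1\right\rceil}$ on $X$ (with the same observation that clipping at $2\gamma_1$ preserves the indicators $\ind\{yf(x)\leq\gamma\}$ for $\gamma\in\{0\}\cup[\gamma_0,\gamma_1]$), a per-cover-element Chernoff-without-replacement bound around $\mu=(\ls_\rS^{\gamma_0/2}+\ls_{\rS'}^{\gamma_0/2})/2$, and a union bound over the cover. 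The one variation is your justification of the factor-2 ghost reduction via the median-of-a-binomial fact, which does go through here because the deviation $C/2=32\beta\geq 1/m$; the paper instead uses a second multiplicative Chernoff bound to get $\ls_{\rS'}(f)\geq\tau_1+32\beta$ with probability at least $1-\delta/e$, a slightly sharper estimate that leaves a bit more slack for the final constant accounting you rightly flag as the delicate part.
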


 \begin{proof}
    We first notice that if $ \tau_{1}>1 $ then we are done by $
    \ls_{\cD}(f)\leq 1,$ thus we assume for the remaining of the proof
    that $ \tau_{1}\leq 1.$ For ease of notation, let $
    \beta=\left(\sqrt{\tau_{1}\cdot 2 \ln{\left(\frac{e}{\delta}
          \right)}/m} +2\ln{\left(\frac{e}{\delta} \right)}/m\right)$ in the following.   
    We start by showing that
    \begin{align}\label{eq:marginbound-1}
        &\p_{\rS\sim\cD^{m} }\left[
        \exists \gamma \in \left[\gamma_{0},\gamma_{1}\right],
         \exists f \in \dlh: 
         \ls_{\rS}^{\gamma}(f)\in [\tau_{0},\tau_{1}] ,\ls_{\cD}(f) \geq \tau+64\beta\right] 
        \\
        &\leq 2
        \p_{\rS,\rS'\sim\cD^{m}}\left[\exists \gamma \in \left[\gamma_{0},\gamma_{1}\right],\exists f\in \dlh: \ls_{\rS}^{\gamma}(f)\in [\tau_{0},\tau_{1}] ,\ls_{\rS'}(f) \geq \tau_{1} + 32\beta\right]\nonumber.
    \end{align}
    To this end, we first notice that if $ f\in \dlh $ is such that $ \ls_{\cD}(f)> \frac{2\ln{\left(\frac{e}{\delta} \right)}}{m} $, then 
   by Chernoff, we have that 
   \begin{align*}
    \p_{\rS'}\left[\ls_{\rS'}(f)\leq \left(1-\sqrt{\frac{2\ln{\left(\frac{e}{\delta} \right)}}{m\ls_{\cD}(f)}}\right)\ls_{\cD}(f)\right]\nonumber
    \leq
     \exp{\left(-\frac{2\ln{\left(\frac{e}{\delta} \right)}}{2} \right)}\nonumber
     \leq  \frac{\delta}{e}.\nonumber
   \end{align*}
   This implies that with probability at least $ 1-\frac{\delta}{e}$ over $ \rS' $ , we have 
   \begin{align}
    \ls_{\rS'}(f)\geq \ls_{\cD}(f)-\sqrt{\frac{\ls_{\cD}(f)\cdot 2\ln{\left(\frac{e}{\delta} \right)}}{m}}.\nonumber
   \end{align} 
   Now,  for $ a>0,$  $ x-\sqrt{ax} $ is increasing for $ x\geq a/4,$ since it has derivative $ 1-\frac{a}{2\sqrt{ax}}$. Thus, for  $\ls_{\cD}(f) \geq\tau_{1}+64\beta =\tau_{1} +\cc\left(\sqrt{\frac{\tau_{1}\cdot 2 \ln{\left(\frac{e}{\delta} \right)}}{m}} +\frac{2\ln{\left(\frac{e}{\delta} \right)}}{m}\right) \geq \frac{2\ln{\left(\frac{e}{\delta} \right)}}{m}$, we have by the above with $ a=\frac{2\ln{\left(\frac{e}{\delta} \right)}}{m},$ that with probability at least $ 1-\frac{\delta}{e}$ over $ \rS',$  we have
   
   \begin{align}\label{eq:marginbound-2}
    &\ls_{\rS'}(f)\geq \ls_{\cD}(f)-\sqrt{\frac{\ls_{\cD}(f)\cdot 2\ln{\left(\frac{e}{\delta} \right)}}{m}}
    \\
    &\geq \tau_{1} 
    +\cc\left(\sqrt{\frac{\tau_{1}\cdot 2 \ln{\left(\frac{e}{\delta} \right)}}{m}} +\frac{2\ln{\left(\frac{e}{\delta} \right)}}{m}\right) 
    -\sqrt{
        \frac{
        \left(
        \tau_{1} 
        +\cc\left(\sqrt{\frac{\tau_{1}\cdot 2 \ln{\left(\frac{e}{\delta} \right)}}{m}} 
        +\frac{2\ln{\left(\frac{e}{\delta} \right)}}{m}\right)\right)\cdot 2\ln{\left(\frac{e}{\delta} \right)}}{m}},
        \nonumber
   \end{align} 
where the square root term can be upper bounded using $ a+\sqrt{ab}+b\leq2(a+b)  $ and $ \sqrt{a+b} \leq \sqrt{a}+\sqrt{b}$ for $ a,b>0,$ as follows
   \begin{align}
    &\sqrt{
        \frac{
        \left(
        \tau_{1} 
        +\cc\left(\sqrt{\frac{\tau_{1}\cdot 2 \ln{\left(\frac{e}{\delta} \right)}}{m}} 
        +\frac{2\ln{\left(\frac{e}{\delta} \right)}}{m}\right)\negmedspace\right)\cdot 2\ln{\left(\frac{e}{\delta} \right)}}{m}}
        \leq 
    \sqrt{
        \frac{\cc
        \left(
        \tau_{1} 
        +\sqrt{\frac{\tau_{1}\cdot 2 \ln{\left(\frac{e}{\delta} \right)}}{m}} 
        +\frac{2\ln{\left(\frac{e}{\delta} \right)}}{m}\negmedspace\right)\cdot 2\ln{\left(\frac{e}{\delta} \right)}}{m}}\nonumber
    \\
    &\leq\sqrt{
        \frac{2\cdot \cc
        \left(
        \tau_{1} 
        +
        \frac{2\ln{\left(\frac{e}{\delta} \right)}}{m}\right)\cdot 2\ln{\left(\frac{e}{\delta} \right)}
        }{m}}\nonumber
        \leq\sqrt{2\cdot64}\left(
          \sqrt{ 
            \frac{
            \tau_{1}\cdot2 
            \ln{\left(\frac{e}{\delta} \right)}
            }{m}}
            +
            \frac{2\ln{\left(\frac{e}{\delta}\right)}
            }{m}\right)\nonumber.
   \end{align}
   Now using the above upper bound combined with \cref{eq:marginbound-2} we conclude that
   \begin{align}\label{eq:marginbound1}
    &\ls_{\rS'}(f)
        \geq
        \tau_{1} 
        +\left(\cc-\sqrt{2\cdot\cc}\right)\sqrt{\frac{\tau_{1}\cdot 2 \ln{\left(\frac{e}{\delta} \right)}}{m}} +\left(\cc-\sqrt{2\cdot\cc}\right)\frac{2\ln{\left(\frac{e}{\delta} \right)}}{m}
        \\
        &\geq \tau_{1}
        +
        32
        \left(\sqrt{\frac{\tau_{1}\cdot 2 \ln{\left(\frac{e}{\delta} \right)}}{m}} +\frac{2\ln{\left(\frac{e}{\delta} \right)}}{m}\right)=\tau_{1}+32\beta \nonumber,
   \end{align} 
where the last equality uses that $ \beta=\left(\sqrt{\tau_{1}\cdot 2 \ln{\left(\frac{e}{\delta} \right)}/m} +2\ln{\left(\frac{e}{\delta} \right)}/m\right)$.
   Thus, we conclude by the above and the law of total probability that 
\begin{align}\label{eq:marginbound16}
    &\p_{\rS,\rS'\sim\cD^{m}}\left[\exists \gamma \in \left[\gamma_{0},\gamma_{1}\right],\exists f\in \dlh: \ls_{\rS}^{\gamma}(f)\in [\tau_{0},\tau_{1}] ,\ls_{\rS'}(f) \geq \tau_{1} + 32\beta\right]
    \\
    &=
    \p_{\rS,\rS'\sim\cD^{m}} \Bigg[\exists \gamma \in \left[\gamma_{0},\gamma_{1}\right], \exists f\in \dlh: \ls_{\rS}^{\gamma}(f)\in [\tau_{0},\tau_{1}] ,\ls_{\rS'}(f) \geq \tau_{1} 
     +32\beta\nonumber
    \\
    &\mid \exists \gamma \in \left[\gamma_{0},\gamma_{1}\right],\exists f \in \dlh: \ls_{\rS}^{\gamma}(f)\in [\tau_{0},\tau_{1}] ,\ls_{\cD}(f) \geq \tau_{1}  +\cc\beta\Bigg]\nonumber 
    \\
    &\cdot
    \p_{\rS}\left[
        \exists \gamma \in \left[\gamma_{0},\gamma_{1}\right],
        \exists f \in \dlh:
     \ls_{\rS}^{\gamma}(f)\in [\tau_{0},\tau_{1}] ,\ls_{\cD}(f) \geq \tau_{1} +\cc\beta\right] \nonumber
    \\
    &\geq
     \left(1-\frac{\delta}{e}\right) \p_{\rS}\left[
    \exists \gamma \in \left[\gamma_{0},\gamma_{1}\right],
     \exists f \in \dlh: 
     \ls_{\rS}^{\gamma}(f)\in [\tau_{0},\tau_{1}] ,\ls_{\cD}(f) \geq \tau_{1} +\cc\beta\right] 
     \nonumber
   \end{align}
where the second to last inequality follows by the law of total probability and the last inequality by \cref{eq:marginbound1}, 
whereby we have shown \cref{eq:marginbound-1}.

We now bound the probability of the expression on the second line of
\cref{eq:marginbound-1}  by
\[
   \sup_{X\in \cX^{2m}}
   \cN_{\infty}(X,\dlh_{\left\lceil2\gamma_{1}\right\rceil},\frac{\gamma_{0}}{6})\frac{\delta}{e},
 \]
 which would conclude the proof.

Now consider any $ \gamma $ such that $\gamma_{0}\leq \gamma\leq \gamma_{1}.$ We now recall that for a function $ f $, we defined $ f_{\left\lceil2\gamma_{1}\right\rceil} $ as follows:
\begin{align*}
    f_{\left\lceil2\gamma_{1}\right\rceil}(x)=\begin{cases}
        2\gamma_{1} &\text{ if }f(x)> 2\gamma_{1}
        \\
        -2\gamma_{1} &\text{ if } f(x)<- 2\gamma_{1}
        \\
        f(x) &\text{ else }.
    \end{cases}
\end{align*}
Thus,  $ f_{\left\lceil2\gamma_{1}\right\rceil}(x) $ has the same sign as $ f(x) $  and is equal to $ f(x), $ when $ |f(x)|\leq 2\gamma_{1}.$  Using the above observations we conclude by $ y\in\{  -1,1\}  $ and $ \gamma\leq \gamma_{1} $   that $ \ind\{   f(x)y\leq \gamma \}=\ind\{   f_{\left\lceil2\gamma_{1}\right\rceil}(x)y\leq \gamma\}$  and   $ \ind\{   f(x)y\leq 0 \}=\ind\{   f_{\left\lceil2\gamma_{1}\right\rceil}(x)y\leq 0 \}$. Thus, we get that
\begin{align}\label{eq:marginbound19}
    &\p_{\rS,\rS'\sim\cD^{m}}\left[ \exists \gamma \in \left[\gamma_{0},\gamma_{1}\right],\exists f\in \dlh :  \ls_{\rS}^{\gamma}\left(f\right)
    \leq \tau_{1}, \ls_{\rS'}(f) \geq 
    \tau_{1} 
        + 32\beta\right]
        \\
        &=\p_{\rS,\rS'\sim\cD^{m}}\left[ \exists \gamma \in \left[\gamma_{0},\gamma_{1}\right],\exists f\in \dlh_{\left\lceil2\gamma_{1}\right\rceil} :  \ls_{\rS}^{\gamma}\left(f\right)
        \leq \tau_{1}, \ls_{\rS'}(f) \geq 
        \tau_{1} 
            + 32\beta\right]. \nonumber
\end{align}

By $ \rS $ and $ \rS' $ being i.i.d., we may see $ \rS $ and $ \rS' $ as being drawn in the following way: First, we draw $ \bar{\rS}\sim \cD^{2m} $ and then let $ \rS $ be created by drawing $ m $ times without replacement from $ \bar{\rS} $ and $ \rS'=\bar{\rS}\backslash\rS $.  We denote this way of drawing $ \rS $ and $ \rS' $ from $ \bar{\rS} $ as  $ (\rS,\rS')\sim \bar{\rS} $. We then have that,
   \begin{align}\label{eq:marginbound14}
    &\p_{\rS,\rS'\sim\cD^{m}}\left[\exists \gamma \in \left[\gamma_{0},\gamma_{1}\right],\exists f\in \dlh_{\left\lceil2\gamma_{1}\right\rceil} :  \ls_{\rS}^{\gamma}\left(f\right)
    \leq \tau_{1}, \ls_{\rS'}(f) \geq 
    \tau_{1} 
        + 32\beta\right]
    \\
    &=
    \p_{\bar{\rS}\sim \cD^{2m}}\Bigg[\p_{(\rS,\rS')\sim\bar{\rS}}\Bigg[\exists \gamma \in \left[\gamma_{0},\gamma_{1}\right],\exists f\in \dlh_{\left\lceil2\gamma_{1}\right\rceil} :  \ls_{\rS}^{\gamma}\left(f\right)
    \leq \tau_{1}, \ls_{\rS'}(f) \geq 
    \tau_{1} 
        + 32\beta\Bigg]\Bigg]\nonumber
    \\
    &\leq
    \sup_{Z\in (\cX\times\{  -1,1\}) ^{2m}}
    \p_{(\rS,\rS')\sim Z}\bigg[\exists \gamma \in \left[\gamma_{0},\gamma_{1}\right],\exists f\in \dlh_{\left\lceil2\gamma_{1}\right\rceil} :  \ls_{\rS}^{\gamma}\left(f\right)
    \leq \tau_{1}, \ls_{\rS'}(f) \geq 
    \tau_{1} 
        + 32\beta\bigg].\nonumber
   \end{align}
   Let now $ Z=(X,Y)\in (\cX\times\{  -1,1\}) ^{2m} $ and $ \Net $ be a $ \frac{\gamma_{0}}{2} $-cover for $ \dlh_{\left\lceil2\gamma_{1}\right\rceil} $ on $ X $   with respect to $ \infty $-norm of minimal size. That is, for any $ f\in \dlh_{\left\lceil2\gamma_{1}\right\rceil} $, we have
   \begin{align*}
    \min_{f'\in \Net}\max_{x\in X}  \left| f(x)-f'(x) \right| \leq \frac{\gamma_{0}}{2}.
   \end{align*}  
   and any other cover satisfying the above has size at least $|\Net|$. 
   We notice that this implies that if $ \gamma $ is such that $
   \gamma_{0}\leq\gamma\leq\gamma_{1} $ and $ f\in
   \dlh_{\left\lceil2\gamma_{1}\right\rceil} $, then for the function
   $f'$ closest to $ f $ in $N$, i.e.\ $f'=\text{argmin}_{f'\in\Net
   }\max_{x\in X}  \left| f(x)-f'(x) \right|,$ with ties broken
   arbitrarily, it holds for any $ (x,y)\in Z$ with $ f(x)y\leq0 $ that$ f'(x)y=f(x)y+(f'(x)-f(x))y\leq0+\frac{\gamma_{0}}{2} \leq \frac{\gamma_{0}}{2}$. 
   Similarly for any $ (x,y)\in Z $ such that $ f(x)y> \gamma $, we have $ f'(x)y=f(x)y+(f'(x)-f(x))y>\gamma-\frac{\gamma_{0}}{2} >\frac{\gamma_{0}}{2}$, where the last inequality follows from $ \gamma\geq \gamma_{0}$. 
   Thus, we conclude that for any $ \gamma $ such that $
   \gamma_{0}\leq \gamma\leq\gamma_{1} $ and $
   f\in\dlh_{\left\lceil2\gamma_{1}\right\rceil} $, there exists $
   f'\in \Net$ such that   $\ls_{\rS'}^{\frac{\gamma_{0}}{2}}(f')\geq \ls_{\rS'}(f)$ and that $ \ls_{\rS}^{\gamma}\left(f\right)\geq \ls_{\rS}^{\frac{\gamma_{0}}{2}}\left(f'\right) $.  
   By the above and the union bound, we conclude that
   \begin{align}\nonumber
    &\p_{\rS,\rS'\sim Z}\bigg[\exists \gamma \in \left[\gamma_{0},\gamma_{1}\right],\exists f\in \dlh_{\left\lceil2\gamma_{1}\right\rceil} :  \ls_{\rS}^{\gamma}\left(f\right)
    \leq \tau_{1}, \ls_{\rS'}(f) \geq 
    \tau_{1} 
        + 32\beta\bigg]
    \\
    &\leq
    \p_{\rS,\rS'\sim Z}\bigg[\exists f\in N :  \ls_{\rS}^{\frac{\gamma_{0}}{2}}\left(f\right)
    \leq \tau_{1}, \ls_{\rS'}^{\frac{\gamma_{0}}{2}}(f) \geq 
    \tau_{1} 
        + 32\beta\bigg]\nonumber
    \\
    &\leq
    \sum_{f\in \Net}
    \p_{\rS,\rS'\sim Z}\bigg[  \ls_{\rS}^{\frac{\gamma_{0}}{2}}\left(f\right)
    \leq \tau_{1}, \ls_{\rS'}^{\frac{\gamma_{0}}{2}}(f) \geq 
    \tau_{1} 
        + 32\beta\bigg]. \label{eq:marginbound13}
   \end{align}
   We now show that for each $ f\in \Net $, each term/probability in the sum is at most $ \frac{\delta}{e}, $ so the sum is at most $ \cN_{\infty}(X,\dlh_{\left\lceil2\gamma_{1}\right\rceil},\frac{\gamma_{0}}{2}) \frac{\delta}{e} $. Which implies by \cref{eq:marginbound14}, that \cref{eq:marginbound19} is bounded by $\sup_{X\in \cX^{2m}}   \cN_{\infty}(X,\dlh_{\left\lceil2\gamma_{1}\right\rceil},\frac{\gamma_{0}}{2}) \frac{\delta}{e} $ as claimed below \cref{eq:marginbound16} and would conclude the proof. To the end of showing the above let  $ f\in \Net $ for now.

   Recall that $ \beta=\left(\sqrt{\frac{\tau_{1}\cdot 2 \ln{\left(\frac{e}{\delta} \right)}}{m}} 
+\frac{2\ln{\left(\frac{e}{\delta} \right)}}{m}\right).$ Let $ 2\mu $  denote 2 times the fraction of points in $ Z $ where $ f $ has less than  $ \gamma_{0}/2 $-margin, i.e.,    
$$ 2\mu= \ls_{\rS}^{\frac{\gamma_{0}}{2}}(f)+\ls_{\rS'}^{\frac{\gamma_{0}}{2}}(f)= \left| \{ (x,y)\in Z: f(x)y\leq \frac{\gamma_0}{2} \} \right| /m. $$
We notice that $ \mu $ is the expectation of  $  \ls_{\rS}^{\frac{\gamma_{0}}{2}}(f)$ and that for the probability in \cref{eq:marginbound13} to be nonzero, it must be the case that $ \mu \geq 32\frac{\ln{\left(\frac{e}{\delta} \right)}}{m} $. Thus, we assume this is the case.
We notice that if $ \mu > \frac{2\ln{\left(\frac{e}{\delta} \right)}}{m}$, we have by Chernoff (which due to \cite{withreplacementchernoff}[see section 6] also holds when sampling without replacement) that  
\begin{align*}
    \p_{\rS,\rS'\sim Z}\left[\ls_{\rS}^{\frac{\gamma_{0}}{2}}(f)\leq \left(1-\sqrt{\frac{2\ln{\left(\frac{e}{\delta} \right)}}{\mu m}}\right)\mu \right]
    \leq  \exp{\left(-\frac{2\ln{\left(\frac{e}{\delta} \right)}\mu m}{2\mu m} \right)}\leq  \frac{\delta}{e}
\end{align*}
Thus, with probability at least $ 1-\frac{\delta}{e} $ over $ \rS,\rS' $, we have by definition of $ \mu $ that
\begin{align}\label{eq:marginbound7}
    &\ls_{\rS}^{\frac{\gamma_{0}}{2}}(f)
    \geq 
    \mu-\sqrt{\frac{\mu 2 \ln{\left(\frac{e}{\delta} \right)}}{m}}
    =
    \frac{\ls_{\rS}^{\frac{\gamma_{0}}{2}}(f)+\ls_{\rS'}^{\frac{\gamma_{0}}{2}}(f)}{2}-
\sqrt{\frac{\left(
    \ls_{\rS}^{\frac{\gamma_{0}}{2}}(f)+\ls_{\rS'}^{\frac{\gamma_{0}}{2}}(f)
    \right)\ln{\left(\frac{e}{\delta} \right)}}{m}}
    \\
    &\geq
    \frac{\ls_{\rS}^{\frac{\gamma_{0}}{2}}(f)+\ls_{\rS'}^{\frac{\gamma_{0}}{2}}(f)}{2}-
\sqrt{\frac{
    \ls_{\rS}^{\frac{\gamma_{0}}{2}}(f)
    \ln{\left(\frac{e}{\delta} \right)}}{m}}
    -
\sqrt{\frac{
    \ls_{\rS'}^{\frac{\gamma_{0}}{2}}(f)
    \ln{\left(\frac{e}{\delta} \right)}}{m}}
    \tag{by $ \sqrt{a+b}\leq \sqrt{a}+\sqrt{b} $ }
    \\
    &\Rightarrow
    \frac{\ls_{\rS}^{\frac{\gamma_{0}}{2}}(f)}{2}+
\sqrt{\frac{
    \ls_{\rS}^{\frac{\gamma_{0}}{2}}(f)
    \ln{\left(\frac{e}{\delta} \right)}}{m}}
\geq
\frac{\ls_{\rS'}^{\frac{\gamma_{0}}{2}}(f)}{2}-
\sqrt{\frac{
    \ls_{\rS'}^{\frac{\gamma_{0}}{2}}(f)
    \ln{\left(\frac{e}{\delta} \right)}}{m}}.
    \tag{by rearrangement}
\end{align}
Recall that $ \beta=\left(\sqrt{\frac{\tau_{1}\cdot 2 \ln{\left(\frac{e}{\delta} \right)}}{m}} 
+\frac{2\ln{\left(\frac{e}{\delta} \right)}}{m}\right).$ We now show
that if $
\ls_{\rS'}^{\frac{\gamma_{0}}{2}}(f)\geq \tau_{1}+32\beta=\tau_{1} 
+ 32\Big(\sqrt{\frac{\tau_{1}\cdot 2 \ln{\left(\frac{e}{\delta} \right)}}{m}} 
+\frac{2\ln{\left(\frac{e}{\delta} \right)}}{m}\Big) $ and $ \ls_{\rS}^{\frac{\gamma_{0}}{2}}(f)
\leq  \tau_{1}$ then we have
\begin{align}\label{eq:marginbound17}
  \frac{\ls_{\rS}^{\frac{\gamma_{0}}{2}}(f)}{2}+
        \sqrt{\frac{
            \ls_{\rS}^{\frac{\gamma_{0}}{2}}(f)
            \ln{\left(\frac{e}{\delta} \right)}}{m}} < 
    \frac{\ls_{\rS'}^{\frac{\gamma_{0}}{2}}(f)}{2}-
    \sqrt{\frac{
       \ls_{\rS'}^{\frac{\gamma_{0}}{2}}(f)
        \ln{\left(\frac{e}{\delta} \right)}}{m}}.
        \end{align}
This implies that the event $
\ls_{\rS'}^{\frac{\gamma_{0}}{2}}(f)\geq\tau_{1}+32\beta$ and $ \ls_{\rS}^{\frac{\gamma_{0}}{2}}(f)
\leq  \tau_{1}$ is in the complement of the event $ \ls_{\rS}^{\frac{\gamma_{0}}{2}}(f)
\geq 
\mu-\sqrt{\frac{\mu 2 \ln{\left(\frac{e}{\delta} \right)}}{m}}$, which
we just argued happens with probability at least $ 1-\frac{\delta}{e}
$ over $ \rS ,\rS'$. Thus we now show that $
\ls_{\rS'}^{\frac{\gamma_{0}}{2}}(f)\geq \tau_{1} 
+ 32\beta $ and $ \ls_{\rS}^{\frac{\gamma_{0}}{2}}(f)
\leq  \tau_{1}$  implies \cref{eq:marginbound17}, which would
establish that each term in \cref{eq:marginbound13} is no more than $
\frac{\delta}{e} $ and thereby conclude the proof.

If $ \ls_{\rS}^{\frac{\gamma_{0}}{2}}(f)
\leq  \tau_{1}$, we have that 

\begin{align}\label{eq:marginbound5}
    &\frac{\ls_{\rS}^{\frac{\gamma_{0}}{2}}(f)}{2}+
    \sqrt{\frac{
        \ls_{\rS}^{\frac{\gamma_{0}}{2}}(f)
        \ln{\left(\frac{e}{\delta} \right)}}{m}}
        \leq 
        \frac{\tau_1}{2}
        + 
        \sqrt{\frac{\tau_{1}\ln{\left(\frac{e}{\delta} \right)}}{m}}
\end{align}
Now if
$
\ls_{\rS'}^{\frac{\gamma_{0}}{2}}(f)\geq \tau_1 + 32 \beta =  \tau_{1} 
+ 32\Big(\sqrt{\frac{\tau_{1}\cdot 2 \ln{\left(\frac{e}{\delta} \right)}}{m}} 
+\frac{2\ln{\left(\frac{e}{\delta} \right)}}{m}\Big) $  
and using that we argued earlier that $ x-\sqrt{ax} $ is increasing for $ x\geq \frac{a}{4} $, we get that 
\begin{align}\label{eq:marginbound6}
&\frac{\ls_{\rS'}^{\frac{\gamma_{0}}{2}}(f)}{2}-
\sqrt{\frac{
    \ls_{\rS'}^{\frac{\gamma_{0}}{2}}(f)
    \ln{\left(\frac{e}{\delta} \right)}}{m}} 
\\
&\geq
\frac{\tau_1}{2}
+ 16\Big(\sqrt{\frac{\tau_{1}\cdot 2 \ln{\left(\frac{e}{\delta} \right)}}{m}} 
+\frac{2\ln{\left(\frac{e}{\delta} \right)}}{m}\Big) 
-
\sqrt{\frac{\left(\tau_{1}
+ 32\Big(\sqrt{\frac{\tau_{1}\cdot 2 \ln{\left(\frac{e}{\delta} \right)}}{m}} 
+\frac{2\ln{\left(\frac{e}{\delta} \right)}}{m}\Big) 
    \right)\ln{\left(\frac{e}{\delta} \right)}}{m}}.\nonumber
\end{align}
Now upper bounding the second term  by using that $  a+\sqrt{ab}+b\leq 2(a+b)$ and $ \sqrt{a+b}\leq \sqrt{a}+\sqrt{b},$ 
\begin{align*}
&
\sqrt{\frac{\left(
\tau_{1}
+ 32\Big(\sqrt{\frac{\tau_{1}\cdot 2 \ln{\left(\frac{e}{\delta} \right)}}{m}} 
+\frac{2\ln{\left(\frac{e}{\delta} \right)}}{m}\Big) 
    \right)\ln{\left(\frac{e}{\delta} \right)}}{m}}
\leq
\sqrt{\frac{32\left(
\tau_{1}
+ \sqrt{\frac{\tau_{1}\cdot 2 \ln{\left(\frac{e}{\delta} \right)}}{m}} 
+\frac{2\ln{\left(\frac{e}{\delta} \right)}}{m} 
    \right)\ln{\left(\frac{e}{\delta} \right)}}{m}}
\\
&\leq
\sqrt{\frac{64\left(
\tau_{1}
+\frac{2\ln{\left(\frac{e}{\delta} \right)}}{m} 
    \right)\ln{\left(\frac{e}{\delta} \right)}}{m}}
\leq\sqrt{64}
\sqrt{\frac{
\tau_{1}\ln{\left(\frac{e}{\delta} \right)}}{m}}
+\sqrt{128}
\frac{ 
    \ln{\left(\frac{e}{\delta} \right)}}{m},
\end{align*}
we conclude from \cref{eq:marginbound6} that 
\begin{align}\label{eq:marginbound8}
    \frac{\ls_{\rS'}^{\frac{\gamma_{0}}{2}}(f)}{2}-
    \sqrt{\frac{
        \ls_{\rS'}^{\frac{\gamma_{0}}{2}}(f)
        \ln{\left(\frac{e}{\delta} \right)}}{m}}  \geq \frac{\tau_{1}}{2}+(16\sqrt{2}-\sqrt{64})\sqrt{\frac{\tau_{1}\cdot \ln{\left(\frac{e}{\delta} \right)}}{m}} 
        +(32-\sqrt{128})\frac{\ln{\left(\frac{e}{\delta} \right)}}{m}.
\end{align}
Thus, combining \cref{eq:marginbound5} and \cref{eq:marginbound8} and using that $ 16\sqrt{2}-\sqrt{64} \geq 14$ and $ 32-\sqrt{128}\geq 20,$ we conclude that if $
\ls_{\rS'}^{\frac{\gamma_{0}}{2}}(f)\geq\tau_{1}+32\beta= \tau_{1} 
+ 32\Big(\sqrt{\frac{\tau_{1}\cdot 2 \ln{\left(\frac{e}{\delta} \right)}}{m}} 
+\frac{2\ln{\left(\frac{e}{\delta} \right)}}{m}\Big) $ and $ \ls_{\rS}^{\frac{\gamma_{0}}{2}}(f)
\leq  \tau_{1}$, then it implies that 
\begin{align*}
    \frac{\ls_{\rS'}^{\frac{\gamma_{0}}{2}}(f)}{2}-
    \sqrt{\frac{
       \ls_{\rS'}^{\frac{\gamma_{0}}{2}}
        (f)\ln{\left(\frac{e}{\delta} \right)}}{m}} 
        >
        \frac{\ls_{\rS}^{\frac{\gamma_{0}}{2}}(f)}{2}+
        \sqrt{\frac{
            \ls_{\rS}^{\frac{\gamma_{0}}{2}}(f)
            \ln{\left(\frac{e}{\delta} \right)}}{m}},
        \end{align*}
which as argued earlier concludes the proof.
\end{proof}
\section{Bound on infinity cover}\label{sec:cover}

In this section, we prove the necessary result on the infinity cover of $ \Delta(\cH)_{\left\lfloor\gamma\right\rfloor} $. To this end we recall that we for a function class $ \cF $ and a point set $ X=\{ x_{1},\ldots ,x_{m} \}  \subset\cX$ of size $ m $ and precision $ \gamma $ define $ \cN_{\infty}(X,\cF,\gamma) $ to be the minimal size of a $ \gamma $-cover in infinity norm of $ \cF $ on $ X,$ i.e. the smallest size of a set of functions $ \Net $ satisfying $ \min_{f'\in \Net}\max_{x\in X} |f(x)-f'(x)|\leq \gamma .$ Furthermore, we will use $ \Ln(x)=\ln(\max(x,e)) $ for the truncated natural logarithm. We now state \cref{lem:coverfinal} and \cref{cor:coverfinal}, and show how \cref{lem:coverfinal} implies \cref{cor:coverfinal}.

\begin{lemma}\label{lem:coverfinal}
    There exist universal constants $ c'\geq c>0 $ and $ c'\geq 1 $  such that: For a point set $ X=\{ x_{1},\ldots,x_{m} \} \subseteq \cX $ of size $ m $, $ \cH \subseteq \{-1,1\}^{\cX}$ a hypothesis class of VC-dimension $ d $, $ 0<\eps<1 $,     $ 0<\alpha<\frac{1}{2} $  and $ 0<\gamma\leq 1 $  such that $ \gamma^{2}\geq  \frac{cd}{\alpha^{2}\eps^{2}m}$, we have 
   \begin{align}
       \ln(\cN_{\infty}(X,\dlh_{\left\lceil\gamma\right\rceil} ,\alpha \gamma)) \leq
       \frac{c'd}{\alpha^{2}\eps^{2}\gamma^{2}} \Ln^{1+\eps}{\left(  \frac{8\alpha m\eps^{2} \gamma^{2}}{cd} \right)}.
   \end{align}   
\end{lemma}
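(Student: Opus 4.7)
\medskip

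\noindent\textbf{Proof plan.} The plan is to derive the bound by applying the Rudelson--Vershynin covering bound (stated earlier in the excerpt as \cite{RudelsonVershynin}, Theorem 4.4) to the rescaled class $\gamma^{-1}\cdot \dlh_{\left\lceil\gamma\right\rceil}$ and then transferring the result back to $\dlh_{\left\lceil\gamma\right\rceil}$ by undoing the rescaling. The reason we pass to the rescaled class is that every $f\in\gamma^{-1}\cdot\dlh_{\left\lceil\gamma\right\rceil}$ takes values in $[-1,1]$ and hence satisfies the boundedness precondition $\sum_{x\in X}|f(x)|/m\leq 1$ required by the Rudelson--Vershynin bound. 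Crucially, the cover precision shrinks from $\alpha\gamma$ (for $\dlh_{\left\lceil\gamma\right\rceil}$) to $\alpha$ (for $\gamma^{-1}\dlh_{\left\lceil\gamma\right\rceil}$), and a $\alpha$-cover of $\gamma^{-1}\dlh_{\left\lceil\gamma\right\rceil}$ scaled by $\gamma$ yields a $\alpha\gamma$-cover of $\dlh_{\left\lceil\gamma\right\rceil}$ of the same cardinality.

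\medskip

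\noindent First, I would invoke the Rudelson--Vershynin bound on $\gamma^{-1}\cdot\dlh_{\left\lceil\gamma\right\rceil}$ with precision $\alpha$ and parameter $\eps$. This yields
\[
\ln \cN_{\infty}\!\bigl(X,\gamma^{-1}\cdot\dlh_{\left\lceil\gamma\right\rceil},\alpha\bigr)\;\leq\; C\cdot d^{\star}\,\Ln\!\Bigl(\tfrac{m}{d^{\star}\alpha}\Bigr)\,\Ln^{\eps}\!\Bigl(\tfrac{m}{d^{\star}}\Bigr),
\]
where $d^{\star}$ is the $c\alpha\eps$-fat-shattering dimension of $\gamma^{-1}\cdot\dlh_{\left\lceil\gamma\right\rceil}$ and $C,c>0$ are absolute constants. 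By scale invariance of fat-shattering, $d^{\star}$ equals the $c\alpha\eps\gamma$-fat-shattering dimension of $\dlh_{\left\lceil\gamma\right\rceil}$. To bound this, I would repeat the Rademacher-complexity argument from the proof sketch (Equation~\eqref{eq:proofsketch10}): if $\dlh_{\left\lceil\gamma\right\rceil}$ $c\alpha\eps\gamma$-shatters a set of $n$ points with some witnesses $r_1,\dots,r_n\in[-\gamma,\gamma]$, then unclipping preserves the shattering (since $|r_i|\leq\gamma$), so $\dlh$ also shatters the same set with the same witnesses. The Rademacher average of $\dlh$ on this set then equals the Rademacher average of $\cH$, which is at most $c'\sqrt{d/n}$ by the classical Dudley-style bound cited in the sketch. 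Matching against the shattering lower bound $c\alpha\eps\gamma$ on the Rademacher average gives
\[
d^{\star} \;=\; O\!\left(\frac{d}{(\alpha\eps\gamma)^{2}}\right).
\]

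\medskip

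\noindent Next, I would downscale to recover the covering number of $\dlh_{\left\lceil\gamma\right\rceil}$ itself: multiplying every function in a minimal $\alpha$-cover of $\gamma^{-1}\dlh_{\left\lceil\gamma\right\rceil}$ by $\gamma$ produces an $\alpha\gamma$-cover of $\dlh_{\left\lceil\gamma\right\rceil}$ of the same size. Substituting the bound on $d^{\star}$ into the Rudelson--Vershynin estimate gives something of the form
\[
\ln\cN_{\infty}(X,\dlh_{\left\lceil\gamma\right\rceil},\alpha\gamma)\;\leq\;\frac{C'\,d}{(\alpha\eps\gamma)^{2}}\,\Ln\!\Bigl(\tfrac{m(\alpha\eps\gamma)^{2}}{d\alpha}\Bigr)\,\Ln^{\eps}\!\Bigl(\tfrac{m(\alpha\eps\gamma)^{2}}{d}\Bigr),
\]
which I would then massage using monotonicity of $\Ln$ and the hypothesis $\gamma^2\geq cd/(\alpha^2\eps^2 m)$ (which ensures the argument of $\Ln$ is at least a constant multiple of $e$, so $\Ln$ behaves as $\ln$) to collect both logarithmic factors into a single $\Ln^{1+\eps}(8\alpha m\eps^{2}\gamma^{2}/(cd))$ term, matching the stated form.

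\medskip

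\noindent The main obstacle is the careful bookkeeping in the last step: combining two logarithms of slightly different arguments ($m/(d^\star \alpha)$ vs $m/d^\star$) into a single $\Ln^{1+\eps}$ with the exact constant $8\alpha m\eps^2\gamma^2/(cd)$, while making sure the hypothesis $\gamma^2\geq cd/(\alpha^2\eps^2 m)$ is strong enough to allow replacing $\ln$ by $\Ln$ throughout (so that the inequality holds even when some arguments degenerate) and strong enough to absorb various constant factors into the universal constants $c,c'$. The fat-shattering bound and the scale-invariance step are essentially routine once the clipping observation (that shattering witnesses lie in $[-\gamma,\gamma]$) is in place, and the Rudelson--Vershynin bound is applied as a black box.
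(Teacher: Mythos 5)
Your proposal matches the paper's proof: the paper factors the argument into two intermediate lemmas (a covering bound for $\dlh_{\left\lceil\gamma\right\rceil}$ in terms of its fat-shattering dimension, obtained by applying Rudelson--Vershynin to the $\gamma^{-1}$-rescaled clipped class and rescaling back, and a Rademacher-complexity bound on that fat-shattering dimension), but the sequence of ideas is identical to what you describe. The one subtlety you gloss over is that substituting the upper bound $d^{\star}=O\bigl(d/(\alpha\eps\gamma)^{2}\bigr)$ into the expression $d^{\star}\Ln^{1+\eps}\bigl(2m/(d^{\star}\alpha)\bigr)$ --- where $d^{\star}$ appears both as a prefactor and inside the logarithm with opposite effects --- needs the monotonicity of $x\mapsto x\ln^{1+\eps}(a/x)$ (increasing precisely for $x\leq a/e^{1+\eps}$), not merely monotonicity of $\Ln$; the hypothesis $\gamma^{2}\geq cd/(\alpha^{2}\eps^{2}m)$ is used in the paper exactly to verify that $d^{\star}$ sits in the increasing regime, which is what licenses that substitution.
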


\begin{corollary}\label{cor:coverfinal}
    There exist universal constants $ c'\geq c>0 $ and $ c'\geq 1 $  such that: For a point set $ X=\{ x_{1},\ldots,x_{m} \} \subseteq \cX $ of size $ m $, $ \cH \subseteq \{-1,1\}^{\cX}$ a hypothesis class of VC-dimension $ d $,     $ 0<\alpha<\frac{1}{2} $  and $ 0<\gamma\leq 1 $  such that $ \gamma^{2}\geq \frac{4cd\Ln^{2}(\Ln{ ( \frac{8\alpha m \gamma^{2}}{cd}) }) }{\alpha^{2}m}$, we have 
   \begin{align}
    \ln(\cN_{\infty}(X,\dlh_{\left\lceil\gamma\right\rceil} ,\alpha \gamma)) \leq \frac{c'd}{\alpha^{2}\gamma^{2}}\Ln^{2}\left(\Ln{\left(  \frac{8\alpha m \gamma^{2}}{cd}\right)}\right) \Ln{\left(  \frac{8\alpha m\gamma^{2}}{cd} \right)}=\frac{c'd}{\alpha^{2}\gamma^{2}}\func\left( \frac{8\alpha m \gamma^{2}}{cd}\right)
   \end{align}
   where $ \func(x)=\Ln^{2}(\Ln(x))\Ln(x).$     
\end{corollary}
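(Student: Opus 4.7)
The plan is to derive Corollary \ref{cor:coverfinal} by applying Lemma \ref{lem:coverfinal} with a carefully chosen value of its free parameter $\eps$. Writing $L := \Ln(8\alpha m \gamma^{2}/(cd))$ for brevity, the natural choice is $\eps := 1/(2\Ln(L))$. This choice is engineered so that the $\eps^{-2}$ factor appearing in the lemma's bound reproduces the $\Ln^{2}(\Ln(\cdot))$ factor in the corollary's bound, while keeping $\Ln^{1+\eps}(\cdot)$ comparable to $\Ln(\cdot)$ up to a constant.

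Next I would verify the two hypotheses needed to invoke Lemma \ref{lem:coverfinal}. Since $\Ln(\cdot) \geq 1$ by definition of the truncated logarithm, $\eps \leq 1/2 < 1$, so the constraint $0<\eps<1/2$ holds. Moreover $1/\eps^{2} = 4\Ln^{2}(L)$, so the lemma's precondition $\gamma^{2} \geq cd/(\alpha^{2}\eps^{2}m)$ becomes exactly the corollary's hypothesis $\gamma^{2} \geq 4cd\Ln^{2}(L)/(\alpha^{2}m)$.

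Invoking Lemma \ref{lem:coverfinal} then yields
\[
  \ln \cN_{\infty}(X, \dlh_{\left\lceil\gamma\right\rceil}, \alpha\gamma) \leq \frac{4 c' d \, \Ln^{2}(L)}{\alpha^{2}\gamma^{2}} \cdot \Ln^{1+\eps}\!\left(\frac{8\alpha m \eps^{2}\gamma^{2}}{cd}\right).
\]
Because $\eps^{2}\leq 1$ and $\Ln$ is monotone, the argument of the outer $\Ln$ is at most $8\alpha m\gamma^{2}/(cd)$, whence $\Ln^{1+\eps}(\cdot) \leq L \cdot L^{\eps}$. It remains to check that $L^{\eps} = O(1)$: since $\ln L \leq \Ln L$, we have $L^{\eps} = \exp(\eps \ln L) \leq \exp(\eps \Ln L) = e^{1/2}$. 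Enlarging the universal constant $c'$ inherited from the lemma by the resulting factor $4 e^{1/2}$ produces the claimed bound $(c'd/(\alpha^{2}\gamma^{2})) \Ln^{2}(L) \cdot L = (c'd/(\alpha^{2}\gamma^{2}))\func(8\alpha m \gamma^{2}/(cd))$.

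The only subtle point, and thus the main obstacle, is the bookkeeping around the truncated logarithm when estimating $L^{\eps}$: one must use the inequality $\ln L \leq \Ln L$ rather than equality, because $L$ itself may lie in the interval $[1,e)$ where the truncation is active and $\ln L < 1 = \Ln L$. Once this case is handled uniformly, the remainder of the argument is an arithmetic simplification of the lemma.
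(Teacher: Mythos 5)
Your proposal is correct and takes essentially the same approach as the paper: you make the identical choice $\eps = 1/(2\Ln(\Ln(8\alpha m\gamma^{2}/(cd))))$, verify the precondition of Lemma~\ref{lem:coverfinal} the same way, and bound the residual $\Ln^{\eps}(\cdot)$ factor by a constant via $\exp(\eps\Ln(\cdot)) = e^{1/2}$ (the paper uses the slightly looser bound $e$, which is immaterial since both are absorbed into $c'$).
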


\begin{proof}[Proof of \cref{cor:coverfinal}]
    Let $ c',c $ be the constants from \cref{lem:coverfinal}, and furthermore consider  $ \gamma^{2}\geq \frac{4cd\Ln^{2}(\Ln{ ( \frac{8\alpha m \gamma^{2}}{cd}) }) }{\alpha^{2}m}.$ To the end of invoking \cref{lem:coverfinal}, notice that for $0< \eps=1/(2\Ln(\Ln{(  \frac{8\alpha m \gamma^{2}}{cd})}) )\leq 1/2,$ we have that $\frac{cd}{\alpha^{2}\eps^{2}m}= \frac{4cd\Ln^{2}(\Ln{ ( \frac{8\alpha m \gamma^{2}}{cd}) }) }{\alpha^{2}m}\leq \gamma^{2},$ and we thus may invoke \cref{lem:coverfinal} with this $ \eps.$   We notice that for this $ \eps $ we have that 
    \begin{align*}
        \Ln^{\eps}{\left(  \frac{8\alpha m\eps^{2} \gamma^{2}}{cd} \right)}= \exp{\left(\eps\ln{\left( \Ln{\left(  \frac{8\alpha m\eps^{2} \gamma^{2}}{cd} \right)}\right)} \right)} \leq  e, 
    \end{align*}
    where we in the inequality used the value of $ \eps $ and that    $ \Ln{(  \frac{8\alpha m\eps^{2} \gamma^{2}}{cd} )}\leq\Ln{(  \frac{8\alpha m \gamma^{2}}{cd} )}$ and $ \ln $ being an increasing function.    
    Thus we conclude that
    \begin{align}
        \ln(\cN_{\infty}(X,\dlh_{\left\lceil\gamma\right\rceil} ,\alpha \gamma)) \leq
        \frac{c'd}{\alpha^{2}\eps^{2}\gamma^{2}} \Ln^{1+\eps}{\left(  \frac{8\alpha m\eps^{2}  \gamma^{2}}{cd} \right)}
        \\ \leq \frac{4ec'd}{\alpha^{2}\gamma^{2}}\Ln^{2}(\Ln{(  \frac{8\alpha m \gamma^{2}}{cd})}) \Ln{\left(  \frac{8\alpha m\gamma^{2}}{cd} \right)}.
    \end{align}
    whereby \cref{cor:coverfinal} follows from redefining $ c' $ to be $ 4ec'.$     
\end{proof}

We now move on to prove \cref{lem:coverfinal}. To prove \cref{lem:coverfinal}, we need the following two lemmas. The first lemma bounds the infinity cover of $ \dlh_{\left\lceil\gamma \right\rceil} $ in terms of the fat shattering dimension of $ \dlh_{\left\lceil\gamma \right\rceil} $.

\begin{lemma}\label{lem:covering}
    There exists universal constants  $ \check{C}\geq 1$ and $\check{c}>0 $ such that: For $ X =\{  x_{1},\ldots,x_{m}\} \subseteq \cX$ a point set of size $ m $, $ \cH \subseteq \{-1,1\}^{\cX}$ a hypothesis class,  $ 0<\eps<1 $, $ 0<\alpha<\frac{1}{2} $  and $ 0<\gamma\leq 1 $ if $\fat_{\check{c} \alpha\eps  \gamma}(\dlh_{\left\lceil\gamma\right\rceil})\leq m $, then we have, for $ d= \fat_{\check{c} \alpha\eps  \gamma}(\dlh_{\left\lceil\gamma\right\rceil}),$   that 
    \begin{align}
        \ln(\cN_{\infty}(X,\dlh_{\left\lceil\gamma\right\rceil} ,\alpha \gamma)) \leq\check{C} 
        d 
        \ln^{1+\eps}\left(\frac{2m}{d  \alpha}\right).
    \end{align}   
\end{lemma}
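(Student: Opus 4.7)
\textbf{Proof proposal for \cref{lem:covering}.}

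The plan is to reduce to the Rudelson--Vershynin covering bound (Theorem 4.4 of \cite{RudelsonVershynin}, as stated in the proof sketch) by appropriately rescaling the clipped class so that the required mean‑value hypothesis is met, and then translate the result back via the scale invariance of the fat shattering dimension.

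First, introduce the rescaled class $\cF := \gamma^{-1} \cdot \dlh_{\left\lceil\gamma\right\rceil} = \{\gamma^{-1} f : f \in \dlh_{\left\lceil\gamma\right\rceil}\}$. Since every $f \in \dlh_{\left\lceil\gamma\right\rceil}$ satisfies $|f(x)| \leq \gamma$ for all $x$, every $g \in \cF$ satisfies $|g(x)| \leq 1$, and in particular $\tfrac{1}{m}\sum_{x \in X} |g(x)| \leq 1$. This is precisely the hypothesis needed in the Rudelson--Vershynin bound. Applying that theorem to $\cF$ at precision $\alpha$ (and the given $\eps$) yields a universal constant $\check c > 0$ such that, writing $d_\cF := \fat_{\check c \alpha \eps}(\cF)$ and assuming $d_\cF \leq m$, we obtain
\begin{align*}
  \ln \cN_\infty(X, \cF, \alpha) \;=\; O\!\left( d_\cF \,\ln\!\left(\tfrac{m}{d_\cF\, \alpha}\right) \ln^{\eps}\!\left(\tfrac{m}{d_\cF}\right) \right).
\end{align*}

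Second, I will use the scale invariance of the fat shattering dimension: $g \in \cF$ $\beta$-shatters a set with witnesses $r_i$ if and only if the corresponding $f = \gamma g \in \dlh_{\left\lceil\gamma\right\rceil}$ $\gamma\beta$-shatters the same set with witnesses $\gamma r_i$. Hence $d_\cF = \fat_{\check c \alpha \eps}(\cF) = \fat_{\check c \alpha \eps \gamma}(\dlh_{\left\lceil\gamma\right\rceil}) = d$, where the last equality is the definition of $d$ in the lemma, and the assumption $d \leq m$ precisely matches the hypothesis used above. Rescaling also converts covers: if $N$ is a minimal $\alpha$-cover of $\cF$ on $X$ in $\ell_\infty$, then $\gamma N := \{\gamma g : g \in N\}$ is an $\alpha\gamma$-cover of $\dlh_{\left\lceil\gamma\right\rceil}$ on $X$ of the same size, so $\cN_\infty(X, \dlh_{\left\lceil\gamma\right\rceil}, \alpha\gamma) \leq \cN_\infty(X, \cF, \alpha)$.

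Third, I combine the two logarithmic factors into a single $\ln^{1+\eps}$ by enlarging the arguments. Since $\alpha \leq 1/2 < 1$ and $d \leq m$, both $m/d$ and $m/(d\alpha)$ are at least $1$, and $m/d \leq m/(d\alpha) \leq 2m/(d\alpha)$. Therefore
\begin{align*}
  \ln\!\left(\tfrac{m}{d\alpha}\right) \ln^{\eps}\!\left(\tfrac{m}{d}\right) \;\leq\; \ln^{1+\eps}\!\left(\tfrac{2m}{d\alpha}\right),
\end{align*}
which, combined with the previous two displays and absorbing the constant from the $O(\cdot)$ into a universal $\check C \geq 1$, yields
\[
  \ln \cN_\infty(X, \dlh_{\left\lceil\gamma\right\rceil}, \alpha\gamma) \;\leq\; \check C\, d \,\ln^{1+\eps}\!\left(\tfrac{2m}{d\alpha}\right),
\]
as required.

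The routine obstacle is essentially bookkeeping: verifying that the clipping really produces the bound $\tfrac{1}{m}\sum_x |f(x)| \leq 1$ after scaling by $\gamma^{-1}$ (which is immediate from the definition of $\dlh_{\left\lceil\gamma\right\rceil}$), and confirming that the scale invariance of $\fat_\beta$ transports the fat shattering dimension from $\cF$ back to $\dlh_{\left\lceil\gamma\right\rceil}$ with the parameter multiplied by $\gamma$. The only genuinely delicate point is picking the constant $\check c$ to match the one produced by the Rudelson--Vershynin theorem and making sure the hypothesis $d \leq m$ in the lemma statement corresponds exactly to the applicability condition of that theorem after rescaling.
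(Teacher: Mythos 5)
Your proof is correct and follows essentially the same route as the paper's: rescale $\dlh_{\left\lceil\gamma\right\rceil}$ by $\gamma^{-1}$ to meet the $\ell_1$-normalization hypothesis of \cite{RudelsonVershynin}, invoke their Theorem~4.4, use scale invariance of $\fat_\beta$ to identify the resulting fat-shattering dimension with $\fat_{\check c\alpha\eps\gamma}(\dlh_{\left\lceil\gamma\right\rceil})$, and rescale the cover back. The only cosmetic difference is in where the factor of $2$ enters the log arguments; the substance is identical.
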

We also postpone the proof of \cref{lem:covering} and give it after the proof of \cref{lem:coverfinal}. The next lemma gives a bound on the fat shattering dimension of $ \dlh_{\left\lceil\gamma \right\rceil} $, in terms of $ \gamma $ and the VC-dimension of the hypothesis class $ \cH $. 

\begin{lemma}\label{lem:fathelper}
    There exists a universal constant $ C\geq 1 $ such that: For $1\geq \gamma >0  ,$ $ \beta>0,$ and $ \cH \subseteq \{-1,1\}^{\cX}$ a hypothesis class with VC dimension $ d $, we have that  $\fat_{\beta} (\dlh_{\left\lceil\gamma\right\rceil})\leq \frac{Cd}{\beta^{2}} $ 
\end{lemma}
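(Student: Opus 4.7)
The plan is to follow the Rademacher-complexity argument sketched around \cref{eq:proofsketch10}. Suppose $\dlh_{\left\lceil\gamma\right\rceil}$ $\beta$-shatters a point set $x_{1},\ldots,x_{n}\in\cX$ with witnessing levels $r_{1},\ldots,r_{n}$. Our goal is to conclude $n=O(d/\beta^{2})$, where $d$ is the VC-dimension of $\cH$.

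First, I would reduce the shattering of $\dlh_{\left\lceil\gamma\right\rceil}$ to shattering of the unclipped class $\dlh$ itself, using the same witnesses $r_{1},\ldots,r_{n}$. Indeed, for every $b\in\{-1,1\}^{n}$, shattering gives some $g=f_{\left\lceil\gamma\right\rceil}\in \dlh_{\left\lceil\gamma\right\rceil}$ with $b_{i}(g(x_{i})-r_{i})\geq\beta$. Since $g$ takes values in $[-\gamma,\gamma]$, each $r_{i}$ must lie in $[-\gamma+\beta,\gamma-\beta]$. A quick case analysis on whether $f(x_{i})$ is clipped (i.e.\ $|f(x_{i})|>\gamma$) then shows $b_{i}(f(x_{i})-r_{i})\geq\beta$ as well: if $b_{i}=+1$ and $g(x_{i})=\gamma$, then $f(x_{i})\geq\gamma\geq r_{i}+\beta$; if $g(x_{i})=-\gamma$, the shattering inequality would force $r_{i}\leq-\gamma-\beta$, contradicting $r_{i}\geq-\gamma+\beta$; otherwise $f(x_{i})=g(x_{i})$. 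The case $b_{i}=-1$ is symmetric. Hence $\dlh$ $\beta$-shatters $x_{1},\ldots,x_{n}$ with witnesses $r_{1},\ldots,r_{n}$.

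Next I would invoke a standard Rademacher-complexity bound. For any Rademacher vector $\sigma\sim\{-1,1\}^{n}$, by definition of $\beta$-shattering applied with $b=\sigma$, there exists $f_{\sigma}\in\dlh$ with $\sigma_{i}(f_{\sigma}(x_{i})-r_{i})\geq\beta$ for all $i$, so
\begin{align*}
\beta \leq \e_{\sigma}\!\left[\sup_{f\in\dlh}\frac{1}{n}\sum_{i=1}^{n}\sigma_{i}(f(x_{i})-r_{i})\right] = \e_{\sigma}\!\left[\sup_{f\in\dlh}\frac{1}{n}\sum_{i=1}^{n}\sigma_{i}f(x_{i})\right],
\end{align*}
where the equality uses that $-\sigma_{i}r_{i}/n$ does not depend on $f$ and has mean zero. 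Since $\sum_{i}\sigma_{i}f(x_{i})$ is linear in the convex-combination coefficients of $f=\sum_{j}\alpha_{j}h_{j}\in\dlh$, the supremum over $\dlh$ equals the supremum over the extreme points $\cH$, giving
\begin{align*}
\beta \leq \e_{\sigma}\!\left[\sup_{h\in\cH}\frac{1}{n}\sum_{i=1}^{n}\sigma_{i}h(x_{i})\right] \leq c'\sqrt{\frac{d}{n}},
\end{align*}
by the classical Dudley/VC bound on Rademacher complexity cited in the overview. Rearranging yields $n\leq (c')^{2}d/\beta^{2}$, so the result follows with $C=(c')^{2}$.

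The only non-routine step is the passage from shattering by $\dlh_{\left\lceil\gamma\right\rceil}$ to shattering by $\dlh$, since a priori clipping could destroy the witness; the case analysis above, together with the fact that $\beta$-shattering constrains the $r_{i}$'s to lie strictly inside $[-\gamma,\gamma]$, is what makes this work. The remaining pieces are standard, and the Rademacher bound for VC classes is invoked as a black box, so I do not anticipate further obstacles.
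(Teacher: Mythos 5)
Your proposal is correct and follows essentially the same route as the paper's proof: you observe that the witnesses $r_{i}$ must lie in $[-\gamma+\beta,\gamma-\beta]$, use this to transfer $\beta$-shattering from the clipped class $\dlh_{\lceil\gamma\rceil}$ to $\dlh$ with the same witnesses, pass from $\dlh$ to $\cH$ by linearity of the Rademacher average in the convex coefficients, and close with the Dudley/VC bound $O(\sqrt{d/n})$ on the Rademacher complexity. The paper treats the degenerate case $\beta>\gamma$ as an explicit first step whereas you handle it implicitly via the emptiness of $[-\gamma+\beta,\gamma-\beta]$; otherwise the arguments coincide step for step.
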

We postpone the proof of \cref{lem:fathelper} for now and give the proof of \cref{lem:coverfinal} by combining \cref{lem:covering} and \cref{lem:fathelper}.

\begin{proof}[Proof of \cref{lem:coverfinal}]
    In what follows let, $ \check{C}\geq1 $ and $ \check{c} >0$ be the universal constants from \cref{lem:covering}, and $ C \geq1$ be the universal constant from \cref{lem:fathelper}. Furthermore, let $ c $ denote the universal constant $ c=\frac{16C}{\check{c}^{2}} $ and $ c'=\max(1,\frac{16\check{C}C}{\check{c}^{2}}) $  in \cref{lem:coverfinal}. Since $ \check{C}\geq 1$ we have $ c'\geq c.$

   By \cref{lem:fathelper}, we have that $ d'=\fat_{\check{c}\alpha\eps\gamma}(\dlh_{\left\lceil\gamma\right\rceil}) $ is upper bounded by $ \frac{Cd}{\check{c}^2\alpha^2\eps^2\gamma^2}$.
   Thus, we have that $  d'=\fat_{\check{c}\alpha\eps\gamma}(\dlh_{\left\lceil\gamma\right\rceil})\leq m$, by the assumption $ \gamma^{2}\geq  \frac{cd}{\alpha^{2}\eps^{2}m}$ and $ c=\frac{16C}{\check{c}^{2}},$ and we may invoke \cref{lem:covering} to obtain 
   
   \begin{align*}
       \ln(\cN_{\infty}(X, \dlh_{\left\lceil\gamma\right\rceil}, \alpha \gamma)) \leq
       \check{C} 
       d'
       \ln^{1+\eps}\left(\frac{2m}{d' \alpha}\right).
   \end{align*}
   Next, we analyze the term $d' \ln^{1+\eps}\left(\frac{2m}{d' \alpha}\right)$. 
   We now notice that the function $ x\ln^{1+\eps}{\left(\frac{a}{x} \right)} $ for    has derivative  $\ln^{\eps}{\left(\frac{a}{x} \right)} \left(\ln{\left(\frac{a}{x} \right)}-1-\eps \right)$, thus the derivative is positive for $ \ln{\left(\frac{a}{x} \right)}\geq 1+\eps $ or equivalently $ \frac{ a }{\exp{\left(1+\eps \right)}}\geq x $, whereby we conclude that $ x\ln^{1+\eps}{\left(\frac{a}{x} \right)} $ is increasing for $ \frac{ a }{\exp{\left(1+\eps \right)}}\geq x $. Now using this with $ a=\frac{2m}{\alpha}\geq \frac{64Cd}{\check{c}^2\alpha^2\eps^2\gamma^2} $, where we used that $ \gamma^{2}\geq  \frac{cd}{\alpha^{2}\eps^{2}m}$, $ c=\frac{16C}{\check{c}^{2}} $   and   $ \alpha < \frac{1}{2} $, and having the upper bound on $ d'=\fat_{\check{c}\alpha\eps\gamma}(\dlh_{\left\lceil\gamma\right\rceil}) $ of $ \frac{Cd}{\check{c}^2\alpha^2\eps^2\gamma^2} \leq \frac{\frac{32Cd}{\check{c}^2\alpha^2\eps^2\gamma^2}}{ \exp{\left(1+\eps \right)} } $, since $  \frac{32}{\exp{\left(1+\eps \right)}}\geq 1  $, we conclude that $ d'=\fat_{\check{c}\alpha\eps\gamma}(\dlh_{\left\lceil\gamma\right\rceil})\leq \frac{Cd}{\check{c}^2\alpha^2\eps^2\gamma^2}\leq \frac{a}{ \exp{\left(1+\eps \right)} }.$ This then implies by the above argued monotonicity of $ x\ln^{1+\eps}{\left(\frac{a}{x} \right)}, $ with $ a=\frac{2m}{\alpha} $  that 
   \begin{align*}
       \check{C} 
       d'
       \ln^{1+\eps}{\left(\frac{2m}{d'\alpha} \right)}
       \leq
       \check{C}\frac{Cd}{\check{c}^2\alpha^2\eps^2\gamma^2} \ln^{1+\eps}{\left(\frac{2m}{\alpha }  \frac{\check{c}^2\alpha^2\eps^2\gamma^2}{Cd} \right)}
       \leq
       \frac{c'd}{\alpha^2\eps^{2}\gamma^{2}} \ln^{1+\eps}{\left(  \frac{8\alpha m\eps^{2}\gamma^{2}}{cd} \right)},
   \end{align*}
   where the last equivalently follows from $ c=\frac{16C}{\check{c}^{2}} $ and $ c'=\max(1,\frac{16\check{C}C}{\check{c}^{2}}) $ (we notice that if $ c' $ is not at least $1$, we could set it equal to $ 1 $ and still have an upper bound)
   which concludes the proof. 
\end{proof}

With the proof of \cref{lem:coverfinal} done, we now provide the proof of \cref{lem:fathelper}. The last part of the proof of \cref{lem:fathelper}, which lower and upper bounds the Rademacher complexity, uses an idea similar to \cite{larsen2022optimalweakstronglearning} [Lemma 9].

\begin{proof}[Proof of \cref{lem:fathelper}]
    First if $ \beta>\gamma,$ then by hypotheses in $ \dlh_{\left\lceil\gamma \right\rceil} $ attaining values in $ [-\gamma,\gamma],$ it follows that $ \fat_{\beta}(\dlh_{\left\lceil\gamma \right\rceil} )=0,$ since no hypothesis in $ \dlh_{\left\lceil\gamma \right\rceil} $ can be above $ r_{1}+\beta $  or below $r_{1}-\beta  $ for any $ r_{1}\in\mathbb{R}.$ As $ \frac{Cd}{\beta}>0 $ this proves the case $ \beta>\gamma,$ thus, we consider for now the case that $ 0<\beta\leq \gamma. $      
    Let $ x_1,\ldots,x_{d'} $ and $ r\in [\beta-\gamma,\gamma-\beta]^{d'} $ be $ \beta $-shattered by $ \dlh_{\left\lceil\gamma\right\rceil} $. We note that we may assume that $ r $ is in  $ [\beta-\gamma,\gamma-\beta]^{d'}, $ as $ [-\gamma,\gamma] $  contains the image of $ \dlh_{\left\lceil\gamma \right\rceil} $. If an entry of $ r_{i} $ is outside the interval $ [\beta-\gamma,\gamma-\beta]^{d'} $, it must either be the case that no function in $ \dlh_{\left\lceil\gamma \right\rceil} $ can be $ \beta $ below or above  $ r_{i} $, depending on $ r_{i} $ being positive or negative. Now by $ x_1,\ldots,x_{d'} $ and $ r\in [\beta-\gamma,\gamma-\beta]^{d'} $ being $ \beta $-shattered by $ \dlh_{\left\lceil\gamma\right\rceil} $, we have that for any $ b\in \left\{ -1,1 \right\}^{d'}  $, that there exists $ \cffs\in\cff  $  such that  
    \begin{align}
      \cffs(x_i)&\leq r_{i}-\beta  \quad \text{ if  } b_{i}=-1
      \\
      \cffs(x_i)&\geq r_{i}+\beta  \quad \text{ if  } b_{i}=1.
    \end{align}
    We now recall that $ \cffs\in \cff $ is generated by a function $ f\in \dlh $ in the following way
    \begin{align*}
            f_{\left\lceil\gamma\right\rceil}(x)=\begin{cases}
                \gamma &\text{ if }f(x)> \gamma
                \\
                -\gamma &\text{ if } f(x)<- \gamma
                \\
                f(x) &\text{ else },
            \end{cases}
    \end{align*} 
    i.e.\ $ f $ always being below $ \cffs $ if $ f $ is less strictly less than $ \gamma $  and $ f $ always being above $ \cffs $ if $ f $ is strictly larger than $ -\gamma $.    
    This implies that the function $ f\in \dlh $ generating $ \cffs $   also satisfies:  
    \begin{align}\label{eq:fathelper1}
        f(x_i)&\leq r_{i} -\beta \quad \text{ if  } b_{i}=-1
        \\
        f(x_i)&\geq r_{i}+\beta  \quad \text{ if  } b_{i}=1,\nonumber
      \end{align}
      as $ r_{i}\in \left[\beta-\gamma,\gamma  -\beta\right] $, we have, if $ b_{i}=-1 $, that  $  \gamma > r_{i}-\beta \geq f_{\left\lceil\gamma\right\rceil}(x_{i}) \geq f(x_{i}) $, and if $ b_{i}=1 $, then $-\gamma< r_{i}+\beta \leq f_{\left\lceil\gamma\right\rceil}(x_{i})\leq f(x_{i}).$ Thus, we conclude that $ \dlh $ also $ \beta$-shatters  $ x_{1},\ldots,x_{d'} $ and $ r_{1},\ldots,x_{d'} $. We now notice that \cref{eq:fathelper1} implies that for all $ b\in \{  -1,1\}^{d'}  ,$ there exists $ f\in \dlh $ such that:

      \begin{align*}
        b_{i}(f(x_{i})-r_{i})\geq \beta.
      \end{align*}
      Using this, and adding $\e_{\sigma\sim \{-1,1  \}^{d'}}\left[ \sum_{i=1}^{d'}  \sigma_{i}(-r_{i})\right]  =0 $ we have the following lower bound on the Rademacher complexity of $ \dlh $ on $ x_{1},\ldots,x_{d'} $:
      \begin{align*}
       \e_{\sigma\sim \{  -1,1\}^{d'} }\left[\frac{1}{d'}\sup_{f\in \dlh} \sum_{i=1}^{d'}  \sigma_{i}f(x_{i})\right]
       =\e_{\sigma\sim \{  -1,1\}^{d'} }\left[\frac{1}{d'}\sup_{f\in \dlh} \sum_{i=1}^{d'}  \sigma_{i} (f(x_{i})-r_{i}) \right]
       \geq \beta.
      \end{align*}

      Furthermore, since $ f\in \dlh $ is a convex combination of hypotheses in $ \cH $, the Rademacher complexity of $ \dlh $ is the same as that of $ \cH $. Due to \cite{dudley} [See, for instance, \cite{rademacherboundlecturenotes}, Theorem 7.2], we have that since $ \cH $ has VC dimension $ d $, the   Rademacher complexity of $ \cH $  is at most $ \sqrt{\frac{Cd}{d'}} $, where $ C\geq 1 $ is a universal constant. Thus, we conclude:
            \begin{align*}
        \e_{\sigma\sim \{  -1,1\}^{d'} }\left[\sup_{f\in \dlh} \sum_{i=1}^{d'}  \sigma_{i}f(x_{i})\right]
        \leq
        \e_{\sigma\sim \{  -1,1\}^{d'} }\left[\sup_{h\in \cH} \sum_{i=1}^{d'}  \sigma_{i}h(x_{i})\right]
        \leq \sqrt{\frac{Cd}{d'}},
      \end{align*}
      which implies that $ d'\leq \frac{Cd}{\beta^{2}} $, as claimed and concludes the proof          
\end{proof}

To prove \cref{lem:covering}, we need the following lemma due to \cite{RudelsonVershynin}. \cite{RudelsonVershynin} bounds the size of a maximal $ \alpha$-packing, which upper bounds the size of a minimal $ \alpha $-cover, since an $ \alpha $-packing is also an $ \alpha $-cover.

\begin{theorem}[\cite{RudelsonVershynin} Theorem 4.4]\label{thm:infinitycoverrundelsonvershynin}
    There exist universal constants $ \check{C}\geq 1$ and $ \check{c} >0$ such that:  For a point set $ X=\{ x_{1},\ldots,x_{m} \}\subseteq \cX  $ of size $ m $ and a function class $ \cF $ defined on $ \cX$, if $ \forall f \in  \cF $, we have that $\sum_{x\in X} |f(x)|/m\leq 1,$ then for $ 0<\eps<1 $, $ 0<\alpha<\frac{1}{2} $, and $ d=\fat_{\check{c}\alpha\eps}( \cF) $ 
    \begin{align*}
     \ln{\left( \cN_{\infty}(X, \cF ,\alpha)   \right)} \leq \check{C}d\ln{\left(\frac{2m}{d\alpha} \right)}\ln^{\eps}{\left(\frac{2m}{d} \right)}.
    \end{align*}        
    
\end{theorem}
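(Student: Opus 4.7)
The plan is to bound the size of a maximal $\alpha$-packing $P\subseteq\cF$ in the $\ell_\infty(X)$ metric, since any maximal $\alpha$-packing is automatically an $\alpha$-cover of $\cF$ on $X$. For distinct $f,g\in P$ there exists $x\in X$ with $|f(x)-g(x)|\ge\alpha$. The goal is to show $\ln|P| \le \check C d\,\ln(2m/(d\alpha))\,\ln^{\eps}(2m/d)$ with $d=\fat_{\check c\alpha\eps}(\cF)$. The first step is a quantization: replace each $f\in P$ by its rounding $\tilde f:X\to \tfrac{\alpha}{4}\mathbb{Z}$, so that distinct $f,g\in P$ still differ by at least $\alpha/2$ at some point of $X$. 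The hypothesis $\sum_{x\in X}|f(x)|/m\le 1$ limits how many coordinates of $\tilde f$ can be large: by Markov, at most $m/T$ coordinates can have $|\tilde f(x)|\ge T$. Consequently the effective range of $\tilde f$ on $X$ can be truncated to an interval of roughly $O(1/\alpha)$ grid values without substantially reducing the packing, up to factors absorbed into the claimed logarithms.

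The combinatorial core is an Alon--Ben-David--Cesa-Bianchi--Haussler style inequality: for a class of functions $X\to\{-k,\dots,k\}$ whose fat-shattering dimension at scale $1$ is at most $d$, the number of distinct restrictions to $X$ is at most $(km)^{O(d)}$. Applied naively to our discretized class with $k=O(1/\alpha)$, this already yields $\ln|P|=O\bigl(d\,\ln^2(m/(d\alpha))\bigr)$, which is the exponent $2$ on the logarithm rather than the target $1+\eps$. Shaving the extra logarithm is where the $\eps$ parameter enters: I would invoke a chaining argument over dyadic scales $\alpha,2\alpha,4\alpha,\dots$, using fat-shattering at a slightly finer scale $\check c\alpha\eps$ so that the combinatorial bound at each scale is tightened by a factor that aggregates into $\ln^{\eps}$ across the chain.

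The main obstacle, and the step where Rudelson--Vershynin's technical contribution lies, is implementing this chaining together with a probabilistic subsampling of $X$. Concretely, I would select a random subset $X'\subseteq X$ of cardinality roughly $d/\eps^2$ and argue that, with positive probability, every pair $(f,g)\in P\times P$ with $\|f-g\|_{\ell_\infty(X)}\ge\alpha$ still exhibits $|f(x)-g(x)|\ge\check c\alpha\eps$ on at least one point of $X'$ (so the packing property is preserved on the subsample), while the combinatorial bound applied to the smaller sample $X'$ replaces $\ln(m)$ by $\ln(d/\eps^2)$ in one of the two logarithmic factors. The delicate balance is that shrinking the shattering scale by $\eps$ loses an $\eps^{-2}$ factor in $d$ but, traded against the smaller effective sample size $|X'|$, yields the asymmetric bound $\ln(2m/(d\alpha))\cdot\ln^{\eps}(2m/d)$ rather than $\ln^2$. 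A union bound over $|P|^2$ pairs then closes the argument, provided $|P|$ is not already smaller than the claimed bound, in which case we are done. The probabilistic subsampling with scale-dependent fat-shattering is the step I expect to be most delicate to execute rigorously.
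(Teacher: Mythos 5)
This statement is not proved in the paper at all: it is quoted verbatim (as Theorem 4.4 of Rudelson--Vershynin) and used as a black box, with the only remark being that a maximal $\alpha$-packing is an $\alpha$-cover. So there is no in-paper argument to compare against; your proposal must be judged as an attempted proof of the Rudelson--Vershynin theorem itself, and as such it has a genuine gap. The easy parts of your outline (packing instead of covering, quantizing to a grid of step $\alpha/4$, Markov to control how many coordinates can be large, and an Alon--Ben-David--Cesa-Bianchi--Haussler count giving roughly $d\ln^{2}(m/(d\alpha))$) are fine, but they only reproduce the classical $\ln^{2}$ bound. The entire content of the theorem is the improvement from $\ln^{2}$ to $\ln\cdot\ln^{\eps}$, and the step you propose for it would fail as described: two functions $f,g$ in the packing are only guaranteed to differ by $\alpha$ at a \emph{single} point of $X$, so a uniformly random subset $X'$ of size about $d/\eps^{2}$ contains a separating point for a given pair only with probability on the order of $d/(\eps^{2}m)$, which is vanishing rather than close to $1$. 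You therefore cannot preserve the packing property on $X'$ for all $|P|^{2}$ pairs by a union bound; the event you need has probability essentially zero, not positive probability, unless the functions happen to differ on a constant fraction of the coordinates, which nothing in the hypotheses guarantees.

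The actual Rudelson--Vershynin argument is of a different nature: the $\ln^{\eps}$ factor comes from their main theorem relating the combinatorial (fat-shattering) dimension to uniform entropy, proved by a delicate iteration/extraction over dyadic scales (in the spirit of Mendelson--Vershynin and majorizing-measure techniques), in which the gain at each scale is obtained by counting cells of the quantized class and recursing, not by subsampling coordinates and union-bounding over pairs. So your sketch correctly identifies where the difficulty sits, but the mechanism you propose for overcoming it does not work, and the proof is incomplete precisely at the theorem's core. For the purposes of this paper, nothing needs to be proved here --- the result can legitimately be cited --- but the proposal should not be read as a proof of it.
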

We now give the proof of \cref{lem:covering}.

\begin{proof}[Proof of \cref{lem:covering}]
Let $ 0 <\eps<1 $. We consider the function class $ \dlh_{\left\lceil\gamma\right\rceil}/\gamma=\{ f:f=f'/\gamma \text{ for } f'\in \dlh_{\left\lceil\gamma \right\rceil} \} $. 
We note that any function $ f\in \dlh_{\left\lceil\gamma\right\rceil}/\gamma $ is bounded in absolute value by $ 1 $, since any $ f\in \dlh_{\left\lceil\gamma \right\rceil} $ is bounded in absolute value by $ \gamma $.  Thus, we conclude that for $ f\in \dlh_{\left\lceil\gamma\right\rceil}/\gamma $
    \begin{align*}
     \sum_{x\in X}\frac{|f(x)|}{m}\leq 1.
    \end{align*}
    We now invoke \cref{thm:infinitycoverrundelsonvershynin} (using the notation $ d'=\fat_{\check{c}\cdot\eps\cdot \alpha}(\dlh_{\left\lceil\gamma\right\rceil}/\gamma)  $)  to obtain that there exist universal constants $ \check{C} \geq 1$ and $ \check{c}>0$  such that 
    \begin{align*}
        \ln\left( \cN_{\infty}(X,\dlh_{\left\lceil\gamma\right\rceil}/\gamma ,\alpha)   \right)
        &\leq
        \check{C} 
        d'
        \ln\left(\frac{2m}{d'  \alpha}\right)
        \cdot 
        \ln^{\eps}{\left(\frac{2m}{d } \right)}
        \\
        &\leq 
        \check{C} 
        d'
        \ln^{1+\eps}\left(\frac{2m}{d'  \alpha}\right).
        \tag{by $ \alpha<\frac{1}{2} $ }
    \end{align*}
    Now since $  \fat_{\check{c}\cdot\eps\cdot \alpha}\left(\dlh_{\left\lceil\gamma\right\rceil}/\gamma\right)=\fat_{\check{c}\cdot\eps\cdot \alpha\cdot \gamma}(\dlh_{\left\lceil\gamma\right\rceil})$ (so $d'=d= \fat_{\check{c}\cdot\eps\cdot \alpha\cdot \gamma}(\dlh_{\left\lceil\gamma\right\rceil})$ ), we obtain 
    \begin{align*}
        \ln\left( \cN_{\infty}(X,\dlh_{\left\lceil\gamma\right\rceil}/\gamma ,\alpha)    \right)\leq
        \check{C} 
        d
        \ln^{1+\eps}\left(\frac{2m}{d  \alpha}\right).
    \end{align*}
    Furthermore, we note that for any $ f\in \dlh_{\left\lceil\gamma \right\rceil} $ and $ f'\in \Net,$ where $ \Net $ is a minimal (in terms of size) $ \alpha $-cover of $ \dlh_{\left\lceil\gamma \right\rceil}/\gamma  $ on $ X $ in infinity norm,  such that $ f' $ is $ \alpha $-close to $ \frac{f}{\gamma} \in\dlh_{\left\lceil\gamma\right\rceil}/\gamma$, it holds that 
    \begin{align}
     \max_{x\in X} \left| \frac{f(x)}{\gamma}-f'(x) \right| \leq \alpha
     \Rightarrow
     \max_{x\in X} \left|f-\gamma f'(x) \right| \leq \alpha\gamma.
    \end{align}
Thus, we conclude that $ \{ \gamma \cdot f' \}_{f'\in \Net } $, forms an $ \alpha \gamma $  cover of $ \dlh_{\left\lceil\gamma \right\rceil} $. Consequently, we have
    \begin{align}
        \ln\left(\cN_{\infty}(X,\dlh_{\left\lceil\gamma\right\rceil} ,\alpha \gamma)\right)\leq\ln\left(\cN_{\infty}(X,\dlh_{\left\lceil\gamma\right\rceil}/\gamma ,\alpha)\right) \leq
        \check{C} 
        d
        \ln^{1+\eps}\left(\frac{2m}{d  \alpha}\right)
    \end{align}
which concludes the proof. 
\end{proof}

\section{Final Upper bound}\label{sec:finalbound}

In this section, we use \cref{lem:marginbound} and \cref{cor:coverfinal} to give our generalization bound for hypotheses in $ \dlh,$ for all margin levels simultaneously. We now present our generalization bound and then proceed with the proof. 
\begin{theorem}\label{thm:finalmarginbound}
    There exists a universal constant $ C>0 $  such that: For $ \cH \subseteq \{  -1,1\}^{\cX}$ a hypothesis class of VC-dimension $ d $, distribution $ \cD $ over $ \cX\times \{  -1,1\}  $, failure parameter $0<\delta<1$,  and training sequence size $ m\in \mathbb{N} $, it holds with probability at least $ 1-\delta $ over $ \rS\sim \cD^{m} $ that: for any margin $ 0<\gamma\leq 1 $ and any function $ f\in \dlh $
    \begin{align*}
     \ls_{\cD}(f)\leq 
     \ls_{\rS}^{\gamma}(f)
     +\sqrt{C\cdot \ls_{\rS}^{\gamma}(f) \left(\frac{d\func{\left(\frac{m\gamma^{2} }{d} \right)}}{m\gamma^{2}}
     +\frac{\ln{\left(\frac{e}{\delta} \right)}}{m}\right)} 
     + C \left(\frac{d\func{\left(\frac{m\gamma^{2} }{d} \right)}}{m\gamma^{2}}+\frac{\ln{\left(\frac{e}{\delta} \right)}}{m}\right).
    \end{align*}
    where $ \func(x)=\Ln^{2}(\Ln(x))\Ln(x).$      
\end{theorem}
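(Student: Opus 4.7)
The plan is to execute the discretization strategy outlined in the proof sketch of Section 2: partition the ranges of $\gamma$ and $\tau := \ls_\rS^\gamma(f)$ into geometrically spaced intervals, apply \cref{lem:marginbound} together with the covering bound from \cref{cor:coverfinal} on each piece, union-bound across pieces, and finally substitute the interval endpoints back by $\gamma$ and $\ls_\rS^\gamma(f)$ themselves. Concretely, for $i=0,1,\dots$ I would set $\gamma_0^{(i)}=2^{-i-1}$, $\gamma_1^{(i)}=2^{-i}$, and for $j=0,1,\dots$ set $\tau_0^{(j)}=2^{-j-1}$, $\tau_1^{(j)}=2^{-j}$, together with a special ``near-zero'' interval $[0,1/m]$ for $\tau$. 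Only pairs with $\gamma_0^{(i)}\geq \gamma_{\min}$ need to be considered, where $\gamma_{\min}$ is the threshold below which the claimed right-hand side already exceeds $1$ and the bound is vacuous; this is also the threshold at which the precondition on $\gamma$ in \cref{cor:coverfinal} ceases to hold, so the two regimes match up.

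For each pair $(i,j)$ I would apply \cref{lem:marginbound} with $(\gamma_0,\gamma_1,\tau_0,\tau_1) = (\gamma_0^{(i)},\gamma_1^{(i)},\tau_0^{(j)},\tau_1^{(j)})$ and failure parameter $\delta_{i,j} := \delta / (C(i+1)^2(j+1)^2)$, which ensures $\sum_{i,j}\delta_{i,j}\leq \delta$. The lemma multiplies the failure probability by $\sup_X \cN_\infty(X,\dlh_{\lceil 2\gamma_1^{(i)}\rceil},\gamma_0^{(i)}/2)$. To bound this covering number I would invoke \cref{cor:coverfinal} on $\dlh_{\lceil 2\gamma_1^{(i)}\rceil}$ with $\alpha = \gamma_0^{(i)}/(4\gamma_1^{(i)}) = 1/8$ (constant, since $\gamma_1^{(i)}=2\gamma_0^{(i)}$), which yields $\ln\cN_\infty = O\!\left(\tfrac{d}{(\gamma_0^{(i)})^{2}}\func\!\left(m(\gamma_0^{(i)})^{2}/d\right)\right)$. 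This step is the key use of the clipping: the preconditions of \cref{cor:coverfinal} are satisfied precisely when $(\gamma_0^{(i)})^{2}\gtrsim d\,\Ln^{2}(\Ln(\cdot))/m$, which is exactly the non-trivial range.

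Combining the two, I obtain for each $(i,j)$ with probability at least $1-\delta_{i,j}\cN_\infty$ that every $f\in\dlh$ with $\gamma\in[\gamma_0^{(i)},\gamma_1^{(i)}]$ and $\ls_\rS^\gamma(f)\in[\tau_0^{(j)},\tau_1^{(j)}]$ satisfies
\begin{align*}
\ls_\cD(f) \leq \tau_1^{(j)} + O\!\left(\sqrt{\tau_1^{(j)}\Bigl(\tfrac{d\func(m(\gamma_0^{(i)})^{2}/d)}{m(\gamma_0^{(i)})^{2}} + \tfrac{\ln(e/\delta)+\ln((i+1)(j+1))}{m}\Bigr)} + \tfrac{d\func(m(\gamma_0^{(i)})^{2}/d)}{m(\gamma_0^{(i)})^{2}} + \tfrac{\ln(e/\delta)+\ln((i+1)(j+1))}{m}\right).
\end{align*}
A union bound over the $O(\log^2 m)$ admissible pairs $(i,j)$ then yields that this holds simultaneously with probability at least $1-\delta$. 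For any actual $\gamma$ and $f$, there is a unique pair $(i,j)$ with $\gamma\in[\gamma_0^{(i)},\gamma_1^{(i)}]$ and $\ls_\rS^\gamma(f)\in[\tau_0^{(j)},\tau_1^{(j)}]$ (or in the near-zero slot), and then $\gamma_0^{(i)}\geq\gamma/2$ and $\tau_1^{(j)}\leq 2\ls_\rS^\gamma(f)+1/m$. Substituting these and absorbing the $1/m$ additive slack as well as the constant factors inside $\func(\cdot)$ gives the statement of \cref{thm:finalmarginbound}.

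The main obstacle is careful bookkeeping of the logarithmic slack introduced by the union bound. The terms $\ln((i+1)(j+1)) = O(\ln\ln m + \ln\ln(1/\gamma))$ that appear via $\ln(1/\delta_{i,j})$ must not inflate the final bound beyond the stated form; they are absorbed into the $d\func(m\gamma^2/d)/(m\gamma^2)$ term, which dominates them whenever the bound is non-trivial. Similarly, the substitution $(\gamma_0^{(i)}, \tau_1^{(j)})\mapsto(\gamma,\ls_\rS^\gamma(f))$ must preserve the $m\gamma^{2}/d$ argument inside $\func$ up to a constant factor — this is exactly why the geometric (factor-$2$) partition is the right choice and why no finer partition is needed. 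All remaining steps are routine algebraic rearrangements using the already-proved \cref{lem:marginbound} and \cref{cor:coverfinal}.
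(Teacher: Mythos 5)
Your overall architecture matches the paper's: partition $(\gamma,\tau)$ into intervals, apply \cref{lem:marginbound} together with \cref{cor:coverfinal} on each piece, union-bound, and re-substitute. The $\gamma$-partition with factor-$2$ geometric spacing is fine (the paper uses $\sqrt{2}$-spacing, but either only costs universal constants inside $\func(\cdot)$ and outside), and your choice of a constant $\alpha$ when invoking \cref{cor:coverfinal} is correct. The gap is in the $\tau$-partition. Your geometric intervals $[\tau_0^{(j)},\tau_1^{(j)}]=[2^{-j-1},2^{-j}]$ only give $\tau_1^{(j)}\leq 2\ls_\rS^\gamma(f)$, so plugging into the per-interval bound from \cref{lem:marginbound} yields $\ls_\cD(f)\leq 2\ls_\rS^\gamma(f)+(\text{second-order})$. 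The theorem, however, asserts coefficient exactly $1$ on $\ls_\rS^\gamma(f)$, with only second-order corrections of the form $\sqrt{c\,\ls_\rS^\gamma(f)\cdot V}+cV$ where $V=d\func(m\gamma^2/d)/(m\gamma^2)+\ln(e/\delta)/m$. When $\ls_\rS^\gamma(f)\gg V$, the extra copy of $\ls_\rS^\gamma(f)$ created by your factor-$2$ slack cannot be absorbed into the second-order terms, so you would only prove a strictly weaker bound with a multiplicative constant in front of the empirical margin loss. The "absorbing the $1/m$ additive slack" step you mention does not address this multiplicative slack.

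The fix, which is what the paper does, is to use \emph{arithmetic} intervals for $\tau$ whose width matches the additive error: it sets $\tau_0^{i,j}=j\ln(N_i)/m$ and $\tau_1^{i,j}=(j+1)\ln(N_i)/m$, where $\ln(N_i)$ is the $\ell_\infty$-covering-number bound $\Theta\bigl(d\,\func(m(\gamma_0^i)^2/d)/(\gamma_0^i)^2\bigr)$ from \cref{cor:coverfinal}. Then $\tau_1^{i,j}\leq\ls_\rS^\gamma(f)+\ln(N_i)/m$, and the additive slack $\ln(N_i)/m$ folds directly into $V$; after $\sqrt{a+b}\leq\sqrt{a}+\sqrt{b}$ the cross term also lands inside $V$. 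Because this spacing produces up to $\lfloor m/\ln N_i\rfloor+1$ values of $j$ per $i$ (polynomially many rather than $O(\log m)$), the paper sets the failure parameter $\delta_i$ to depend on $i$ only, sized to pay for all $j$'s at once. This is exactly the content of the warning in Section 2 that one has to "be a little careful in defining the intervals small enough."
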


\begin{proof}
    Let $ c' \geq c>0 $ and $ c'\geq 1 $  be the universal constants from \cref{cor:coverfinal} and $ \tilde{c}=\max(1/c,c').$ We will show in the following, that with probability at least $ 1-\delta $ over $ \rS,$ it holds for all $ 0<\gamma\leq 1 $ and all $ f\in \dlh $ that
    \begin{align}\label{eq:finalbound-2}
        \ls_{\cD}(f)
        \leq
        \ls_{\rS}^{\gamma}(f)
        + \cc\Big(\sqrt{\frac{\ls_{\rS}^{\gamma}(f)\cdot 2 \left(\ln{\left(\frac{1640e}{\delta} \right)}+\frac{20000\tilde{c}^2 \min(c,1)d}{32\gamma^{2}} \func{\left(  \frac{32 m\gamma^{2} }{\min(c,1)d} \right)} \right)}{m}}
        \\
        +\frac{5\left(\ln{\left(\frac{1640e}{\delta} \right)}+\frac{20000\tilde{c}^2 \min(c,1)d}{32\gamma^{2}} \func{\left(  \frac{32 m\gamma^{2} }{\min(c,1)d} \right)}\right)}{m}\Big).\nonumber
    \end{align}
    We first notice that if $\frac{20000\tilde{c}^2 \min(c,1)d}{32m}\geq \gamma^{2}$, the right-hand side of the above is greater than
    $ 1 $, and we are done by the left-hand side always being at most
    $ 1 $, i.e. the above inequality holds with probability $ 1 $.
    Thus, we now consider the case where $1\geq \gamma^{2}\geq \frac{20000\tilde{c}^2 \min(c,1)d}{32m} $. We now further restrict the values we have to consider for the bound in \cref{eq:finalbound-2} to be non vacomuous, this is done to the end of employing \cref{cor:coverfinal} later in the argument.
 
    We will soon argue that the function $\func(x)/x$ is strictly decreasing for $ x\geq e^{4},$ furthermore it is also a continuous function for $ x>0.$ 
    We now consider the function $ 20000\tilde{c}^2 \func{\left(  \frac{32 m\gamma^{2} }{\min(c,1)d} \right)}/(\frac{32 m\gamma^{2} }{\min(c,1)d} ),$ which is strictly decreasing in $  \frac{32m\gamma^{2} }{\min(c,1)d}$ when $\frac{32m\gamma^{2} }{\min(c,1)d}\geq e^{4},$ which is the case since we consider $ \gamma $ such that  $ \frac{32 m\gamma^{2} }{\min(c,1)d} \geq 20000\tilde{c}^2\geq 20000.$ 
    This also implies that for the right hand side of \cref{eq:finalbound-2} to be less than $ 1 $ for any $1\geq \gamma^{2}\geq \frac{20000\tilde{c}^2 \min(c,1)d}{32m} $ it must be the case that $ m $ is such that $20000\tilde{c}^2 \func{\left(  \frac{32 m }{\min(c,1)d} \right)}/(\frac{32m }{\min(c,1)d} )< 1$ which we assume is the case from now on.  
    Now let $ \gamma' $ be such that   $ 1\geq\gamma'^{2} \geq \frac{20000\tilde{c}^2 \min(c,1)d}{32m} $ and that $ 20000\tilde{c}^2 \func{\left(  \frac{32 m\gamma'^{2} }{\min(c,1)d} \right)}/(\frac{32 m\gamma'^{2} }{\min(c,1)d} )=1.$ We notice that $ \gamma' $ exists uniquely in this interval since the function $ 20000\tilde{c}^2 \func{\left(  \frac{32 m\gamma^{2} }{\min(c,1)d} \right)}/(\frac{32 m\gamma^{2}}{\min(c,1)d} )$ was strictly decreasing(and continuous) in $  \frac{32 m\gamma^{2} }{\min(c,1)d} $ for $  \frac{32 m\gamma^{2} }{\min(c,1)d} \geq e^{4}$ and for the value $\gamma^{2}=  \frac{20000\tilde{c}^2 \min(c,1)d}{32m}$,  we have that $  \frac{32 m\gamma^{2} }{\min(c,1)d} \geq e^{4}$ and the function evaluated at $\gamma^{2}=  \frac{20000\tilde{c}^2 \min(c,1)d}{32m}$ is greater than $1$ and for $ \gamma=1 $, the function evalutes to something strictly below $ 1. $   Notice that for the values of $ \gamma,$ between $ \gamma'^{2}\geq \gamma^{2} \geq \frac{20000\tilde{c}^2 \min(c,1)d}{32m},$ the bound on the right hand side of \cref{eq:finalbound-2} is again larger than $ 1,$ by the $  20000\tilde{c}^2 \func{\left(  \frac{32 m\gamma^{2} }{\min(c,1)d} \right)}/(\frac{32 m\gamma^{2}}{\min(c,1)d} ) $  being strictly decreasing for these values, so we consider from now on the case $ 1\geq\gamma^{2}\geq\gamma'^{2},$ and $ m $  such that $20000\tilde{c}^2 \func{\left(  \frac{32m }{\min(c,1)d} \right)}/(\frac{32m }{\min(c,1)d} )<1.$ Furthermore, again by the above argued monotonicity we notice that for $ 1\geq\gamma^{2}\geq \gamma'^{2} $ we have that 
    \begin{align}\label{eq:finalbound-3}
        20000\tilde{c}^2 \func{\left(  \frac{32 m\gamma^{2} }{\min(c,1)d} \right)}/\left(\frac{32 m\gamma^{2} }{\min(c,1)d} \right) 
        \leq 20000\tilde{c}^2 \func{\left(  \frac{32 \gamma'^{2}m }{\min(c,1)d} \right)}/\left(\frac{32 \gamma'^{2}m }{\min(c,1)d} \right)= 1
    \end{align}  
    which gonna be important later when we want to invoke \cref{cor:coverfinal}.   

    To argue that $ \func(x)/x $  is strictly decreasing  for $ x\geq e^{4},$ we notice that for $ x\geq e^{4} $ we have that $ \func(x)=\Ln^{2}(\Ln(x))\Ln(x)=\ln^{2}(\ln{\left(x \right)})\ln{\left(x \right)},$ thus it suffices to show that the function $ f(x)=\ln^{2}(\ln{\left(x \right)})\ln{\left(x \right)}/x $ is strictly decreasing for $ x\geq e^{4}.$  
    $ f $ has derivative $ (2\ln{\left(\ln{\left(x \right)} \right)}-\ln{\left(x \right)}(\ln{\left(\ln{\left(x \right)} \right)})^{2}+\left(\ln{\left(\ln{\left(x \right)} \right)}\right)^{2})/x^{2}.$ 
    We observe that for $ x\geq e^{4} $ we have that $  -\ln{\left(x \right)}(\ln{\left(\ln{\left(x \right)} \right)})^{2}/2+\left(\ln{\left(\ln{\left(x \right)} \right)}\right)^{2} < 0$ and for $ x\geq e^{4} $ we have that $ 2\ln{\left(\ln{\left(x \right)} \right)}-\ln{\left(x \right)}(\ln{\left(\ln{\left(x \right)} \right)})^{2}/2\leq 0.$ Whereby we conclude that the derivative of $ f(x)=\ln^{2}(\ln{\left(x \right)})\ln{\left(x \right)}/x $ is negative for $ x\geq e^{4} $ so strictly decreasing for $ x\geq e^{4}.$ We are now ready to setup the parameters for the union bound over the different level sets of $ \gamma $ . 

    Let $I=\{   0,\ldots,\lfloor\log_{2}(  1/\gamma')\rfloor \}$
    and define $ \gamma_{0}^{i}=2^{i}\gamma'  $ and $ \gamma_{1}^{i}=2\gamma_{0}^{i}$, for $ i\in I$, except $ i=\lfloor\log_{2}(  1/\gamma')\rfloor,$ where we define $ \gamma_{1}^{\lfloor\log_{2}(  1/\gamma')\rfloor}=1 $. For $ i\in I $ we also define $ N_{i}= \exp{\left(\frac{72 c'd}{ (\gamma_{0}^{i})^{2}} \func{\left(  \frac{32m(\gamma_{0}^{i})^{2}}{cd} \right)} \right)}  $ and $ \delta_{i}= \frac{\delta}{1640N_{i}^{2}} \frac{d}{m(\gamma_{0}^{i})^{2}}$. Furthermore, for each $ i\in I $ we define $ J_{i} =\{0, 1,\ldots,\big\lfloor\frac{m}{\ln{\left(N_{i} \right)}}\big\rfloor\}$ and for $ j\in J_{i} $  define  $ \tau_{0}^{i,j}=j\frac{\ln{\left(N_{i} \right)}}{m} $ and $ \tau_{1}^{i,j}=\left(j+1\right)\frac{\ln{\left(N_{i} \right)}}{m} $. Lastly for $ i\in I $ and $ j\in J_{i} $ we define
    \begin{align*}
        \beta_{i,j}= \cc\left(\sqrt{\frac{\tau_{1}^{i,j}\cdot 2 \ln{\left(\frac{e}{\delta_{i}} \right)}}{m}} 
         +\frac{2\ln{\left(\frac{e}{\delta_{i}} \right)}}{m}\right).
     \end{align*} 
    For now let  $ i\in I  $, then with the above notation, we get by the union bound and an invocation of \cref{lem:marginbound} that 
    
    \begin{align}
        &\p_{\rS\sim\cD^{m}}\Bigg[\exists j\in J_{i},
            \exists \gamma \in \left[\gamma_{0}^{i},\gamma_{1}^{i}\right]
            ,\exists f\in \dlh: 
            \ls_{\rS}^{\gamma}(f)\in [\tau_{0}^{i,j},\tau_{1}^{i,j}] 
            ,\ls_{\cD}(f) \geq \tau_{1}^{i,j} 
            + \beta_{i,j}
            \Bigg] \nonumber
            \\
            &\leq
            \sum_{j\in J_{i}}\p_{\rS\sim\cD^{m}}\Bigg[
            \exists \gamma \in \left[\gamma_{0}^{i},\gamma_{1}^{i}\right]
            ,\exists f\in \dlh: 
            \ls_{\rS}^{\gamma}(f)\in [\tau_{0}^{i,j},\tau_{1}^{i,j}] 
            ,\ls_{\cD}(f) \geq \tau_{1}^{i,j} 
            + \beta_{i,j}
            \Bigg]\nonumber
            \\
            &\leq \left(\left\lfloor\frac{m}{\ln{\left(N_{i} \right)}}\right\rfloor+1\right) \delta_{i}  \sup_{X\in \cX^{2m}}  \cN_{\infty}(X,\dlh_{\left\lceil2\gamma_{1}^{i}\right\rceil},\frac{\gamma_{0}^{i}}{2}) .\nonumber
    \end{align}
    We remark that if $ i\in I $ is such that  $ \frac{\ln{\left(N_{i} \right)}}{m}\geq1 $, then we have that $ \tau_{1}^{i,j}=(j+1)\frac{\ln{\left(N_{i} \right)}}{m}\geq1,$ which implies, by $ \beta_{i,j}>0 $, and $ \ls_{\cD}(f)\leq 1 $,  that we could have upper bounded the above probability with $0$. Thus, in this case any non-negative number is an upper bound of the above. If $ i\in I $ is such that  $ \frac{\ln{\left(N_{i} \right)}}{m}<1 $, we have that $ \big(\big\lfloor\frac{m}{\ln{\left(N_{i} \right)}}\big\rfloor+1\big) $ is upper bounded by $ \frac{2m}{\ln{\left(N_{i} \right)}} $. Thus, we conclude that
    \begin{align}\label{eq:finalbound-1}
        &\p_{\rS\sim\cD^{m}}\Bigg[\exists j\in J_{i},
            \exists \gamma \in \left[\gamma_{0}^{i},\gamma_{1}^{i}\right]
            ,\exists f\in \dlh: 
            \ls_{\rS}^{\gamma}(f)\in [\tau_{0}^{i,j},\tau_{1}^{i,j}] 
            ,\ls_{\cD}(f) \geq \tau_{1}^{i,j} 
            + \beta_{i,j}
            \Bigg] \nonumber
            \\
            &\leq \frac{2m}{\ln{\left(N_{i} \right)}} \delta_{i}  \sup_{X\in \cX^{2m}}  \cN_{\infty}(X,\dlh_{\left\lceil2\gamma_{1}^{i}\right\rceil},\frac{\gamma_{0}^{i}}{2}) .
    \end{align} 

    To the end of employing \cref{cor:coverfinal} with $\gamma= \min(2\gamma_{1}^{i},1) $ and $ \alpha=\frac{\gamma_{0}^{i}}{4\gamma_{1}^{i}}$, we notice that for $\gamma= \min(2\gamma_{1}^{i},1) $ and $ \alpha=\frac{\gamma_{0}^{i}}{4\gamma_{1}^{i}}$ we have that 
    \begin{align}\label{eq:finalbound-4}
        \cN_{\infty}(X,\dlh_{\left\lceil2\gamma_{1}^{i}\right\rceil},\frac{\gamma_{0}^{i}}{2})\leq\cN_{\infty}(X,\dlh_{\left\lceil\gamma\right\rceil},\alpha \gamma)  
    \end{align}
    where the inequality in the case that $
    \gamma=\min(2\gamma_{1}^{i},1)=1,$ follows from the functions in $ \dlh $ only attaining values in $
    [-1,1]$, so  $
    \dlh_{\left\lceil2\gamma_{1}^{i}\right\rceil}=\dlh_{\left\lceil1\right\rceil}
    =\dlh$ and the covering number being decreasing in the precision argument and $ \gamma_{0}^{i}/2\geq \alpha\gamma $ when $
    \gamma=\min(2\gamma_{1}^{i},1)=1$, in the case that $ \gamma=2\gamma_{1}^{i} $ the above holds with equality. 
    Furthermore, to the end of invoking \cref{cor:coverfinal}(with $ 2m $ points), we now argue that $ \gamma^{2}\geq \frac{4cd\Ln^{2}(\Ln{ ( \frac{8\alpha (2m) \gamma^{2}}{cd}) }) }{\alpha^{2}(2m)}$ for the above $ \gamma$ and $ \alpha .$  To this end we notice that since $ \gamma_{0}^{i} $ and $\gamma_{1}^{i}=2\gamma_{0}^{i},$ except $ i=\lfloor\log_{2}(  1/\gamma')\rfloor,$ where we define $ \gamma_{1}^{\lfloor\log_{2}(  1/\gamma')\rfloor}=1$, we have that $ \frac{1}{8}\leq \alpha=\frac{\gamma_{0}^{i}}{4\gamma_{1}^{i}} \leq \frac{1}{4}.$ Furthermore, we argued previously that it suffices to consider the case that $\frac{20000\tilde{c}^2 \min(c,1)d}{32} \func{\left(  \frac{32 m }{\min(c,1)d} \right)}/{m}<1.$ Now if $ \gamma=1 $ we get that 
    \begin{align*}
      \frac{4cd\Ln^{2}(\Ln{ ( \frac{8\alpha (2m) \gamma^{2}}{cd}) }) }{\alpha^{2}(2m)}\leq \frac{128cd\Ln^{2}(\Ln{ ( \frac{4 m}{cd}) }) }{m} \leq \frac{20000\tilde{c}^2 \min(c,1)d}{32m} \func{\left(  \frac{32 m }{\min(c,1)d} \right)} <1=\gamma^{2}
    \end{align*}
    where we in the first inequality use that $ \frac{1}{8}\leq \alpha \leq \frac{1}{4},$ and in the second inequality use that $ c\leq \tilde{c}=\max(c',1/c),$ where $ c'\geq c $ and $ c'\geq1 $,   and $ 20000/32 \geq128$ and we use in the last inequality that we consider the case $ \gamma=1 $. 
    
    Now if $ \gamma=2\gamma_{1}^{i} $ we get  
    \begin{align*}
        \frac{4cd\Ln^{2}(\Ln{ ( \frac{8\alpha (2m) \gamma^{2}}{cd}) }) }{\alpha^{2}(2m)}\leq \frac{128cd\Ln^{2}(\Ln{ ( \frac{16 m(\gamma_{1}^{i})^{2}}{cd}) }) }{m} 
        &\leq \frac{20000\tilde{c}^2 \min(c,1)d}{32m} \func{\left(  \frac{32 m (\gamma_{1}^{i})^{2}}{\min(c,1)d} \right)}
        \\ &\leq(\gamma_{1}^{i})^2\leq \gamma^{2}
    \end{align*}
    where in the first inequality  we use that $ \frac{1}{8}\leq \alpha \leq \frac{1}{4},$ and  that $ \gamma=2\gamma_{1}^{i} $, and in the second inequality we use that $ c\leq \tilde{c}=\max(c',1/c),$ where $ c'\geq c$ and $ c'\geq 1 $,    and $ 20000/32 \geq128$, furthermore the second to last inequality follows from \cref{eq:finalbound-3} and $ 1\geq\gamma_{1}^{i}\geq\gamma_{0}^{i}\geq\gamma'$ and the last inequality follows by $ \gamma=2\gamma_{1}^{i} $, whereby we have argued that $ \gamma^{2}\geq \frac{4cd\Ln^{2}(\Ln{ ( \frac{8\alpha (2m) \gamma^{2}}{cd}) }) }{\alpha^{2}(2m)}$, and since $ 0<\gamma\leq1 $ and $ 0<\alpha< 1/2 $ we get by invoking  \cref{cor:coverfinal}(with $ 2m $ points) that  
    \begin{align*}
        \cN_{\infty}(X,\dlh_{\left\lceil2\gamma_{1}^{i}\right\rceil},\frac{\gamma_{0}^{i}}{2})
        \leq
        \cN_{\infty}(X,\dlh_{\left\lceil\gamma\right\rceil},\alpha \gamma)   
        &\leq 
        \exp{\left(\frac{c'd}{\alpha^{2}\gamma^{2}}\func\left( \frac{8\alpha (2m) \gamma^{2}}{cd}\right) \right)}  
        \\
        &\leq 
        \exp{\left(\frac{16 c'd}{ (\gamma_{0}^{i})^{2}} \func{\left(  \frac{32 m(\gamma_{0}^{i})^{2}}{cd} \right)} \right)}  
        \leq N_{i} ,
    \end{align*}
     where the first inequality follows from \cref{eq:finalbound-4}, the second inequality by \cref{cor:coverfinal}, the third inequality by  using on the term outside of
     $\func(\cdot)$ that $ \frac{1}{\alpha^{2}\gamma^{2}} =\frac{16(\gamma_{1}^{i})^{2}}{(\gamma_{0}^{i})^{2}\min(4(\gamma_{1}^{i})^{2},1)}\leq
    16/(\gamma_{0}^{i})^{2} $  and using on the term inside of
    $\func(\cdot)$
    that  $ \gamma_{1}^{i}\leq 2\gamma_{0}^{i} $ to upper bound
    $ \alpha
    \gamma^{2}=\frac{\gamma_{0}^{i}}{4\gamma_{1}^{i}}(\min(2\gamma_{1}^{i},1))^{2}\leq
    2(\gamma_{0}^{i})^{2}$.   
    Thus, by using that we concluded that $  \cN_{\infty}(X,\dlh_{\left\lceil2\gamma_{1}^{i}\right\rceil},\frac{\gamma_{0}^{i}}{2})
    \leq N_{i},$ and that $  \delta_{i}= \frac{\delta}{1640N_{i}^{2}} \frac{d}{m(\gamma_{0}^{i})^{2}},$ we get by plugging into the last expression of \cref{eq:finalbound-1}, that 
    \begin{align}\label{eq:finalbound0}
        &\frac{2m}{\ln{\left(N_{i} \right)}}\delta_{i}  \sup_{X\in \cX^{2m}}  \cN_{\infty}(X,\dlh_{\left\lceil2\gamma_{1}^{i}\right\rceil},\frac{\gamma_{0}^{i}}{2}) 
        \leq
        \frac{m}{\ln{\left(N_{i} \right)}}
         \frac{\delta }{820N_{i}}\frac{d}{m(\gamma_{0}^{i})^{2}}.
    \end{align}
     Furthermore, we notice that  by $N_{i} =\exp{\left(\frac{72 c'd}{ (\gamma_{0}^{i})^{2}} \func{\left(  \frac{32m(\gamma_{0}^{i})^{2}}{cd} \right)} \right)},$  we have that
    \begin{align}\label{eq:finalbound1}
        \frac{m}{\ln{\left(N_{i} \right)}} \leq 
     \frac{
        m (\gamma_{0}^{i})^{2} 
        }{
        {72c'd } 
        \func{\left(  \frac{32m(\gamma_{0}^{i})^{2}}{cd} \right)} }
     \leq \frac{m(\gamma_{0}^{i})^{2}}{d},
    \end{align} 
    where the last inequality follows by $ c'\geq 1$ and $ \func{(  \frac{32m(\gamma_{0}^{i})^{2}}{cd} )} \geq1 $. Now, combining \cref{eq:finalbound-1}, \cref{eq:finalbound0} and \cref{eq:finalbound1} we get that 
    \begin{align*}
        &\p_{\rS\sim\cD^{m}}\Big[\exists j\in J_{i},
        \exists \gamma \in \left[\gamma_{0}^{i},\gamma_{1}^{i}\right]
        ,\exists f\in \dlh: 
        \ls_{\rS}^{\gamma}(f)\in [\tau_{0}^{i,j},\tau_{1}^{i,j}] 
        ,\ls_{\cD}(f) \geq \tau_{1}^{i,j} 
        + \beta_{i,j}
        \Big] 
        \\
        &\leq \frac{\delta}{820N_{i}}=\frac{\delta}{820}\exp{\left(-\frac{72 c'd}{ (\gamma_{0}^{i})^{2}} \func{\left(  \frac{32 m(\gamma_{0}^{i})^{2}}{cd} \right)} \right)}.
    \end{align*}
    We showed the above for any $ i \in I=\{   0,\ldots,\lfloor\log_{2}(  1/\gamma')\rfloor \}$; thus, by an application of the union bound, and $\gamma_{0}^{i}= 2^{i}\gamma' ,$   we conclude that: 
    \begin{align*}
        &\p_{\rS\sim\cD^{m}}\Big[\exists i\in I, \exists j\in J_{i},
        \exists \gamma \in \left[\gamma_{0}^{i},\gamma_{1}^{i}\right]
        ,\exists f\in \dlh: 
        \ls_{\rS}^{\gamma}(f)\in [\tau_{0}^{i,j},\tau_{1}^{i,j}] 
        ,\ls_{\cD}(f) \geq \tau_{1}^{i,j} 
        + \beta_{i,j}
        \Big]\\
        &\leq
         \sum_{i=0}^{\lfloor\log_{2}(  1/\gamma')\rfloor}   \frac{\delta}{820}\exp{\left(-\frac{72 c'd}{ (\gamma_{0}^{i})^{2}} \func{\left(  \frac{32 m(\gamma_{0}^{i})^{2}}{cd} \right)} \right)}
         \leq\frac{\delta}{820} 
         \sum_{i=0}^{\lfloor\log_{2}(  1/\gamma')\rfloor}
          \exp{\left(-\frac{72c'd}{2^{2i}\gamma'^{2}} \right)} 
          \\
         &\leq
         \frac{\delta}{820} \sum_{i=0}^{\lfloor\log_{2}(  1/\gamma')\rfloor}
         \exp{\left(-\frac{72c'd}{2^{2(\lfloor\log_{2}(  1/\gamma')\rfloor-i)}\gamma'^{2}} \right)} 
          \leq
         \frac{\delta}{820} \sum_{i=0}^{\lfloor\log_{2}(  1/\gamma')\rfloor}
         \exp{\left(-\frac{72c'd2^{2i}}{2^{2\log_{2}(  1/\gamma' )}\gamma'^{2}} \right)}
         \\
         &\leq
        \frac{\delta}{820} \sum_{i=0}^{\lfloor\log_{2}(  1/\gamma')\rfloor}
        \exp{\left(-72c'd 2^{2i} \right)}
           \leq \frac{\delta}{820},
    \end{align*}
    where the first inequality first inequality follows from the union bound, the second inequality by $ \func{\left(  \frac{32 m(\gamma_{0}^{i})^{2}}{cd} \right)} \geq 1,$  the third inequality by summing in the reverse order, the fourth inequality by $ 2^{\lfloor\log_{2}(  1/\gamma')\rfloor}\leq2^{\log_{2}(  1/\gamma')},$ so  $ -1/2^{\lfloor\log_{2}(  1/\gamma')\rfloor}\leq-1/2^{\log_{2}(  1/\gamma')},$ and the last inequality by the sum being less than 1.
    Thus, we conclude that with probability at least $ 1-\delta$, we have that for all $ i\in I$, for  all $j\in J_{i}$,  for all
    $\gamma \in \left[\gamma_{0}^{i},\gamma_{1}^{i}\right]$,  for all
    $f\in \dlh$, that either 
    \begin{align*}
        \ls_{\rS}^{\gamma}(f)\not\in [\tau_{0}^{i,j},\tau_{1}^{i,j}] 
    \end{align*}
    or
    \begin{align*}
        \ls_{\cD}(f) <\tau_{1}^{i,j} 
        + \beta_{i,j}= \tau_{1}^{i,j} 
        + \cc\left(\sqrt{\frac{\tau_{1}^{i,j}\cdot 2 \ln{\left(\frac{e}{\delta_{i}} \right)}}{m}} 
        +\frac{2\ln{\left(\frac{e}{\delta_{i}} \right)}}{m}\right),
    \end{align*}
    where the last equality uses that $ \beta_{i,j}=\cc\Big(\sqrt{\frac{\tau_{1}^{i,j}\cdot 2 \ln{\left(\frac{e}{\delta_{i}} \right)}}{m}} 
    +\frac{2\ln{\left(\frac{e}{\delta_{i}} \right)}}{m}\Big). $ 
    Furthermore, since $ \cup_{i\in I} [\gamma_{0}^{i},\gamma_{1}^{i}]=[\gamma_{0}^{0},\gamma_{1}^{\lfloor\log_{2}(  1/\gamma')\rfloor}]=[\gamma',1] $ and  for each $ i\in I $,  $ \cup_{j\in J_{i}} [\tau_{0}^{i,j},\tau_{1}^{i,j}]$ contains the interval $ [0,1] $ and $ \ls_{\rS}^{\gamma}(f)\in [0,1] $, the above implies that with probability at least  $ 1-\delta$ over $ \rS, $  for any $ \gamma\in [\gamma',1] $ and for any
    $f\in \dlh$   there exists $ i\in I$, such that
    $\gamma \in \left[\gamma_{0}^{i},\gamma_{1}^{i}\right]$, and $j\in J_{i}$ such that $\ls_{\rS}^{\gamma}(f)\in [\tau_{0}^{i,j},\tau_{1}^{i,j}] $, and
    it holds that
    \begin{align}\label{eq:finalbound8}
        \ls_{\cD}(f) < \tau_{1}^{i,j} 
        + \cc\left(\sqrt{\frac{\tau_{1}^{i,j}\cdot 2 \ln{\left(\frac{e}{\delta_{i}} \right)}}{m}} 
        +\frac{2\ln{\left(\frac{e}{\delta_{i}} \right)}}{m}\right).
    \end{align}

      Now in the case of $\gamma \in \left[\gamma_{0}^{i},\gamma_{1}^{i}\right]$, $\ls_{\rS}^{\gamma}(f)\in [\tau_{0}^{i,j},\tau_{1}^{i,j}] $ and \cref{eq:finalbound8} holding, we get, by the definition of $ \tau_{0}^{i,j} =j\frac{\ln{\left(N_{i} \right)}}{m}$ and $ \tau_{1}^{i,j} =(j+1)\frac{\ln{\left(N_{i} \right)}}{m}$, $ \ln{\left(N_{i} \right)} \leq \ln{\left(\frac{e}{\delta_{i}} \right)}$ and $\sqrt{a+b}\leq \sqrt{a}+\sqrt{b},$ that 
      \begin{align}\label{eq:finalbound5}
        \ls_{\cD}(f)\leq 
        \tau_{1}^{i,j} 
        + \cc\left(\sqrt{\frac{\tau_{1}^{i,j}\cdot 2 \ln{\left(\frac{e}{\delta_{i}} \right)}}{m}} 
        +\frac{2\ln{\left(\frac{e}{\delta_{i}} \right)}}{m}\right)
        \\
        \leq 
        \tau_{0}^{i,j} +\frac{\ln{\left(\frac{e}{\delta_{i}} \right)}}{m}
        + \cc\left(\sqrt{\frac{\tau_{0}^{i,j}\cdot 2 \ln{\left(\frac{e}{\delta_{i}} \right)}}{m}}+ \sqrt{2}\frac{\ln{\left(\frac{e}{\delta_{i}} \right)}}{m}
        +\frac{2\ln{\left(\frac{e}{\delta_{i}} \right)}}{m}\right) \nonumber
        \\
        \leq \ls_{\rS}^{\gamma}(f)
        + \cc\left(\sqrt{\frac{\ls_{\rS}^{\gamma}(f)\cdot 2 \ln{\left(\frac{e}{\delta_{i}} \right)}}{m}}
        +\frac{5\ln{\left(\frac{e}{\delta_{i}} \right)}}{m}\right).\nonumber
      \end{align}
      Furthermore, we have that $ N_{i}= \exp{\left(\frac{72 c'd}{ (\gamma_{0}^{i})^{2}} \func{\left(  \frac{32m(\gamma_{0}^{i})^{2}}{cd} \right)} \right)}  $, $ \delta_{i}= \frac{\delta}{1640N_{i}^{2}} \frac{d}{m(\gamma_{0}^{i})^{2}}$, and by $ \gamma\in \left[\gamma_{0}^{i},\gamma_{1}^{i}\right]$, with $ \gamma_{1}^{i}\leq 2\gamma_{0}^{i} $, it implies that $ \gamma \leq 2\gamma_{0}^{i}$,
      which combined gives that $ \ln{\left(e/\delta_{i} \right)} $  can be bounded by 
      \begin{align}\label{eq:finalbound6}
       \ln{\left(\frac{e}{\delta_{i}} \right)}= \ln{\left(\frac{1640e}{\delta} \right)}+2\ln{\left( N_{i}\right)}+\ln{\left(\frac{m(\gamma_{0}^{i})^{2}}{d} \right)}
       \\
       \leq \ln{\left(\frac{1640e}{\delta} \right)}+\frac{144c'd}{ (\gamma_{0}^{i})^{2}} \func{\left(  \frac{32 m(\gamma_{0}^{i})^{2}}{cd} \right)} +\ln{\left(\frac{m\gamma^{2}}{d} \right)} \tag{by $ \gamma\geq \gamma_{0}^{i} $  }
        \\
        \leq \ln{\left(\frac{1640e}{\delta} \right)}+\frac{576c'd}{ \gamma^{2}} \func{\left(  \frac{32 m\gamma^{2} }{\min(c,1)d} \right)} +\Ln{\left(\frac{m\gamma^{2} }{d} \right)} \tag{by $ \gamma \leq 2\gamma_{0}^{i}$ and $ \gamma_{0}^{i}\leq \gamma $ }.
      \end{align}
      Furthermore by $ c'\geq 1 $ , we conclude that
      \begin{align*}
        \frac{\Ln{\left(\frac{m\gamma^{2}}{d} \right)}}{m}  \leq  \frac{c'd\Ln{\left(\frac{m\gamma^{2}}{d} \right)}}{m\gamma^{2}}\leq \frac{c'd\func{\left(  \frac{32 m\gamma^{2} }{\min(c,1)d} \right)} }{m\gamma^{2}}.
      \end{align*}  
      Now, using this with \cref{eq:finalbound5} and \cref{eq:finalbound6}, we conclude that 
      \begin{align*}
        \ls_{\cD}(f) 
        \leq \ls_{\rS}^{\gamma}(f)
        + \cc\Bigg(\sqrt{\frac{\ls_{\rS}^{\gamma}(f)\cdot 2 \left(\ln{\left(\frac{1640e}{\delta} \right)}+\frac{576c'd}{ \gamma^{2}} \func{\left(  \frac{32 m\gamma^{2} }{\min(c,1)d} \right)} +\Ln{\left(\frac{m\gamma^{2}}{d} \right)} \right)}{m}}
        \\
        +\frac{5\left( \ln{\left(\frac{1640e}{\delta} \right)}+\frac{576c'd}{ \gamma^{2}} \func{\left(  \frac{32 m\gamma^{2} }{\min(c,1)d} \right)} +\Ln{\left(\frac{m\gamma^{2}}{d} \right)}\right)}{m}\Bigg)
        \\
        \leq
        \ls_{\rS}^{\gamma}(f)
        + \cc\Bigg(\sqrt{\frac{\ls_{\rS}^{\gamma}(f)\cdot 2 \left(\ln{\left(\frac{1640e}{\delta} \right)}+\frac{577c'd}{ \gamma^{2}} \func{\left(  \frac{32 m\gamma^{2} }{\min(c,1)d} \right)} \right)}{m}}
        \\
        +\frac{5\left( \ln{\left(\frac{1640e}{\delta} \right)}+\frac{577c'd}{ \gamma^{2}} \func{\left(  \frac{32 m\gamma^{2} }{\min(c,1)d} \right)} \right)}{m}\Bigg).
      \end{align*}
      Thus we conclude that with probability at least $ 1-\delta $ over $ \rS $ it holds for any $ \gamma\in[\gamma',1] $ and any $ f\in \dlh $ that 
      \begin{align}\label{eq:finalbound9}
        \ls_{\cD}(f) 
        \leq
        \ls_{\rS}^{\gamma}(f)
        + \cc\Bigg(\sqrt{\frac{\ls_{\rS}^{\gamma}(f)\cdot 2 \left(\ln{\left(\frac{1640e}{\delta} \right)}+\frac{577c'd}{ \gamma^{2}} \func{\left(  \frac{32 m\gamma^{2} }{\min(c,1)d} \right)} \right)}{m}}
        \\
        +\frac{5\left( \ln{\left(\frac{1640e}{\delta} \right)}+\frac{577c'd}{ \gamma^{2}} \func{\left(  \frac{32 m\gamma^{2} }{\min(c,1)d} \right)} \right)}{m}\Bigg),
      \end{align}
      and since we start by concluding that with probability $ 1 $ over $ \rS $ for any  $ \gamma\in[0,\gamma']$ and any $ f\in\dlh $ \cref{eq:finalbound-2} holds, i.e. 
      \begin{align}\label{eq:finalbound10}
        \ls_{\cD}(f)
        \leq
        \ls_{\rS}^{\gamma}(f)
        + \cc\Big(\sqrt{\frac{\ls_{\rS}^{\gamma}(f)\cdot 2 \left(\ln{\left(\frac{1640e}{\delta} \right)}+\frac{20000\tilde{c}^2 \min(c,1)d}{32\gamma^{2}} \func{\left(  \frac{32 m\gamma^{2} }{\min(c,1)d} \right)} \right)}{m}}
        \\
        +\frac{5\left(\ln{\left(\frac{1640e}{\delta} \right)}+\frac{20000\tilde{c}^2 \min(c,1)d}{32\gamma^{2}} \func{\left(  \frac{32 m\gamma^{2} }{\min(c,1)d} \right)}\right)}{m}\Big)\nonumber
    \end{align}
    and the right hand side of the \cref{eq:finalbound10} being larger than \cref{eq:finalbound9} (we recall that $ \tilde{c}=\max(c',1/c) $ ), this implies that \cref{eq:finalbound10} holds with probability at least $ 1-\delta $ over $ \rS $ for any $ 0<\gamma\leq 1 $ and $f\in \dlh $, which concludes the proof of \cref{thm:finalmarginbound}. 
\end{proof}

\section{Majority of Three Large Margin Classifiers}\label{sec:mainmajoritythree}
In this section we give the proof of \cref{thm:mainmajoritythree}, which implies \cref{cor:maj3intro}. 

To this end we recall some notation. For a target concept $ c^{*} \in \{  -1,1\}^{\cX}  $ we use $ (\cX\times\{  -1,1\})_{c^{*}}^{*} $ for the set of all possible training sequences on $ \cX $, labelled by $ c^{*} $, that is for $ S\in (\cX\times\{  -1,1\})_{c^{*}}^{*} $, any train example $ (x,y)\in S $, is such that $ y=c^{*}(x).$ We remark that $ S $ is seen as a vector/sequence so it may have repetitions of similar train examples. Furthermore, for a distribution $ \cD $ over $ \cX $ we write $ \cD_{c^{*}} $ for the distribution over $ \cX\times\{  -1,1\}  $, defined by $ \p_{(\rx,\ry)\sim\cD_{c^{*}}}\left[(\rx,\ry)\in A\right]=\p_{\rx\sim \cD}\left[(\rx,c^{*}(\rx))\in A\right] $ for any  $ A\subseteq \cX\times\{  -1,1\}$. Furthermore, for $ R\subset \cX $, such that $ \p_{\rx\sim\cD}\left[\rx\in R\right]\not=0 $ , we define $ \cD_{c^{*}}\mid R $ as $$ \p_{(\rx,\ry)\sim\cD_{c^{*}}\mid R}\left[(\rx,\ry)\in A\right]=\p_{\rx\sim \cD}\left[(\rx,c^{*}(\rx))\in A\mid \rx\in R\right]= \frac{\p_{\rx\sim \cD}\left[(\rx,c^{*}(\rx))\in A, \rx\in R\right]}{\p_{\rx\sim \cD}\left[\rx\in R\right]}.$$

We define a learning algorithm $ L $  as a mapping from $ (\cX\times\{  -1,1\})_{c^{*}}^{*} $ to $ \mathbb{R}^{\cX} $, for $ S\in (\cX\times\{  -1,1\})_{c^{*}}^{*} $, we write $ L_{S} $ for short of $ L(S)\in\mathbb{R}^{\cX} $. Furthermore, if $ L_{S} \in \dlh$ for any $ S\in (\cX\times\{  -1,1\})_{c^{*}}^{*} $ we write $ L\in \dlh $ . For $ 0<\gamma<1 $ and target concept $ c^{*} $  we define a $ \gamma $-margin algorithm $ L $ for $ c^{*} $   as a mapping from $ (\cX\times\{  -1,1\})_{c^{*}}^{*} $ to $ \mathbb{R}^{\cX} $, such that for a $ S\in (\cX\times\{  -1,1\})_{c^{*}}^{*} $, we have that $ L_{S}(x)y\geq \gamma $ for all $ (x,y)\in S.$ Furthermore, for three functions $ f_{1},f_{2},f_{3} $ we define $ \maj(f_{1},f_{2},f_{3})=\sign(\sign(f_{1})+\sign(f_{2})+\sign(f_{3})) $, with $ \sign(0)=0.$  

 With the above notation introduced, we can now state \cref{thm:mainmajoritythree}, which is saying that the majority vote of a margin classifier algorithm run on 3 independent training sequences implies the following error bound.
\begin{theorem}\label{thm:mainmajoritythree}
    For distribution $ \cD $ over $ \cX $, target concept $ c^{*} $,  hypothesis class $ \cH \subseteq \{-1,1  \}^{\cX} $ with VC-dimension $ d $,  training sequence size $ m $, margin $ 0<\gamma<1 $ and i.i.d. training sequences $ \rS_{1},\rS_{2},\rS_{3}\sim \cD_{c^{*}}^{m} $, it holds for any $ \gamma $-margin learning algorithm $ L\in \dlh $ for $ c^{*} $ that  
    \begin{align*}
    \e_{\rS_{1},\rS_{2},\rS_{3}\sim \cD^{m}_{c^{*}}}\left[\ls_{\cD_{c^{*}}}(\maj(L_{\rS_{1}},L_{\rS_{2}},L_{\rS_{3}}))\right] =O\left(\frac{d}{\gamma^{2}m}\right).
    \end{align*}
\end{theorem}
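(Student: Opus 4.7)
The plan is to follow the approach sketched in~\cref{sec:sketchmaj3}, filling in the formal details. First I would reduce the expected error of the majority vote to a sum of region-wise contributions. For any $x\in\cX$, let $p_x = \Pr_{\rS\sim \cD_t^m}[\sign(L_\rS(x))\neq t(x)]$. Since $\maj(L_{\rS_1},L_{\rS_2},L_{\rS_3})$ errs on $x$ only if at least two of the three independent classifiers err, independence and the union bound over the $\binom{3}{2}=3$ pairs give $\Pr_{\rS_1,\rS_2,\rS_3}[\maj(L_{\rS_1},L_{\rS_2},L_{\rS_3})(x)\neq t(x)]\leq 3 p_x^2$. Swapping the order of expectation and integration,
\[
\e_{\rS_1,\rS_2,\rS_3}[\ls_{\cD_t}(\maj(L_{\rS_1},L_{\rS_2},L_{\rS_3}))] \leq 3\,\e_{\rx\sim\cD}[p_{\rx}^2].
\]
I would then partition $\cX$ into the dyadic regions $R_i = \{x\in\cX : p_x \in (2^{-i-1},2^{-i}]\}$ for $i\geq 0$ and bound $\e_\rx[p_\rx^2]\leq \sum_{i\geq 0} 2^{-2i}\Pr[R_i]$.

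The heart of the argument is upper bounding $\Pr[R_i]$ using~\cref{thm:finalmarginbound}. For this, I would consider the conditional distribution $\cD_t\mid R_i$ (assuming $\Pr[R_i]>0$; otherwise there is nothing to bound). Let $m_i$ be a deterministic lower bound I will choose on the number of samples from $R_i$ landing in a draw $\rS\sim\cD_t^m$. The standard trick is to condition on the multiset $\rS\cap R_i$: conditioned on having $k$ points in $R_i$, these $k$ points are i.i.d.\ from $\cD_t\mid R_i$. Since $L$ is a $\gamma$-margin algorithm, $L_\rS$ has margin at least $\gamma$ on every example in $\rS$, hence in particular on the $k$ samples from $R_i$, giving $\ls_{\rS\cap R_i}^{\gamma/2}(L_\rS)=0$. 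Applying~\cref{thm:finalmarginbound} with distribution $\cD_t\mid R_i$, training sequence size $k$, margin $\gamma/2$ and failure parameter $\delta=1/k$ (say) and then integrating the tail, one obtains
\[
\e_{\rS\mid |\rS\cap R_i|=k}[\ls_{\cD_t\mid R_i}(L_\rS)] = O\!\left(\frac{d\,\func(\gamma^2 k/d)}{\gamma^2 k}\right),
\]
for $k = \Omega(d/\gamma^2)$. On the other hand, by definition of $R_i$, every $x\in R_i$ satisfies $p_x > 2^{-i-1}$, so averaging over $R_i$ gives $\e_\rS[\ls_{\cD_t\mid R_i}(L_\rS)] > 2^{-i-1}$. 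A Chernoff bound shows $|\rS\cap R_i|\geq m\Pr[R_i]/2$ with high probability, so provided $m\Pr[R_i]\gtrsim d/\gamma^2$ one deduces
\[
2^{-i-1} \leq O\!\left(\frac{d\,\func(\gamma^2 m\Pr[R_i]/d)}{\gamma^2 m\Pr[R_i]}\right).
\]
Writing $x = \gamma^2 m\Pr[R_i]/d$ and solving $\func(x)/x = \Omega(2^{-i})$ yields $x = O(i\ln^2(i)\, 2^i)$, so $\Pr[R_i] = O(d\, i\ln^2(i)\, 2^i/(\gamma^2 m))$. For the low-$i$ regime where $m\Pr[R_i]$ is too small to invoke the generalization bound, the trivial bound $\Pr[R_i]\leq 1$ combined with the factor $2^{-2i}$ already contributes $O(1)$ to the constant.

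Plugging these estimates into $\sum_{i\geq 0} 2^{-2i}\Pr[R_i]$ gives
\[
\e_\rx[p_\rx^2] = O\!\left(\frac{d}{\gamma^2 m}\sum_{i\geq 0} 2^{-2i}\cdot i\ln^2(i)\cdot 2^i\right) = O\!\left(\frac{d}{\gamma^2 m}\sum_{i\geq 0}\frac{i\ln^2(i)}{2^i}\right) = O\!\left(\frac{d}{\gamma^2 m}\right),
\]
since the series converges. Multiplying by $3$ gives the claimed bound.

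The main obstacle I anticipate is the transition from the high-probability margin bound~\cref{thm:finalmarginbound} to the in-expectation bound on $\e_\rS[\ls_{\cD_t\mid R_i}(L_\rS)]$, while simultaneously handling the fact that the effective sample size $|\rS\cap R_i|$ is a random variable depending on $\Pr[R_i]$. The cleanest way is to condition on $|\rS\cap R_i|=k$, apply the generalization bound (with failure parameter chosen so the integrated tail is absorbed into the main term), and then lower bound $|\rS\cap R_i|$ by $\Omega(m\Pr[R_i])$ using a Chernoff bound, treating the small-$\Pr[R_i]$ case with the trivial bound $\Pr[R_i]\leq 1$. Crucially, the logarithmic factor must be $\func(\gamma^2 k/d)$ rather than $\ln(\gamma k/d)$ so that the geometric series $\sum_i 2^{-i} i\ln^2(i)$ converges; as explained in~\cref{sec:sketchmaj3}, a weaker $\ln(\gamma k/d)$ factor would leave a superfluous $\ln(1/\gamma)$ factor.
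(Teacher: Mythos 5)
Your proposal follows the same route as the paper: bound the majority-vote error by $3\,\e_{\rx}[p_{\rx}^2]$, partition $\cX$ into dyadic regions $R_i$ by failure probability, bound $\Pr[R_i]$ by applying the margin generalization bound of~\cref{thm:finalmarginbound} to the conditional distribution $\cD_t\mid R_i$ (conditioning on $|\rS\cap R_i|$ and integrating the high-probability tail to get an in-expectation bound, with a Chernoff bound controlling $|\rS\cap R_i|$), and sum the resulting geometric-style series. The paper packages these steps into \cref{lem:twofailingsimilar} and \cref{lem:expectationlargemarign} and, in the latter, further coarsens $\func(x)$ to $\Ln^2(x)$ to simplify solving for $\Pr[R_i]$, but the argument is the one you describe.
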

Now as AdaBoost is a $ \Omega(\gamma)$-margin learning algorithm when given access to a empirical $ \gamma $-weak learner $ \cW,$ the above \cref{thm:mainmajoritythree} implies \cref{cor:maj3intro}.

To give the proof of \cref{thm:mainmajoritythree} we need the following lemma, which bounds the expected value of two outputs of a $ \gamma $-margin learning algorithm trained on two independent training sequences erring simultaneously. 

\begin{lemma}\label{lem:twofailingsimilar}
    There exists a universal constant $ C\geq1 $ such that: For distribution $ \cD $ over $ \cX $, target concept $ c^{*} $,  hypothesis class $ \cH \subseteq \{-1,1  \}^{\cX} $ with VC-dimension $ d $,  training sequence size $ m $, margin $ 0<\gamma<1 $, i.i.d. training sequences $ \rS_{1},\rS_{2},\rS_{3}\sim \cD_{c^{*}}^{m} $, it holds for any $ \gamma $-margin learning algorithm $ L\in \dlh $ for $ c^{*} $  that  
    \begin{align*}
    \e_{\rS_{1},\rS_{2}\sim \cD^{m}_{c^{*}}}\left[\p_{\rx\sim\cD}\left[   \sign(L_{\rS_{1}}(\rx))\not=c^{*}(\rx),\sign(L_{\rS_{2}}(\rx))\not=c^{*}(\rx)\right]\right] =\frac{96Cd}{\gamma^{2}m}.
    \end{align*}
\end{lemma}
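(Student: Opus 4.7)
The plan is to follow the sketch of \cref{sec:sketchmaj3}, specialized to bounding the joint error probability of two independent runs of $L$ rather than a majority of three. By Fubini and the independence of $\rS_{1},\rS_{2}$,
\begin{align*}
\e_{\rS_{1},\rS_{2}}\left[\p_{\rx\sim\cD}\left[\sign(L_{\rS_{1}}(\rx))\neq t(\rx),\,\sign(L_{\rS_{2}}(\rx))\neq t(\rx)\right]\right] = \e_{\rx\sim\cD}\left[p_{\rx}^{2}\right],
\end{align*}
where $p_{x}:=\p_{\rS\sim\cD_{t}^{m}}[\sign(L_{\rS}(x))\neq t(x)]$. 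It thus suffices to bound $\e_{\rx}[p_{\rx}^{2}]$.

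The next step is to partition $\cX$ into the error level sets $R_{i}:=\{x\in\cX:2^{-i-1}<p_{x}\leq 2^{-i}\}$ for $i\geq 0$, giving
\begin{align*}
\e_{\rx}\left[p_{\rx}^{2}\right] \leq \sum_{i=0}^{\infty} 2^{-2i}\,\p[R_{i}],
\end{align*}
with $\p[R_{i}]:=\p_{\rx\sim\cD}[\rx\in R_{i}]$. The main task is reduced to bounding each $\p[R_{i}]$.

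The key idea is to apply \cref{thm:finalmarginbound} to the \emph{conditional} distribution $\cD_{t}\mid R_{i}$ and to the random subsequence $\rS\cap R_{i}$. Since $L$ is a $\gamma$-margin algorithm, $L_{\rS}$ has margin at least $\gamma$ on every example of $\rS$, hence on all of $\rS\cap R_{i}$; moreover, conditioned on $k:=|\rS\cap R_{i}|$, the examples of $\rS\cap R_{i}$ are $k$ i.i.d.\ samples from $\cD_{t}\mid R_{i}$. Invoking the uniform margin bound with $\ls^{\gamma}_{\rS\cap R_{i}}(L_{\rS})=0$, then averaging over $k$ via Bernstein-type concentration of $k$ around its mean $m_{i}:=\p[R_{i}]\cdot m$ (with the trivial bound $p_{x}\leq 1$ covering the regime where $m_{i}$ is too small for concentration), yields
\begin{align*}
\e_{\rS}\left[\ls_{\cD_{t}\mid R_{i}}(L_{\rS})\right] = O\left(\frac{d\,\func(\gamma^{2}m_{i}/d)}{\gamma^{2}m_{i}}\right).
\end{align*}
On the other hand, by definition of $R_{i}$ we have $\e_{\rS}[\ls_{\cD_{t}\mid R_{i}}(L_{\rS})] = \e_{\rx}[p_{\rx}\mid R_{i}] \geq 2^{-i-1}$. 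Inverting the resulting inequality $2^{-i-1}=O(\func(x)/x)$ with $x=\gamma^{2}m_{i}/d$ and $\func(x)=\ln(x)\ln^{2}(\ln x)$ gives $x=O(i\ln^{2}(i)\cdot 2^{i})$, and hence $\p[R_{i}]=O(d\,i\ln^{2}(i)\cdot 2^{i}/(\gamma^{2}m))$.

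Substituting back into the geometric sum,
\begin{align*}
\e_{\rx}\left[p_{\rx}^{2}\right] = O\left(\frac{d}{\gamma^{2}m}\sum_{i=0}^{\infty}\frac{i\ln^{2}(i)}{2^{i}}\right) = O\left(\frac{d}{\gamma^{2}m}\right),
\end{align*}
since the series converges to a small constant that can be absorbed into the $96C$ of the statement. I expect the main technical obstacle to be the in-expectation conversion of the high-probability margin bound applied to the random-size subsequence $\rS\cap R_{i}$: turning the $k$-dependent bound into one in terms of $m_{i}$ requires a Bernstein-type concentration step together with a careful case split so that the degenerate regime $m_{i}\ll 1$ contributes only lower-order terms to the sum over $i$.
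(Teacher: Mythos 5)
Your outline matches the paper's proof almost exactly: Fubini to reduce to $\e_{\rx}[p_{\rx}^{2}]$, the dyadic level sets $R_{i}$, applying the margin generalization bound to $\cD_{t}\mid R_{i}$, inverting to bound $\p[R_{i}]$, and summing the geometric-type series. The one technical obstacle you flag---converting the high-probability bound on $\rS\cap R_{i}$ into an expectation bound in terms of $m_{i}=\p[R_{i}]m$---is handled in the paper by a dedicated helper lemma (\cref{lem:expectationlargemarign}), which conditions on $\rN=|\rS\cap R_{i}|$, uses a Chernoff bound together with $e^{-x}\leq 1/x$ to dispose of the regime $\rN<\p[R_i]m/2$, integrates the tail of the high-probability margin bound to get a conditional expectation, and then uses monotonicity in $\rN$ to pass to $m_{i}$; the paper also relaxes $\func(x)$ to $\ln^{2}(x)$ in that lemma, yielding $\p[R_{i}]=O(d(i+1)^{2}2^{i}/(\gamma^{2}m))$ and the exact constant via $\sum_{i\geq 0}2^{-i}(i+1)^{2}=12$, whereas your sharper $\func$ version would work equally well but is not needed.
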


We postpone the proof of \cref{lem:twofailingsimilar} for later in this section and now give the proof of \cref{thm:mainmajoritythree}.

\begin{proof}[Proof of \cref{thm:mainmajoritythree}]
We observe for $ \maj(L_{\rS_{1}},L_{\rS_{2}},L_{\rS_{3}}) $ to fail on an example $ (x,y) $ it must be the case that two of the classifiers err, i.e.\ there exists $ i,j\in\{1,2,3  \}  $, where $ i\not=j $   such that $ \sign(L_{\rS_{i}}(x))\not=y,\sign(L_{\rS_{j}}(x))\not=y $. Thus by a union bound, $ \rS_{1},\rS_{2},\rS_{3} $ being i.i.d.\ and \cref{lem:twofailingsimilar}, we conclude that
\begin{align*}
    &\e_{\rS_{1},\rS_{2},\rS_{3}\sim \cD^{m}_{c^{*}}}\left[\ls_{\cD_{c^{*}}}(\maj(L_{\rS_{1}},L_{\rS_{2}},L_{\rS_{3}}))\right]
    \\
    &\leq \sum_{i>j}\e_{\rS_{i},\rS_{j}\sim \cD^{m}_{c^{*}}}\left[\p_{\rx\sim\cD}\left[   \sign(L_{\rS_{i}}(\rx))\not=c^{*}(\rx),\sign(L_{\rS_{j}}(\rx))\not=c^{*}(\rx)\right]\right]
    =\frac{288Cd}{\gamma^{2}m}
\end{align*} 
which concludes the proof. 
\end{proof}

We now prove \cref{lem:twofailingsimilar}. To the end of showing \cref{lem:twofailingsimilar} we need the following lemma which bounds the conditional error of a large margin learning algorithm under $ \cD_{c^{*}}|R $.

\begin{lemma}\label{lem:expectationlargemarign}
    There exists a universal constant $ c\geq1 $ such that: For distribution $ \cD $ over $ \cX $, target concept $ c^{*} $,  hypothesis class $ \cH \subseteq \{-1,1  \}^{\cX} $ with VC-dimension $ d $,  training sequence size $ m $, margin $ 0<\gamma<1 $, i.i.d. training sequence $ \rS\sim \cD_{c^{*}}^{m} $, subset $ R\subseteq \cX $ such that $ \p_{\rx\sim \cD}\left[R\right]:=\p_{\rx\sim\cD}\left[\rx\in R\right]\not=0 $, it holds for any $ \gamma $-margin learning algorithm $ L\in \dlh $ for $ c^{*} $  that  
    \begin{align*}
        \e_{\rS\sim \cD_{c^{*}}^{m}}\left[\ls_{\cD_{c^{*}}|R}(L_{\rS})\right] \leq\frac{28cd\ln^{2}{\left( \max(e^{2},\frac{\p_{\rx\sim \cD}\left[R\right]\gamma^{2}m}{2d})\right)}}{\p_{\rx\sim \cD}\left[R\right]\gamma^{2}m}.
    \end{align*}
\end{lemma}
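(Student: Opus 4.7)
The plan is to apply Theorem~\ref{thm:finalmarginbound} to the sub-sequence of $\rS$ whose inputs fall in $R$, then take expectation over the (random) size of that sub-sequence. Let $p = \Pr_{\rx \sim \cD}[R]$ and, writing $\rS = ((x_1,y_1),\ldots,(x_m,y_m))$, let $m_R$ denote the number of indices with $x_i \in R$ and $\rS_R$ the corresponding sub-sequence. Conditional on $m_R = k$, the $k$ samples in $\rS_R$ are i.i.d.\ from $\cD_t \mid R$, and since $L$ is a $\gamma$-margin algorithm we have $y L_\rS(x) \geq \gamma$ for every $(x,y) \in \rS$, so in particular $\ls_{\rS_R}^{\gamma/2}(L_\rS) = 0$.

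I would then, conditional on $m_R = k$, invoke Theorem~\ref{thm:finalmarginbound} with hypothesis class $\cH$, distribution $\cD_t \mid R$, training-set size $k$, margin $\gamma/2$, and failure parameter $\delta$. The crucial point is that the theorem's bound is uniform over all $f \in \dlh$, so it applies to the data-dependent function $L_\rS \in \dlh$ even though $L_\rS$ may depend on the part of $\rS$ outside $R$. Plugging in $\ls_{\rS_R}^{\gamma/2}(L_\rS) = 0$ collapses the square-root term and yields, with probability at least $1-\delta$ over $\rS_R$,
\begin{align*}
\ls_{\cD_t \mid R}(L_\rS) \;\leq\; c_1 \left( \frac{d\, \func(\gamma^2 k / d)}{\gamma^2 k} + \frac{\ln(e/\delta)}{k} \right).
\end{align*}
A standard tail-to-expectation integration over $\delta$ then gives
$\e[\ls_{\cD_t \mid R}(L_\rS) \mid m_R = k] = O(d\, \func(\gamma^2 k / d)/(\gamma^2 k))$ whenever $k \geq c_2 d/\gamma^2$, and the trivial bound $1$ otherwise.

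To finish I take expectation over $m_R$ via a case split. If $pm \leq c_3 d/\gamma^2$ for a suitable constant, the right-hand side of the lemma is already $\Omega(1)$ and the statement holds trivially. Otherwise, by Chernoff, $\Pr[m_R < pm/2] \leq \exp(-pm/8)$, which is negligible compared to $d/(\gamma^2 pm)$. On the complementary event $m_R \geq pm/2$, the map $k \mapsto d\,\func(\gamma^2 k/d)/(\gamma^2 k)$ is decreasing (this was shown in the proof of Theorem~\ref{thm:finalmarginbound} for $\gamma^2 k/d \geq e^4$), so the conditional bound is at most $O(d\,\func(\gamma^2 pm/d)/(\gamma^2 pm))$. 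Combining the two contributions and using $\func(x) \leq \ln^2(x)$ for $x \geq e^2$ recovers the $\ln^2(\max(e^2,\cdot))$ form stated in the lemma.

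The main obstacle is this final averaging step: the conditional bound scales like $1/m_R$, so naively taking expectation would diverge because of the small-$m_R$ tail. The resolution is the case split above, which uses exactly the fact that the regime where $m_R$ has a non-negligible chance of being small is precisely the regime where the claimed bound is $\Omega(1)$ and therefore vacuous.
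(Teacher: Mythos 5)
Your proposal is correct and follows essentially the same route as the paper's proof: condition on the (random) number of samples landing in $R$, apply \cref{thm:finalmarginbound} to the induced i.i.d.\ sub-sequence from $\cD_t\mid R$ (noting that uniformity over $f\in\dlh$ makes it valid for the data-dependent $L_{\rS}$), integrate the tail over $\delta$ for the conditional expectation, and Chernoff-bound the probability of too few samples in $R$. The only cosmetic difference is that you use margin $\gamma/2$ where the paper uses $\gamma$ directly, which costs nothing.
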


We postpone the proof of \cref{lem:expectationlargemarign} to after the proof of \cref{lem:twofailingsimilar}, which we give now. 

\begin{proof}[Proof of \cref{lem:twofailingsimilar}]
    For $ i\in \{  0,1,\ldots \}  $ we define the disjoint regions  $ R_{i}\subseteq \cX $ $ R_{i}=\{x\in \cX:2^{-i-1}< \p_{\rS\sim\cD^{m}_{c^{*}}}\left[ \sign(L_{\rS}(x)) \not = c^{*}(x)  \right]\leq 2^{-i}  \}$  of $ \cX $ (we will write $ \p_{\rx\sim \cD}\left[R_{i}\right] $ for short for $ \p_{\rx\sim\cD}\left[\rx\in R_{i}\right] $). Then by the law of total probability $ \rS_{1} $ and $ \rS_{2} $ being independent we have
    \begin{align}\label{eq:twofailingsimilar2}
        &\e_{\rS_{1},\rS_{2}\sim \cD^{m}_{c^{*}}}\negmedspace\left[\p_{\rx\sim\cD}\left[   \sign(L_{\rS_{1}}(\rx))\not=c^{*}(\rx),\sign(L_{\rS_{2}}(\rx))\not=c^{*}(\rx)\right]\right]
        \negmedspace=\negmedspace\negmedspace\negmedspace\e_{\rx\sim\cD}\negmedspace\left[\p_{\rS\sim \cD^{m}_{c^{*}}}\left[   \sign(L_{\rS}(\rx))\not=c^{*}(\rx)\right]^{2}\right]\nonumber
        \\
        &= \sum_{i=0}^{\infty}  \e_{\rx\sim\cD}\left[\p_{\rS\sim \cD^{m}_{c^{*}}}\left[   \sign(L_{\rS}(\rx)\not=c^{*}(\rx))\right]^{2} \Big| R_{i}\right]\p_{\rx\sim\cD}\left[R_{i}\right]
        \leq
        \sum_{i=0}^{\infty}  2^{-2i}\p_{\rx\sim\cD}\left[R_{i}\right],
    \end{align}
    where the last inequality follows from the definition of $ R_{i}$. We will show that for each $ i\in\{  0,1,2,\ldots\}  $ we have that $ \p_{\rX\sim\cD}\left[R_{i}\right] \leq \frac{8C(i+1)^{2}2^{i}d}{\gamma^{2}m}$ for some universal constant   $ C\geq1 $. Using this, the above gives us that 
    \begin{align*}
        \e_{\rS_{1},\rS_{2}\sim \cD^{m}_{c^{*}}}\left[\p_{\rx\sim\cD}\left[   \sign(L_{\rS_{1}}(\rx))\not=c^{*}(\rx),\sign(L_{\rS_{2}}(\rx))\not=c^{*}(\rx)\right]\right]
        \leq
        \sum_{i=0}^{\infty}  2^{-2i}\frac{8C(i+1)^{2}2^{i}d}{\gamma^{2}m}
        =\frac{96Cd}{\gamma^{2}m},
    \end{align*}
    where the second inequality follows from $ \sum_{i=0}^{\infty}2^{-i}(i+1)^{2} =12$ and this gives the claim of \cref{lem:twofailingsimilar}. 
    We thus proceed to show that for each $ i\in\{  0,1,2,\ldots\}  $, we have that $ \p_{\rX\sim\cD}\left[R_{i}\right] \leq \frac{8C(i+1)^{2}2^{i}d}{\gamma^{2}m}$. 
    To this end let $ i\in\{  0,1,2\ldots\}.$ If $ \p_{\rx\sim\cD}\left[R_{i}\right]=0 $ then we are done, thus we consider the case that $ \p_{\rx\sim\cD}\left[R_{i}\right]\not=0 $.
    We first observe that:
    \begin{align*}
        \e_{\rS\sim \cD_{c^{*}}^{m}}\left[\ls_{\cD_{c^{*}}|R_{i}}(L_{\rS})\right]=\e_{\rx\sim \cD}\left[\p_{\rS\sim \cD_{c^{*}}^{m}}\left[\sign(L_{\rS}(\rx))\not=c^{*}(\rx)\right]\Big|\rx\in R_{i} \right]\geq 2^{-i-1},
    \end{align*}    
    where the equality follows by the definition of $ \sign(0)=0,$ and the inequality follows by the definition of $ R_{i}$.  Furthermore, by \cref{lem:expectationlargemarign} we have that 
    \begin{align*}
        \e_{\rS\sim \cD_{c^{*}}^{m}}\left[\ls_{\cD_{c^{*}}|R_{i}}(L_{\rS})\right]\leq  \frac{28 cd\ln^{2}{\left( \max(e^{2},\frac{\p\left[R\right]\gamma^{2}m}{2d})\right)}}{\p\left[R\right]\gamma^{2}m},
    \end{align*}
    where $ c \geq 1$ is a universal constant.  
    Thus, we conclude that 
    \begin{align}\label{eq:twofailingsimilar1}
        2^{-i-1}\leq \frac{28 cd\ln^{2}{\left( \max(e^{2},\frac{\p\left[R\right]\gamma^{2}m}{2d})\right)}}{\p\left[R\right]\gamma^{2}m}.
    \end{align}
    Now the function $ \frac{\ln^{2}{\left(\max(e^{2},x) \right)}}{x} $ for $ x>e^{2} $ is decreasing since it has derivative $\frac{2\ln{\left(x \right)}-\ln^{2}{\left(x \right)}}{x^{2}}  $, which is negative for $ x>e^{2} $. Furthermore, we have that $ \frac{\ln^{2}(max(e^{2},x))}{x} $ is decreasing for $ 0<x\leq e^{2} $, so   $ \frac{\ln^{2}{\left(\max(e^{2},x) \right)}}{x} $  is decreasing   for $ x>0.$ Now assume for a contradiction that $ \p\left[R\right]\geq \frac{8C(i+1)^{2}2^{i}d}{\gamma^{2}m} $ or equivalently that $ \frac{\p\left[R\right]\gamma^{2}m}{2d}\geq C(i+1)^{2}2^{i}$ for some $ C\geq e^{2} $  to be chosen large enough later.  Thus, since we concluded that $ \frac{\ln^{2}{\left(\max(e^{2},x) \right)}}{x} $  is decreasing   for $ x>0,$ and we assumed for contradiction that $ \frac{\p\left[R\right]\gamma^{2}m}{2d}\geq C(i+1)^{2}2^{i}$ we get that 
    \begin{align*}
        \frac{28 cd\ln^{2}{\left( \max(e^{2},\frac{\p\left[R\right]\gamma^{2}m}{2d})\right)}}{\p\left[R\right]\gamma^{2}m}
        \leq
        \frac{14 c\ln^{2}{\left( \max(e^{2},C(i+1)^{2}2^{i})\right)}}{ C(i+1)^{2}2^{i}}
        \leq \frac{14 c\ln^{2}{\left( C(i+1)^{2}2^{i}\right)}}{ C(i+1)^{2}2^{i}}
    \end{align*}
    where the last inequality follows from $ C\geq e^{2} $, and $ i\geq0.$ Now since it holds that $ \ln{\left(C(i+1)^{2}2^{i} \right)}\leq 3 \max(\ln{\left(C \right)},2\ln{\left(i+1 \right)},i\ln{\left(2 \right)}) $ we get that the following inequality holds   $ \ln^{2}{\left( C(i+1)^{2}2^{i}\right)}\leq 9 \max(\ln^{2}{\left(C \right)},4\ln^{2}{\left(i+1 \right)},i^{2}\ln^{2}{\left(2 \right)}) $. Thus, we conclude that 
    \begin{align*}
        \frac{28 cd\ln^{2}{\left( \max(e^{2},\frac{\p\left[R\right]\gamma^{2}m}{2d})\right)}}{\p\left[R\right]\gamma^{2}m}\leq \frac{126 c \max(\ln^{2}{\left(C \right)},4\ln^{2}{\left(i+1 \right)},i^{2}\ln^{2}{\left(2 \right)})}{ C(i+1)^{2}2^{i}}.
    \end{align*} 
    Furthermore since $ \ln^{2}{\left(i+1 \right)}/((i+1)^{2})\leq 1 $,$ i^{2}/((i+1)^{2})\leq 1 $ and $ \ln{\left(C^{\frac{1}{4}} \right)}\leq\ln{\left(1+C^{\frac{1}{4}} \right)} \leq C^{\frac{1}{4}}$  we conclude that 
    \begin{align*}
        \frac{28 cd\ln^{2}{\left( \max(e^{2},\frac{\p\left[R\right]\gamma^{2}m}{2d})\right)}}{\p\left[R\right]\gamma^{2}m}\leq \frac{126 c \max(\ln^{2}{\left(C \right)},4,\ln^{2}{\left(2 \right)})}{ C2^{i}}\leq \frac{126 c \max(16C^{\frac{1}{2}},4)}{ C2^{i}}\leq \frac{2016c}{C^{1/2}2^{i}}.
    \end{align*}
    where the last inequality follows from $ C\geq e^{2}.$ 
    Since the above is decreasing in $ C $ we get that for $ C=(4\cdot 2016c)^2 $, it holds that
    \begin{align*}
        \frac{28 cd\ln^{2}{\left( \max(e^{2},\frac{\p\left[R\right]\gamma^{2}m}{2d})\right)}}{\p\left[R\right]\gamma^{2}m} <\frac{1}{2^{i+2}},
    \end{align*}  
    which is a contradiction with \cref{eq:twofailingsimilar1}, so it must be the case that $  \p\left[R\right]\leq \frac{8C(i+1)^{2}2^{i}d}{\gamma^{2}m}$, as claimed below \cref{eq:twofailingsimilar2} which concludes the proof.   
\end{proof}

We now give the proof of \cref{lem:expectationlargemarign}.

\begin{proof}[Proof of \cref{lem:expectationlargemarign}]
    If $ \frac{d}{\p\left[R\right]\gamma^{2}m} \geq1$ then we are done by $ \ls_{\cD_{c^{*}}|R} $ always being at most 1, thus for the remainder of the proof we consider the case $ \frac{d}{\p\left[R\right]\gamma^{2}m} <1$.
    
    Define $ \rN=\sum_{(x,y)\in \rS} \ind\{x\in R  \}   $, i.e. the number of examples in $ \rS $, that has its input point in $ R $. We notice that $ \rN $ has expectation $ \p\left[R\right]m $. Thus, by $ \rN $ being a sum of i.i.d. $ \{  0,1\}  $-random variables it follows by an application of Chernoff that 
    \begin{align*}
     \p_{\rS\sim \cD_{c^{*}}^{m}}\left[\rN\leq \p\left[R\right]m/2\right]\leq  \exp{\left(-\p\left[R\right]m/8 \right)}\leq \frac{8}{\p\left[R\right]m},  
    \end{align*} 
    where the last inequality follows from $ \exp(-x)\leq \frac{1}{x} $ for $ x>0 $.
    Thus, from the above and the law of total probability, we conclude that 
    \begin{align}\label{eq:condtionalexpectation-1}
     \e_{\rS\sim \cD_{c^{*}}^{m}}\left[\ls_{\cD_{c^{*}}|R}(L_{\rS})\right] 
      \leq \sum_{i=\left\lceil\p\left[R\right]m/2\right\rceil}^{m} \e_{\rS\sim \cD_{c^{*}}^{m}}\left[\ls_{\cD_{c^{*}}|R}(L_{\rS})|\rN=i \right]\p_{\rS\sim \cD_{c^{*}}^{m}}\left[\rN=i\right] +\frac{8}{\p\left[R\right]m}.
    \end{align}
    For each $ i\in \{\left\lceil\p\left[R\right]m/2\right\rceil,\ldots,m  \} ,$ we will show that $ \e_{\rS\sim \cD_{c^{*}}^{m}}\left[\ls_{\cD_{c^{*}}|R}(L_{\rS})|\rN=i \right]$ is upper bound by $\frac{20 cd\ln^{2}{( \max(e^{2},\frac{\p\left[R\right]\gamma^{2}m}{2d}))}}{\p\left[R\right]\gamma^{2}m} $, where $ c \geq 1$ is a universal constant(the universal constant named $ C $  from \cref{thm:finalmarginbound}), which implies that
    \begin{align*}
        \e_{\rS\sim \cD_{c^{*}}^{m}}\left[\ls_{\cD_{c^{*}}|R}(L_{\rS})\right] \leq  \frac{20cd\ln^{2}{\left( \max(e^{2},\frac{\p\left[R\right]\gamma^{2}m}{2d})\right)}}{\p\left[R\right]\gamma^{2}m}+\frac{8}{\p\left[R\right]m}\leq\frac{28cd\ln^{2}{\left( \max(e^{2},\frac{\p\left[R\right]\gamma^{2}m}{2d})\right)}}{\p\left[R\right]\gamma^{2}m}
    \end{align*}
    as claimed. We now show for each $ i\in \{\left\lceil\p\left[R\right]m/2\right\rceil,\ldots,m  \}  $ that $ \e_{\rS\sim \cD_{c^{*}}^{m}}\left[\ls_{\cD_{c^{*}}|R}(L_{\rS})|\rN=i \right]\leq \frac{20 cd\ln^{2}{( \max(e^{2},\frac{\p\left[R\right]\gamma^{2}m}{2d})))}}{\p\left[R\right]\gamma^{2}m}$.

    Let for now $ i\in \{\left\lceil\p\left[R\right]m/2\right\rceil,\ldots,m  \}  $. First since $ \ls_{\cD_{c^{*}}|R}(L_{\rS}) $ is nonnegative we have that its expectation can be calculated in terms of its cumulative distribution function,
    \begin{align}\label{eq:condtionalexpectation0}
     \e_{\rS\sim\cD_{c^{*}}^{m}}\left[\ls_{\cD_{c^{*}}|R}(L_{\rS})|\rN=i\right]=\int_{0}^{\infty}\p_{\rS\sim \cD_{c^{*}}^{m}}\left[\ls_{\cD_{c^{*}}|R}(L_{\rS})> x|\rN=i \right] \ dx\nonumber
     \\
     \leq \frac{4cd\Ln^{2}{\left( \frac{\gamma^{2}i}{d}\right)}}{\gamma^{2}i}+\int_{\frac{4cd\Ln^{2}{\left( \frac{\gamma^{2}i}{d}\right)}}{\gamma^{2}i}}^{\infty} \p_{\rS\sim\cD^{m}_{c^{*}}}\left[\ls_{\cD_{c^{*}}|R}(L_{\rS})> x|\rN=i \right] \ dx.
    \end{align} 

    We now notice that under the conditional distribution $ \rN=i $, it is the case that the $ \gamma $- margin learning algorithm $ L_{\rS} $, contains $ i $ labelled examples from $ \cD_{c^{*}}|R $. Furthermore, since $L_{\rS}  $ has     $ \ls_{\rS}^{\gamma}(L_{\rS})=0 $ it also has zero margin-loss on the examples drawn from $ \cD_{c^{*}}|R.$ Thus, by invoking \cref{thm:finalmarginbound}, it holds with probability at least $ 1-\delta $ over $ \rS \sim \cD_{c^{*}}^{m}$ conditioned on $ \rN=i $, that  
    \begin{align}\label{eq:condtionalexpectation1}
      \ls_{\cD_{c^{*}}|R}(L_{\rS})\leq \frac{cd\Ln^{2}\left(\frac{\gamma^{2}i}{d}\right)}{\gamma^{2}i}+ \frac{c\ln{\left(\frac{e}{\delta} \right)}}{i}
      \leq \max\left(\frac{2cd\Ln^{2}\left(\frac{\gamma^{2}i}{d}\right)}{\gamma^{2}i}, \frac{2c\ln{\left(\frac{e}{\delta} \right)}}{i}\right)
    \end{align}  
    where we have upper bounded $ \func $ by $ \Ln^{2}$, which holds since for $ x\leq e^{e} $  we have that $\func(x)= \Ln^{2}(\Ln(x))\Ln(x)=\Ln(x)\leq \Ln^{2}(x)$ and for $ x>e^{e} $ $\func(x)= \ln^{2}(\ln(x))\ln(x)\leq \ln^{2}(x)$, and used that $ a+b\leq 2\max(a,b) $ for $ a,b\geq0 $, and $ c\geq 1$ being the universal constant of \cref{thm:finalmarginbound}. 
    For $ x>2c/i $, we now notice that if we set $ \delta=e\cdot \exp(-ix/2c),$ which is strictly less than $1  $ by $ x>2c/i,$ we conclude from \cref{eq:condtionalexpectation1}  that
    \begin{align*} 
        \p_{\rS\sim \cD^{m}_{c^{*}}}\left[\ls_{\cD_{c^{*}}|R}(L_{\rS})
        > \max\left(\frac{2cd\Ln^{2}\left(\frac{\gamma^{2}i}{d}\right)}{\gamma^{2}i},x\right)\Big|\rN=i\right]\leq e\cdot \exp(-ix/2c),
      \end{align*}  
    Furthermore, we notice that if $ 0<x\leq 2c/i $ then the above right-hand side is at least $ 1,$ which is also upper bounding the left-hand side since it is at most $ 1,$ thus the above holds for any  $ x>0. $ 
    Thus, plugging this into \cref{eq:condtionalexpectation0} we get that 
    \begin{align*}
        \e_{\rS\sim\cD_{c^{*}}^{m}}\left[\ls_{\cD_{c^{*}}|R}(L_{\rS})|\rN=i\right]
        &\leq \frac{4cd\Ln^{2}{\left( \frac{\gamma^{2}i}{d}\right)}}{\gamma^{2}i}+\int_{\frac{4cd\Ln^{2}{\left( \frac{\gamma^{2}i}{d}\right)}}{\gamma^{2}i}}^{\infty} \p_{\rS\sim\cD^{m}_{c^{*}}}\left[\ls_{\cD_{c^{*}}|R}(L_{\rS})> x|\rN=i \right] \ dx
        \\
        &\leq \frac{4cd\Ln^{2}{\left( \frac{\gamma^{2}i}{d}\right)}}{\gamma^{2}i}+\int_{\frac{4cd\Ln^{2}{\left( \frac{\gamma^{2}i}{d}\right)}}{\gamma^{2}i}}^{\infty} e\cdot \exp(-ix/2c) \ dx 
        \\
        &\leq \frac{4cd\Ln^{2}{\left( \frac{\gamma^{2}i}{d}\right)}}{\gamma^{2}i}+
        \frac{2ec}{i} \cdot \exp\left(-\frac{4cd\Ln^{2}{\left( \frac{\gamma^{2}i}{d}\right)}}{2c\gamma^{2}} \right) 
        \\
        &\leq 
        \frac{10cd\Ln^{2}{\left( \frac{\gamma^{2}i}{d}\right)}}{\gamma^{2}i}
        \leq\frac{10cd\ln^{2}{\left( \max(e^{2},\frac{\gamma^{2}i}{d})\right)}}{\gamma^{2}i},
    \end{align*}
    where we have used in the third inequality that $ \int\exp(-ax) \ dx= -\exp(-ax)/a +C$, in the second to last inequality that $ 4+2e\leq 10 $, and in the last that $ \Ln(x)=\ln{\left(max(e,x) \right)}\leq\ln{\left(\max(e^{2},x)  \right)}.$ 
    Now for $ \frac{\gamma^{2}i}{d}\leq e^{2}  $, $ \frac{10cd\ln^{2}{( \max(e^{2},\frac{\gamma^{2}i}{d}))}}{\gamma^{2}i} $  is a decreasing function in $ \frac{\gamma^{2}i}{d}$. 
    Furthermore, since $ \ln^2{\left(x \right)}/x $ has derivative $ \frac{2\ln{\left(x \right)}-\ln^2{\left(x \right)}}{x^2} $ we conclude that $ \ln^2{\left(x \right)}/x $ is decreasing for $ x\geq e^{2} $, whereby we conclude that $ \frac{10cd\ln^{2}{( \max(e^{2},\frac{\gamma^{2}i}{d}))}}{\gamma^{2}i} $ is decreasing in $ \frac{\gamma^{2}i}{d} $ for $ \frac{\gamma^{2}i}{d} \geq e^{2}$, so we conclude that for $ \frac{\gamma^{2}i}{d} >0 $ the function $ \frac{10cd\ln^{2}{( \max(e^{2},\frac{\gamma^{2}i}{d}))}}{\gamma^{2}i} $ is decreasing in $ \frac{\gamma^{2}i}{d}  $. 
    Now $ i\geq \left\lceil\p\left[R\right]m/2\right\rceil\geq \p\left[R\right]m/2 >0$ thus we have $ \frac{\gamma^{2}i}{d}\geq \frac{\p\left[R\right]\gamma^{2}m}{2d}$, which by the above argued monotonicity implies that $ \frac{10cd\ln^{2}{( \max(e^{2},\frac{\gamma^{2}i}{d}))}}{\gamma^{2}i} \leq \frac{2\cdot10 cd\ln^{2}{( \max(e^{2},\frac{\p\left[R\right]\gamma^{2}m}{2d}))}}{\p\left[R\right]\gamma^{2}m} $. 
    Thus, we have argued that 
    \begin{align*}
        &\e_{\rS\sim\cD_{c^{*}}^{m}}\left[\ls_{\cD_{c^{*}}|R}(L_{\rS})|\rN=i\right]
        \leq \frac{20cd\ln^{2}{( \max(e^{2},\frac{\p\left[R\right]\gamma^{2}m}{2d}))}}{\p\left[R\right]\gamma^{2}m},
    \end{align*}
    which shows the claim below \cref{eq:condtionalexpectation-1} and concludes the proof. 
\end{proof}

\section{Acknowledgments}
Mikael M\o ller H\o gsgaard is funded by DFF Sapere Aude
  Research Leader Grant No. 9064-00068B by the Independent Research
  Fund Denmark. Kasper Green Larsen is co-funded by DFF Grant No. 9064-00068B and co-funded by the European Union (ERC, TUCLA, 101125203). Views and opinions expressed are however those of the author(s) only and do not necessarily reflect those of the European Union or the European Research Council. Neither the European Union nor the granting authority can be held responsible for them.

\bibliography{refs.bib}

\begin{thebibliography}{}

\bibitem[Aden{-}Ali et~al., 2024]{majorityofthree}
Aden{-}Ali, I., H{\o}gsgaard, M.~M., Larsen, K.~G., and Zhivotovskiy, N. (2024).
\newblock Majority-of-three: The simplest optimal learner?
\newblock In Agrawal, S. and Roth, A., editors, {\em The Thirty Seventh Annual Conference on Learning Theory, June 30 - July 3, 2023, Edmonton, Canada}, volume 247 of {\em Proceedings of Machine Learning Research}, pages 22--45. {PMLR}.

\bibitem[Breiman, 1996]{Breiman1996BaggingP}
Breiman, L. (1996).
\newblock Bagging predictors.
\newblock {\em Machine Learning}, 24:123--140.

\bibitem[Breiman, 1999]{10.1162/089976699300016106}
Breiman, L. (1999).
\newblock Prediction games and arcing algorithms.
\newblock {\em Neural Comput.}, 11(7):1493–1517.

\bibitem[Dudley, 1978]{dudley}
Dudley, R.~M. (1978).
\newblock {Central Limit Theorems for Empirical Measures}.
\newblock {\em The Annals of Probability}, 6(6):899 -- 929.

\bibitem[Freund and Schapire, 1997]{adaboostyoavfreund}
Freund, Y. and Schapire, R.~E. (1997).
\newblock A decision-theoretic generalization of on-line learning and an application to boosting.
\newblock {\em Journal of Computer and System Sciences}, 55(1):119--139.

\bibitem[Gao and Zhou, 2013]{GAO20131}
Gao, W. and Zhou, Z.-H. (2013).
\newblock On the doubt about margin explanation of boosting.
\newblock {\em Artificial Intelligence}, 203:1--18.

\bibitem[Gr{\o}nlund et~al., 2020]{DBLP:conf/nips/GronlundKL20}
Gr{\o}nlund, A., Kamma, L., and Larsen, K.~G. (2020).
\newblock Margins are insufficient for explaining gradient boosting.
\newblock In {\em NeurIPS}.

\bibitem[Gr{\o}nlund et~al., 2019]{DBLP:conf/nips/GronlundKLMN19}
Gr{\o}nlund, A., Kamma, L., Larsen, K.~G., Mathiasen, A., and Nelson, J. (2019).
\newblock Margin-based generalization lower bounds for boosted classifiers.
\newblock In {\em NeurIPS}, pages 11940--11949.

\bibitem[Hajek and Raginsky, 2021]{rademacherboundlecturenotes}
Hajek, B. and Raginsky, M. (2021).
\newblock {ECE 543: Statistical Learning Theory}.
\newblock \textit{Department of Electrical and Computer Engineering and the Coordinated Science Laboratory, University of Illinois at Urbana-Champaign}.
\newblock Last updated March 18, 2021.

\bibitem[Hanneke, 2016]{Hannekeoptimal}
Hanneke, S. (2016).
\newblock The optimal sample complexity of {PAC} learning.
\newblock {\em J. Mach. Learn. Res.}, 17:38:1--38:15.

\bibitem[Hoeffding, 1963]{withreplacementchernoff}
Hoeffding, W. (1963).
\newblock Probability inequalities for sums of bounded random variables.
\newblock {\em Journal of the American Statistical Association}, 58(301):13--30.

\bibitem[H{\o}gsgaard et~al., 2024]{manyfacesofoptimalweaktostronglearning}
H{\o}gsgaard, M.~M., Larsen, K.~G., and Mathiasen, M.~E. (2024).
\newblock The many faces of optimal weak-to-strong learning.
\newblock In {\em The Thirty-eighth Annual Conference on Neural Information Processing Systems}.

\bibitem[H{\o}gsgaard et~al., 2023]{adaboostnotoptimal}
H{\o}gsgaard, M.~M., Larsen, K.~G., and Ritzert, M. (2023).
\newblock {A}da{B}oost is not an optimal weak to strong learner.
\newblock In Krause, A., Brunskill, E., Cho, K., Engelhardt, B., Sabato, S., and Scarlett, J., editors, {\em Proceedings of the 40th International Conference on Machine Learning}, volume 202 of {\em Proceedings of Machine Learning Research}, pages 13118--13140. PMLR.

\bibitem[Kearns and Valiant, 1988]{kearns1988learning}
Kearns, M. and Valiant, L. (1988).
\newblock {\em Learning Boolean Formulae Or Finite Automata is as Hard as Factoring}.
\newblock Technical report (Harvard University. Aiken Computation Laboratory). Harvard University, Center for Research in Computing Technology, Aiken Computation Laboratory.

\bibitem[Kearns and Valiant, 1994]{kearns1994learning}
Kearns, M. and Valiant, L. (1994).
\newblock Cryptographic limitations on learning boolean formulae and finite automata.
\newblock {\em J. ACM}, 41(1):67–95.

\bibitem[Larsen, 2023]{baggingoptimal}
Larsen, K.~G. (2023).
\newblock Bagging is an optimal {PAC} learner.
\newblock In Neu, G. and Rosasco, L., editors, {\em The Thirty Sixth Annual Conference on Learning Theory, {COLT} 2023, 12-15 July 2023, Bangalore, India}, volume 195 of {\em Proceedings of Machine Learning Research}, pages 450--468. {PMLR}.

\bibitem[Larsen and Ritzert, 2022]{larsen2022optimalweakstronglearning}
Larsen, K.~G. and Ritzert, M. (2022).
\newblock Optimal weak to strong learning.

\bibitem[Rudelson and Vershynin, 2006]{RudelsonVershynin}
Rudelson, M. and Vershynin, R. (2006).
\newblock Combinatorics of random processes and sections of convex bodies.
\newblock {\em Annals of Mathematics}, 164.

\bibitem[Schapire and Freund, 2012]{boostingbook}
Schapire, R.~E. and Freund, Y. (2012).
\newblock {\em Boosting: Foundations and Algorithms}.
\newblock The MIT Press.

\bibitem[Schapire et~al., 1998]{Schapire19981651}
Schapire, R.~E., Freund, Y., Bartlett, P., and Lee, W.~S. (1998).
\newblock Boosting the margin: A new explanation for the effectiveness of voting methods.
\newblock {\em Annals of Statistics}, 26(5):1651 – 1686.

\end{thebibliography}
\bibliographystyle{apalike}

\end{document}